%%%%%%%% ICML 2024 EXAMPLE LATEX SUBMISSION FILE %%%%%%%%%%%%%%%%%

\documentclass{article}

% Recommended, but optional, packages for figures and better typesetting:
\usepackage{microtype}
\usepackage{graphicx}
\usepackage{subfigure}
\usepackage{booktabs} % for professional tables
\usepackage{multirow}

%%%%%%%%%%%%%%%%%%%%%%%%%%%%%%%%%%%%%%%%%%%%%%%%%%%%%%%%%%%%%%%%%%%%%%%%%%%%%%%%%%%%%%%%%%%%%%%%%%%%%%%%%%%%%%%%%%% Beginning of Wei's macro command %%%%%%%%%%%%%%%%%%%%%%%%%%%%%%%%%%%%%%%%%%%%%%%%%%%%%%%%%%%%%%%%%%%%%%%%%%%%%%%%%%%%%%%%%%%%%%%%%%%%%%%%%%%%%%%%%%%%%%%%%%%%%%%%%%%%%%%%%%

% \setlist{leftmargin=5mm}
% \usepackage[utf8]{inputenc} % allow utf-8 input
% \usepackage[T1]{fontenc}    % use 8-bit T1 fonts
\usepackage{comment}
\usepackage{bm}
\usepackage{url}            % simple URL typesetting
\usepackage{booktabs}       % professional-quality tables
\usepackage{amsfonts}       % blackboard math symbols
\usepackage{amsmath,amssymb}
\usepackage{empheq}
\usepackage{amsthm}
\usepackage{nicefrac}       % compact symbols for 1/2, etc.
\usepackage{microtype}      % microtypography
\usepackage{enumerate,natbib,mathrsfs}
\usepackage{tablefootnote}
\usepackage{wrapfig}
\usepackage{graphicx}
\usepackage{subfigure}
\usepackage{enumitem}
\usepackage{multirow}
\usepackage{extarrows}
\usepackage[OT1]{fontenc}
\usepackage[bookmarks=false]{hyperref}

\usepackage{amsmath}

\DeclareMathOperator*{\argmin}{arg\,min}
% \usepackage[dvipsnames]{xcolor}

% \definecolor{darkblue}{rgb}{0.0, 0.0, 0.55}
% \definecolor{darkblue}{rgb}{0.0, 0.0, 0.55}

\usepackage{hyperref}
% \hypersetup{colorlinks=true,citecolor=red,linkcolor=red}
% \usepackage[margin=1in]{geometry}

% \definecolor{red}{rgb}{1.0, 0.0, 0.0}
\hypersetup{
  pdffitwindow=true,
  pdfstartview={FitH},
  pdfnewwindow=true,
  colorlinks,
  linktocpage=true,
  linkcolor=red,
  urlcolor=red,
  citecolor=darkblue
}

\usepackage{hyperref}
\hypersetup{colorlinks=true,citecolor=darkblue,linkcolor=red}
\usepackage{lipsum}

\newcommand{\bxi}{{\boldsymbol \xi}}

\newcommand{\bu}{{\boldsymbol u}}
\newcommand{\bx}{{\boldsymbol x}}

\newcommand{\by}{{\boldsymbol y}}

\newcommand{\bw}{{\mathbf{w}}} % \mathbf{w}_t
\newcommand{\bbf}{{\boldsymbol f}}

\newcommand{\bLambda}{{\boldsymbol \Lambda}}
\newcommand{\bA}{{\mathrm{\mathbf{A}}}}
\newcommand{\bTheta}{{\mathrm{\mathbf{\Theta}}}}
\newcommand{\bB}{{\mathrm{\mathbf{B}}}}
\newcommand{\bC}{{\mathrm{\mathbf{C}}}}
\newcommand{\bD}{{\mathrm{\mathbf{D}}}}
\newcommand{\bM}{{\mathrm{\mathbf{M}}}}
\newcommand{\bH}{{\mathrm{\mathbf{H}}}}
\newcommand{\bL}{{\mathrm{\mathbf{L}}}}

\newcommand{\bI}{{\mathrm{\mathbf{I}}}}

\newcommand{\MX}{{\mathcal{X}}}

\newcommand{\bepsilon}{{\boldsymbol \epsilon}}
\newcommand{\bmu}{{\boldsymbol \mu}}

\newcommand{\bSigma}{{\boldsymbol \Sigma}}

\newcommand{\dd}{\mathcal{\dagger}}

%%%%%%%%%%%

%%%%%%%%%%%new comment

\newcommand{\ea}{\end{array}}
\newcommand{\ee}{\end{equation}}
\newcommand{\bea}{\begin{eqnarray}}
\newcommand{\eea}{\end{eqnarray}}
\newcommand{\beaa}{\begin{eqnarray*}}
\newcommand{\eeaa}{\end{eqnarray*}}

\def\E{\mathbb{E}}

%
%Lower case Greek letters
%

%
%Upper case Greek letters
%
%Upper case Greek letters
%

%
%Calligraphic Capitals
%

%
%Hollow

%\mathscr

%

\def\bD{{\bf D}}

\def\bG{{\bf G}}
\def\bx{{\bf x}}
\def\by{{\bf y}}
\def\bz{{\bf z}}

\def\qed{ \hfill\qedsymbol}

\newcommand{\basa}{\begin{assumption}}
\newcommand{\easa}{\end{assumption}}

\newcommand{\bas}{\begin{assum}}
\newcommand{\eas}{\end{assum}}

\def\dd{\mathrm{d}}

\def\limP2{\,\mathop{\buildrel \Pi_2\over\longrightarrow\,}}

\def\1{{\bf 1}}
\def\by{{\bf y}}

\def\:{\!:\!}

\newtheorem{assump}{Assumption}

\usepackage{algorithm}
\usepackage{algorithmic}

\newtheorem{theorem}{Theorem}
\newtheorem{corollary}{Corollary}
\newtheorem{lemma}{Lemma}
\newtheorem{proposition}{Proposition}
\newtheorem{assumption}{Assumption}

%%%%%%%%%%%%%%%%%%%%%%%%%%%%%%%%%%%%%%%%%%%%%%%%%%%%%%%%%%%%%%%%%%%%%%%%%%%%%%%%%%%%%%%%%%%%%%%%%%%%%%%%%%%%%%%%%%% End of Wei's macro command %%%%%%%%%%%%%%%%%%%%%%%%%%%%%%%%%%%%%%%%%%%%%%%%%%%%%%%%%%%%%%%%%%%%%%%%%%%%%%%%%%%%%%%%%%%%%%%%%%%%%%%%%%%%%%%%%%%%%%%%%%%%%%%%%%%%%%%%%%

% hyperref makes hyperlinks in the resulting PDF.
% If your build breaks (sometimes temporarily if a hyperlink spans a page)
% please comment out the following usepackage line and replace
% \usepackage{icml2024} with \usepackage[nohyperref]{icml2024} above.
\usepackage{hyperref}

% Attempt to make hyperref and algorithmic work together better:
% \newcommand{\theHalgorithm}{\arabic{algorithm}}

% Use the following line for the initial blind version submitted for review:
% \usepackage{ICML24/icml2024}

% If accepted, instead use the following line for the camera-ready submission:
\usepackage[accepted]{icml2024}

% For theorems and such
\usepackage{amsmath}
\usepackage{amssymb}
\usepackage{mathtools}
\usepackage{amsthm}

% if you use cleveref..
\usepackage[capitalize,noabbrev]{cleveref}

%%%%%%%%%%%%%%%%%%%%%%%%%%%%%%%%
% THEOREMS
%%%%%%%%%%%%%%%%%%%%%%%%%%%%%%%%
\theoremstyle{plain}
\theoremstyle{definition}

\theoremstyle{remark}
% \newtheorem{remark}[theorem]{Remark}

% Todonotes is useful during development; simply uncomment the next line
%    and comment out the line below the next line to turn off comments
%\usepackage[disable,textsize=tiny]{todonotes}
\usepackage[textsize=tiny]{todonotes}

\definecolor{darkblue}{rgb}{0.0, 0.0, 0.55}
\definecolor{dark2blue}{rgb}{0.0, 0.0, 0.4}
\definecolor{darkred}{rgb}{0.55, 0.0, 0.0}
\definecolor{darkgreen}{rgb}{0, 0.5, 0.0}
% The \icmltitle you define below is probably too long as a header.
% Therefore, a short form for the running title is supplied here:
\icmltitlerunning{Variational Schr\"odinger Diffusion Models}

\begin{document}

\twocolumn[
\icmltitle{Variational Schr\"odinger Diffusion Models}

% It is OKAY to include author information, even for blind
% submissions: the style file will automatically remove it for you
% unless you've provided the [accepted] option to the icml2024
% package.

% List of affiliations: The first argument should be a (short)
% identifier you will use later to specify author affiliations
% Academic affiliations should list Department, University, City, Region, Country
% Industry affiliations should list Company, City, Region, Country

% You can specify symbols, otherwise they are numbered in order.
% Ideally, you should not use this facility. Affiliations will be numbered
% in order of appearance and this is the preferred way.
\icmlsetsymbol{equal}{*}

\begin{icmlauthorlist}
\icmlauthor{Wei Deng}{equal,ms} % We three can be alphabetical author. Cheers
\icmlauthor{Weijian Luo}{equal,pku}
\icmlauthor{Yixin Tan}{equal,duke} \\
\icmlauthor{Marin Bilo\v s}{ms} 
\icmlauthor{Yu Chen}{ms}
\icmlauthor{Yuriy Nevmyvaka}{ms}
\icmlauthor{Ricky T. Q. Chen}{fair}
\end{icmlauthorlist}

% \icmlaffiliation{ms}{Machine Learning Research, Morgan Stanley, New York, NY}
% \icmlaffiliation{pku}{School of Mathematical Sciences, Peking University}
% \icmlaffiliation{duke}{Department of Mathematics, Duke University}
\icmlaffiliation{ms}{Machine Learning Research, Morgan Stanley, New York}
\icmlaffiliation{pku}{Peking University}
\icmlaffiliation{duke}{Duke University}
\icmlaffiliation{fair}{Meta AI (FAIR), New York}

\icmlcorrespondingauthor{Wei Deng}{weideng056@gmail.com}

% You may provide any keywords that you
% find helpful for describing your paper; these are used to populate
% the "keywords" metadata in the PDF but will not be shown in the document
\icmlkeywords{Machine Learning, ICML}

\vskip 0.3in
]

% this must go after the closing bracket ] following \twocolumn[ ...

% This command actually creates the footnote in the first column
% listing the affiliations and the copyright notice.
% The command takes one argument, which is text to display at the start of the footnote.
% The \icmlEqualContribution command is standard text for equal contribution.
% Remove it (just {}) if you do not need this facility.

%\printAffiliationsAndNotice{}  % leave blank if no need to mention equal contribution
\printAffiliationsAndNotice{\icmlEqualContribution} % otherwise use the standard text.

\begin{abstract}

Schr\"odinger bridge (SB) has emerged as the go-to method for optimizing transportation plans in diffusion models. However, SB requires estimating the intractable forward score functions, inevitably resulting in the \emph{costly} implicit training loss based on simulated trajectories. To improve the scalability while preserving efficient transportation plans, we leverage variational inference to linearize the forward score functions (variational scores) of SB and restore \emph{simulation-free} forward processes in training backward scores. We propose the variational Schr\"odinger diffusion model (VSDM), where the forward process is a multivariate diffusion and the variational scores are adaptively optimized for efficient transport. Theoretically, we use stochastic approximation to prove the convergence of the variational scores and show the convergence of the adaptively generated samples based on the optimal variational scores. Empirically, we test the algorithm in simulated examples and observe that VSDM is efficient in generations of \emph{anisotropic} shapes and yields \emph{straighter} sample trajectories compared to the single-variate diffusion. 
We also verify the scalability of the algorithm in real-world data and achieve competitive unconditional generation performance in CIFAR10 and conditional generation in time series modeling. Notably, VSDM no longer depends on warm-up initializations and has become tuning-friendly in training large-scale experiments.

\end{abstract}

% \vspace{-0.05in}
\section{Introduction}

Diffusion models have showcased remarkable proficiency across diverse domains, spanning large-scale generations of image, video, and audio, conditional text-to-image tasks, and adversarial defenses \citep{SGMS_beat_GAN, imagen_video, DiffWave, text_2_image, zhang2024enhancing}. The key to their scalability lies in the closed-form updates of the forward process, highlighting both statistical efficiency \citep{Koehler_Heckett_Risteski} and diminished dependence on dimensionality \citep{dim_free_doucet}. Nevertheless, diffusion models lack a distinct guarantee of optimal transport (OT) properties \citep{Lavenant_Santambrogio_22} and often necessitate costly evaluations to generate higher-fidelity content \citep{DDPM, Progressive_distillation, DPMsolver, xue2024sa, luo2023comprehensive}.

Alternatively, the Schrödinger bridge (SB) problem \citep{leonard_14, Chen16, Pavon_CPAM_21, Caluya21, DSB}, initially rooted in quantum mechanics \citep{leonard_14}, proposes optimizing a stochastic control objective through the use of forward-backward stochastic differential equations (FB-SDEs) \citep{forward_backward_SDE}. The alternating solver gives rise to the iterative proportional fitting (IPF) algorithm \citep{Kullback_68, IPF_95} in dynamic optimal transport \citep{Villani03, 
% OT_applied_math, 
Compute_OT}. Notably, the intractable forward score function plays a crucial role in providing theoretical guarantees in optimal transport \citep{provably_schrodinger_bridge, reflected_schrodinger_bridge}. However, it simultaneously sacrifices the simulation-free property and largely relies on warm-up checkpoints for conducting large-scale experiments \citep{DSB, forward_backward_SDE}. A natural follow-up question arises: 

\vspace{-0.03in}
\begin{center}
    {\it Can we train diffusion models with efficient transport?}
\end{center}
\vspace{-0.03in}

To this end, we introduce the variational Schr\"odinger diffusion model (VSDM). Employing variational inference \citep{blei_VI}, we perform a locally linear approximation of the forward score function, and denote it by the variational score. The resulting linear \emph{forward} stochastic differential equations (SDEs) naturally provide a \emph{closed-form update, significantly enhancing scalability}. Compared to the single-variate score-based generative model (SGM), VSDM is a multivariate diffusion \citep{multivariateDM}. Moreover, hyperparameters are adaptively optimized for \emph{more efficient transportation plans} within the Schr\"odinger bridge framework \citep{forward_backward_SDE}.

Theoretically, we leverage stochastic approximation \citep{RobbinsM1951} to demonstrate the convergence of the variational score to the optimal local estimators. Although the global transport optimality is compromised, the notable \textcolor{dark2blue}{\emph{simulation-free}} speed-ups in training the backward score render the algorithm particularly attractive for training various generation tasks from scratch. Additionally, the efficiency of simulation-based training for the linearized variational score significantly improves owing to computational advancements in convex optimization. We validate the strength of VSDM through simulations, achieving compelling performance on standard image generation tasks. Our contributions unfold in four key aspects: 
\vspace{-0.03in}
\begin{itemize}
    \item We introduce the variational Schrödinger diffusion model (VSDM), a multivariate diffusion with \emph{optimal variational scores} guided by optimal transport. Additionally, the training of backward scores is \emph{simulation-free} and becomes much more scalable.
    \vspace{-0.03in}
    \item We study the convergence of the variational score using stochastic approximation (SA) theory, which can be further generalized to a class of state space diffusion models for future developments.
    \vspace{-0.03in}
    \item VSDM is effective in generating data of \emph{anisotropic} shapes and motivates \emph{straighter} transportation paths via the optimized transport.
    \vspace{-0.03in}
    \item VSDM achieves competitive unconditional generation on CIFAR10 and conditional generation in time series modeling without reliance on warm-up initializations. 
\end{itemize}

% \vspace{-0.25in}

\section{Related Works}
\label{related_works_main}

\paragraph{Flow Matching and Beyond} \citet{flow_matching} utilized the McCann displacement interpolation \citep{McCann_97} to train simulation-free CNFs to encourage straight trajectories. Consequently, \citet{Multisample_flow_matching,CFM_Tong} proposed straightening by using minibatch optimal transport solutions. Similar ideas were achieved by \citet{Rectified, Rectified_group} to iteratively rectify the interpolation path. \citet{Albergo_stochastic_interpolants, Albergo_unified_framework} developed the stochastic interpolant approach to unify both flow and diffusion models. However, \emph{``straighter'' transport} maps may not imply \emph{optimal} transportation plans in general and the couplings are still not effectively optimized. %, prompting continued exploration in this area.

\paragraph{Dynamic Optimal Transport} \citet{neural_ode_kinetic, ot_flow} introduced additional regularization through optimal transport to enforce straighter trajectories in CNFs and reduce the computational cost. \citet{DSB, forward_backward_SDE, SBP_max_llk} studied the dynamic Schr\"odinger bridge with guarantees in entropic optimal transport (EOT) \citep{provably_schrodinger_bridge}; \citet{SB_matching, Peluchetti23, SB_matching_momentum} generalized bridge matching and flow matching based EOT and obtained smoother trajectories, however, \emph{scalability} remains a significant concern for Schrödinger-based diffusions.

\vspace{-0.05in}
\section{Preliminaries}

\subsection{Diffusion Models}
The score-based generative models (SGMs) \citep{DDPM, score_sde} first employ a forward process \eqref{SGM-SDE-f} to map data to an approximate Gaussian and subsequently reverse the process in Eq.\eqref{SGM-SDE-b} % Wei: should be 1a? %via a denoising step 
to recover the data distribution. 
\begin{subequations}\label{vanilla_diffusion_model}
\begin{align}
\dd \overrightarrow\bx_t&={{\bbf_t(\overrightarrow\bx_t) \dd t}+\sqrt{\beta_t} \dd \overrightarrow \bw_t}  \label{SGM-SDE-f}\\
\dd \overleftarrow\bx_t&=\footnotesize{\left[\bbf_t(\overleftarrow\bx_t)-\beta_t \nabla \log \rho_t\left(\overleftarrow\bx_t\right)\right] \dd t+\sqrt{\beta_t} \dd \overleftarrow{\bw}_t}, \label{SGM-SDE-b}
\end{align}
\end{subequations}
where $\overleftarrow\bx_t, \overrightarrow\bx_t\in \mathbb{R}^d$; $\overrightarrow\bx_0\sim \rho_{\text {data}}$ and $\overleftarrow\bx_T\sim \rho_{\text {prior}}$; $\bbf_t$ denotes the vector field and is often set to $\bm{0}$ (a.k.a. VE-SDE) or linear in $\bx$ (a.k.a. VP-SDE); $\beta_t>0$ is the time-varying scalar; $\overrightarrow\bw_t$ is a forward Brownian motion from $t\in[0, T]$ with $\rho_T\approx \rho_{\text{prior}}$; $\overleftarrow{\bw}_t$ is a backward Brownian motion from time $T$ to $0$. The marginal density $\rho_t$ of the forward process \eqref{SGM-SDE-f} is essential for generating the data but remains inaccessible in practice due to intractable normalizing constants. %\yt{Can we say something like 'we expect $\rho_T\approx \rho_{prior}$'? This will motivate the introduction of dynamic Schr\"odinger bridge.} % Wei: fixed.

\paragraph{Explicit Score Matching (ESM)} Instead, the conditional score function $\nabla \log \rho_{t|0}\left(\cdot\right)\equiv \nabla \log \rho_{t}\left(\cdot|\overrightarrow\bx_0\right)$ is estimated by minimizing a user-friendly ESM loss (weighted by $\lambda$) between the score estimator ${s}_t \equiv s_{\theta}(\cdot, t)$ and exact score \citep{score_sde} such that 
\begin{equation}\label{esm_loss}
    \mathbb{E}_t \big[\lambda_t\mathbb{E}_{\overrightarrow\bx_0} \mathbb{E}_{\overrightarrow\bx_t|\overrightarrow\bx_0}[\|s_{t}(\overrightarrow\bx_t) - \nabla \log \rho_{t|0}\left(\overrightarrow\bx_t\right)\|_2^2]\big].
\end{equation}
Notably, both VP- and VE-SDEs yield closed-form expressions for any $\overrightarrow\bx_t$ given $\overrightarrow\bx_0$ in the forward process \citep{score_sde}, which is instrumental for the scalability of diffusion models in real-world large-scale generation tasks.

\paragraph{Implicit Score Matching (ISM)} By integration by parts, ESM is equivalent to the ISM loss \citep{score_matching, Variational_score_matching, luo2024entropy} and the evidence lower bound (ELBO) follows
\begin{align*}
    &\log \rho_0\left(\bx_0\right)  \geq  
    \mathbb{E}_{\rho_{T|0}(\cdot)}\left[\log \rho_{T|0}\left(\bx_T\right)\right]\\
    &-\frac{1}{2} \int_0^T \mathbb{E}_{\rho_{t|0}(\cdot)}\left[\beta_t\left\|\mathbf{s}_t\right\|_2^2+2\nabla \cdot\left(\beta_t \mathbf{s}_t - \bbf_t \right)\right] \dd t. \notag
\end{align*}
ISM is naturally connected to \citet{song2020sliced}, %, grathwohl2018ffjord}, 
which supports flexible marginals and nonlinear forward processes but becomes significantly less scalable compared to ESM.

\subsection{Schr\"odinger Bridge}

The dynamic Schr\"odinger bridge aims to solve a full bridge 
\begin{equation}
    \inf_{\mathbb{P}\in\mathcal{D}(\rho_{\text{data}}, \rho_{\text{prior}})}\text{KL}(\mathbb{P}|\mathbb{Q}),\label{dynamic_SBP}
\end{equation}
where $\mathcal{D}(\rho_{\text{data}}, \rho_{\text{prior}})$ is the family of path measures with marginals $\rho_{\text{data}}$ and $\rho_{\text{prior}}$ at $t=0$ and $t=T$, respectively; $\mathbb{Q}$ is the prior process driven by $\dd \bx_t = \bbf_t(\bx_t) \dd t + \sqrt{2\beta_t\varepsilon}\dd \mathbf{\overrightarrow \bw}_t$. It also yields a stochastic control formulation \citep{Chen21, Pavon_CPAM_21, Caluya21}. 
\begin{align} %\label{eq:control formulation}
    &\inf_{\bu\in \mathcal{U}} \E\bigg\{\int_0^T \frac{1}{2}\|\bu_t(\overrightarrow\bx_t)\|^2_2 \dd t \bigg\} \notag\\
    \text{s.t.} &\ \footnotesize{\dd  \overrightarrow\bx_t=\left[\bbf_t(\overrightarrow\bx)+\sqrt{\beta_t}\bu_t(\overrightarrow\bx)\right]\dd t+\sqrt{2\beta_t\varepsilon} \dd  \mathbf{\overrightarrow \bw}_t} \label{control_diffusion}\\
    &\ \ \footnotesize{\overrightarrow\bx_0\sim  \rho_{\text{data}} ,\ \  \overrightarrow\bx_T\sim  \rho_{\text{prior}}} \notag,
\end{align}
where $\mathcal{U}$ is the family of controls. The expectation is taken w.r.t $\overrightarrow{\rho}_t(\cdot)$, which denotes the PDF of the controlled diffusion \eqref{control_diffusion}; $\varepsilon$ is the temperature of the diffusion and the regularizer in EOT \citep{provably_schrodinger_bridge}.

Solving the underlying Hamilton–Jacobi–Bellman (HJB) equation and invoking the time reversal \citep{Anderson82} with $\varepsilon=\frac{1}{2}$, \emph{Schr\"{o}dinger system} yields the desired forward-backward stochastic differential equations (FB-SDEs) \citep{forward_backward_SDE}:
\begin{subequations}
\begin{align}
{\dd  \overrightarrow\bx_t}&=\footnotesize{\left[\bbf_t(\overrightarrow\bx_t) +  \beta_t\nabla\log\overrightarrow\psi_t(\overrightarrow\bx_t)\right]\dd t+ \sqrt{\beta_t} \dd  \overrightarrow\bw_t}, \label{FB-SDE-f}\\
{\dd  \overleftarrow\bx_t}&=\footnotesize{\left[\bbf_t(\overleftarrow\bx_t) - \beta_t \nabla\log\overleftarrow\varphi_t(\overleftarrow\bx_t)\right]\dd t+  \sqrt{\beta_t} \dd  {\overleftarrow\bw}_t}, \label{FB-SDE-b}
\end{align}\label{FB-SDE}
\end{subequations}
where $\overrightarrow\psi_t(\cdot) \overleftarrow\varphi_t(\cdot) =\overrightarrow{\rho}_t(\cdot)$, $\rho_0(\cdot)\sim \rho_{\text{data}}, \ \rho_T(\cdot)\sim \rho_{\text{prior}}$.

To solve the optimal controls (scores) $(\nabla\log\overrightarrow\psi, \nabla\log\overleftarrow\varphi)$, a standard tool is to leverage the nonlinear Feynman-Kac formula \citep{Ma_FB_SDE, Karatzas_Shreve, forward_backward_SDE} to learn a stochastic representation.
\begin{proposition}[Nonlinear Feynman-Kac representation] Assume Lipschitz smoothness and linear growth condition on the drift $\bbf$ and diffusion $g$ in the FB-SDE \eqref{FB-SDE}. Define $\overrightarrow y_t = \log \overrightarrow\psi_t(\bx_t)$ and $\overleftarrow y_t=\log \overleftarrow\varphi_t(\bx_t)$.
Then the stochastic representation follows 
% \footnote{$\overleftarrow y_T$ is often intractable and omitted in practical training.}
\begin{align}
    \overleftarrow y_s&=\E\bigg[\overleftarrow y_T -\int_s^T {\Gamma_{\zeta}(\overleftarrow\bz_t; \overrightarrow\bz_t)}\dd t \bigg|\overrightarrow\bx_s=\textbf{x}_s\bigg],\notag\\
     \Gamma_{\zeta}(\overleftarrow\bz_t; \overrightarrow\bz_t)& \footnotesize{\equiv\frac{1}{2} \|  \overleftarrow \bz_t\|_2^2 + \frac{1}{2} \|  \overrightarrow \bz_t\|_2^2}  \label{Gamma_def} \\
     &\qquad + \footnotesize{\nabla \cdot \big(  \sqrt{\beta_t}\overleftarrow\bz_t - \bbf_t \big) + \zeta \langle  \overleftarrow \bz_t,  \overrightarrow \bz_t\rangle} \notag,
\end{align}
where ${\overrightarrow\bz_t =\sqrt{\beta_t} \nabla \overrightarrow y_t}$, ${\overleftarrow \bz_t =\sqrt{\beta_t} \nabla \overleftarrow y_t}$, and $\zeta=1$.
\end{proposition}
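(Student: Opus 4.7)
The plan is to derive an SDE for $\overleftarrow y_t=\log\overleftarrow\varphi_t(\overrightarrow\bx_t)$ along the controlled forward trajectory, recognize its drift as exactly $\Gamma_\zeta$, and then take conditional expectations so the It\^o integral vanishes. The Lipschitz smoothness and linear growth conditions on $\bbf$ and $g=\sqrt{\beta_t}$ are precisely what is needed to justify that vanishing step (they guarantee square-integrability and hence the martingale property of the stochastic integral).

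\textbf{Step 1 (Schr\"odinger system PDE).} First I would record that the Schr\"odinger system associated with \eqref{FB-SDE} forces $\overleftarrow\varphi_t$ to satisfy the forward-type equation $\partial_t\overleftarrow\varphi_t=-\nabla\cdot(\overleftarrow\varphi_t\bbf_t)+\tfrac{\beta_t}{2}\Delta\overleftarrow\varphi_t$ and $\overrightarrow\psi_t$ to satisfy the adjoint backward-type equation; these are standard and follow from $\overrightarrow\psi_t\overleftarrow\varphi_t=\overrightarrow\rho_t$ together with the Fokker--Planck equation of the controlled process. Dividing by $\overleftarrow\varphi_t$ gives
\begin{equation*}
\frac{\partial_t\overleftarrow\varphi_t}{\overleftarrow\varphi_t}=-\nabla\log\overleftarrow\varphi_t\cdot\bbf_t-\nabla\cdot\bbf_t+\tfrac{\beta_t}{2}\frac{\Delta\overleftarrow\varphi_t}{\overleftarrow\varphi_t}.
\end{equation*}

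\textbf{Step 2 (It\^o expansion of $\overleftarrow y_t$).} Apply It\^o's formula to $\log\overleftarrow\varphi_t(\overrightarrow\bx_t)$ along the forward SDE \eqref{FB-SDE-f}, using the identity $\tfrac12\mathrm{tr}(\nabla^2\log\overleftarrow\varphi_t)=\tfrac12\big(\tfrac{\Delta\overleftarrow\varphi_t}{\overleftarrow\varphi_t}-\|\nabla\log\overleftarrow\varphi_t\|_2^2\big)$. After substituting the PDE from Step~1, the $\bbf_t\cdot\nabla\log\overleftarrow\varphi_t$ terms cancel and one obtains
\begin{equation*}
\dd\overleftarrow y_t=\Big[-\nabla\cdot\bbf_t+\beta_t\tfrac{\Delta\overleftarrow\varphi_t}{\overleftarrow\varphi_t}-\tfrac{\beta_t}{2}\|\nabla\log\overleftarrow\varphi_t\|_2^2+\beta_t\langle\nabla\log\overleftarrow\varphi_t,\nabla\log\overrightarrow\psi_t\rangle\Big]\dd t+\sqrt{\beta_t}\,\nabla\log\overleftarrow\varphi_t\cdot\dd\overrightarrow\bw_t.
\end{equation*}

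\textbf{Step 3 (rewriting in terms of $\overrightarrow\bz_t,\overleftarrow\bz_t$).} Using $\overleftarrow\bz_t=\sqrt{\beta_t}\nabla\log\overleftarrow\varphi_t$, $\overrightarrow\bz_t=\sqrt{\beta_t}\nabla\log\overrightarrow\psi_t$ and the key identity $\nabla\cdot(\sqrt{\beta_t}\overleftarrow\bz_t)=\beta_t\Delta\log\overleftarrow\varphi_t=\beta_t\tfrac{\Delta\overleftarrow\varphi_t}{\overleftarrow\varphi_t}-\|\overleftarrow\bz_t\|_2^2$, the drift collapses exactly to $\nabla\cdot(\sqrt{\beta_t}\overleftarrow\bz_t-\bbf_t)+\tfrac12\|\overleftarrow\bz_t\|_2^2+\langle\overleftarrow\bz_t,\overrightarrow\bz_t\rangle=\Gamma_\zeta(\overleftarrow\bz_t;\overrightarrow\bz_t)$ with $\zeta=1$, while the diffusion term becomes $\overleftarrow\bz_t\cdot\dd\overrightarrow\bw_t$.

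\textbf{Step 4 (conditioning).} Integrating from $s$ to $T$ and conditioning on $\overrightarrow\bx_s=\mathbf{x}_s$, the assumed Lipschitz and linear-growth bounds ensure $\mathbb{E}\int_s^T\|\overleftarrow\bz_t\|_2^2\,\dd t<\infty$, so the stochastic integral is a true martingale and drops out; rearranging yields the claimed representation. The main technical obstacle I expect is precisely this last justification: making sure the controlled forward process admits enough regularity (non-blow-up, integrable score) to invoke It\^o and the martingale property, which is the role of the smoothness/growth hypothesis stated in the proposition; the algebra of Steps 2--3 is mechanical once the PDE for $\overleftarrow\varphi_t$ is in hand.
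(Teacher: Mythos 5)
The paper does not supply its own proof of this proposition; it is quoted as a known result from the nonlinear Feynman--Kac/FBSDE literature (Ma--Yong, Karatzas--Shreve, and Chen et al.'s FB-SDE paper). Your reconstruction is essentially the standard derivation that underlies those references, and the algebra checks out. Applying It\^o to $\log\overleftarrow\varphi_t$ along the \emph{controlled} forward SDE, substituting the Fokker--Planck equation $\partial_t\overleftarrow\varphi_t=-\nabla\cdot(\overleftarrow\varphi_t\bbf_t)+\tfrac{\beta_t}{2}\Delta\overleftarrow\varphi_t$ satisfied by the Schr\"odinger potential under the reference dynamics, using $\Delta\log\overleftarrow\varphi=\tfrac{\Delta\overleftarrow\varphi}{\overleftarrow\varphi}-\|\nabla\log\overleftarrow\varphi\|_2^2$, and cancelling the $\bbf_t\cdot\nabla\log\overleftarrow\varphi_t$ terms does indeed collapse the drift to
\begin{equation*}
\nabla\cdot\bigl(\sqrt{\beta_t}\,\overleftarrow\bz_t-\bbf_t\bigr)+\tfrac12\|\overleftarrow\bz_t\|_2^2+\langle\overleftarrow\bz_t,\overrightarrow\bz_t\rangle=\Gamma_1(\overleftarrow\bz_t;\overrightarrow\bz_t),
\end{equation*}
after which taking conditional expectations and invoking the martingale property of $\int_s^T\overleftarrow\bz_t\cdot\dd\overrightarrow\bw_t$ gives the claimed representation.

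Two small remarks, neither fatal. First, in Step~1 you say the two Kolmogorov PDEs for $\overrightarrow\psi_t$ and $\overleftarrow\varphi_t$ ``follow from $\overrightarrow\psi_t\overleftarrow\varphi_t=\overrightarrow\rho_t$''; more standardly these PDEs (with the product boundary conditions at $t=0$ and $t=T$) are how the Schr\"odinger potentials are \emph{defined}, and the factorization of the marginal is a derived consequence. Your use of the PDEs is correct either way. Second, as you yourself flag, the stated hypothesis (Lipschitz plus linear growth on $\bbf$ and $g$) does not by itself control $\mathbb{E}\int_s^T\|\overleftarrow\bz_t\|_2^2\,\dd t$, which is what actually kills the It\^o integral; one needs in addition that $\nabla\log\overleftarrow\varphi_t$ is sufficiently integrable along the controlled path. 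This matches the informal level at which the proposition is stated in the paper, so it is a fair reconstruction, but a fully rigorous version would add that integrability explicitly to the hypotheses.
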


\vspace{-0.02in}
\section{Variational Schr\"odinger Diffusion Models}

SB outperforms SGMs in the theoretical potential of optimal transport and an intractable score function $\nabla\log\overrightarrow\psi_t(\bx_t)$ is exploited in the forward SDE for more efficient transportation plans. However, there is no free lunch in achieving such efficiency, and it comes with three notable downsides:
\begin{itemize}
    \item {Solving $\nabla\log\overrightarrow\psi_t$ in Eq.\eqref{FB-SDE-f} for optimal transport is prohibitively costly and may not be necessary \citep{Youssef_sample_transport, Rectified_group}}.
    \item {The nonlinear diffusion no longer yields closed-form expression of $\overrightarrow\bx_t$ given $\overrightarrow\bx_0$ \citep{forward_backward_SDE}.}
    \item {The ISM loss is inevitable and the estimator suffers from a large variance issue \citep{Hutchinson89}.}
\end{itemize}

% \vspace{-0.04in}
\subsection{Variational Inference via Linear Approximation}

FB-SDEs naturally connect to the alternating-projection solver based on the IPF (a.k.a. Sinkhorn) algorithm, boiling down the full bridge \eqref{dynamic_SBP} to a half-bridge solver \citep{Pavon_CPAM_21, DSB, SBP_max_llk}. With $\mathbb{P}_1$ given and $k=1,2,...$, we have: %\yt{with $\mathbb{P}_1$ given and $k=1,2,...$} %\Wei{fixed it.}
\begin{subequations}
\begin{align}
    \mathbb{P}_{2k}&:=\argmin_{\mathbb{P}\in \mathcal{D}(\rho_{\text{data}},\  \cdot)} \text{KL}(\mathbb{P}\|\mathbb{P}_{2k-1})\label{half_bridge_b},\\
    \mathbb{P}_{2k+1}&:=\argmin_{\mathbb{P}\in \mathcal{D}(\cdot, \ \rho_{\text{prior}})} \text{KL}(\mathbb{P}\|\mathbb{P}_{2k})\label{half_bridge_f}.
\end{align}\label{half_bridge}
\end{subequations}
More specifically, \citet{forward_backward_SDE} proposed a neural network parameterization to model $(\overleftarrow\bz_t, \overrightarrow\bz_t)$ using $(\overleftarrow\bz^{\theta}_t, \overrightarrow\bz^{\omega}_t)$, where $\theta$ and $\omega$ refer to the model parameters, respectively. Each stage of the half-bridge solver proposes to solve the models alternatingly as follows 
\begin{subequations}
\begin{align}
    \overleftarrow{\mathcal{L}}(\theta)&=\small{-\int_0^T \E_{\overrightarrow\bx_t\backsim \eqref{FB-SDE-f}}\bigg[\Gamma_1(\overleftarrow\bz^{\theta}_t; \overrightarrow\bz^{\omega}_t)\dd t  \bigg|\overrightarrow\bx_0=\textbf{x}_0\bigg]}\label{SB-loss-b}\\
    \overrightarrow{\mathcal{L}}(\omega)&=\small{-\int_0^T \E_{\overleftarrow\bx_t\backsim \eqref{FB-SDE-b}}\bigg[\Gamma_1(\overrightarrow\bz^{\omega}_t; \overleftarrow\bz^{\theta}_t)\dd t \bigg|\overleftarrow\bx_T=\textbf{x}_T\bigg]}\label{SB-loss-f},
\end{align}\label{SB-loss}
\end{subequations}
where $\Gamma_1$ is defined in Eq.\eqref{Gamma_def} and $\backsim$ denotes the approximate simulation parametrized by neural networks \footnote{$\sim$ (resp. $\backsim$) denotes the exact (resp. parametrized) simulation.}
%\yt{shall we say something like 'when optimizing $\theta$ in ~\eqref{SB-loss-b}, we fix $\omega$?} where . \yt{It's a bit hard to see the difference between $\sim$ and $\backsim$ by eyes.} % Wei: you know only expert reviewers can find this issue. LOL. 

However, solving the backward score in Eq.\eqref{SB-loss-b} through simulations, akin to the ISM loss, is computationally demanding and affects the scalability in generative models.

% \subsection{via Variational Inference}
To motivate simulation-free property, we leverage variational inference \citep{blei_VI} and study a linear approximation of the forward score $\nabla\log\overrightarrow\psi(\bx, t)\approx\bA_t \bx$ with $\bbf_t(\overrightarrow\bx_t)\equiv -\frac{1}{2}\beta_t\overrightarrow\bx_t$, which ends up with the variational FB-SDE (VFB-SDE):
%\yt{In~\eqref{FB-SDE-f}, the drift is $f_t(x)$ but here it becomes $-\frac12\beta_t x$. Shall we say something like 'we choose the OU process as the prior process $\mathbb Q$'?}
\begin{subequations}
\begin{align}
\dd \overrightarrow\bx_t&=\left[-\frac{1}{2}\beta_t\overrightarrow\bx_t +  {\beta_t}\textcolor{black}{\bA_t\overrightarrow\bx_t }\right]\dd t+\sqrt{\beta_t} \dd \overrightarrow\bw_t, \label{linear_forward}\\
\dd \overleftarrow\bx_t&=\left[-\frac{1}{2}\beta_t\overleftarrow\bx_t - {\beta_t} \nabla \log  \overrightarrow{\rho}_{t}(\overleftarrow\bx_t)\right]\dd t+\sqrt{\beta_t} \dd \overleftarrow\bw_t\label{multi_backward},
\end{align}\label{FB-SDE_approx}
\end{subequations}
where $t\in[0, T]$ and $\nabla \log  \overrightarrow{\rho}_{t}$ is the score function of \eqref{linear_forward} and the conditional version is to be derived in Eq.\eqref{score_expression}.

The half-bridge solver is restricted to a class of OU processes $\text{OU}(\cdot, \rho_{\text{prior}})$ with the initial marginal $\rho_{\text{data}}$. 
\begin{equation*}
    \argmin_{\mathbb{P}\in \mathcal{D}(\cdot, \rho_{\text{prior}})} \text{KL}(\mathbb{P}\|\mathbb{P}_{2k})\Rightarrow \argmin_{\widehat{\mathbb{P}}\in \text{OU}(\cdot, \rho_{\text{prior}})} \text{KL}(\widehat{\mathbb{P}}\|\mathbb{P}_{2k}).
\end{equation*}
By the \emph{mode-seeking} property of the exclusive (reverse) KL divergence \citep{FKL_vs_BKL}, %\yt{any reference for this property?} % Wei fixed, 
we can expect the optimizer $\widehat{\mathbb{P}}$ to be a \emph{local estimator} of the nonlinear solution in \eqref{half_bridge_b}. 

Additionally, the loss function \eqref{SB-loss-f} to learn the variational score $\bA_t$, where $t\in[0, T]$, can be simplified to
\begin{align}
    \overrightarrow{\mathcal{L}}(\bA)
    &=-\int_0^T \E_{\bx_t\backsim \eqref{multi_backward}} \bigg[\Gamma_{\zeta}(\bA_t\bx_t; \overleftarrow\bz^{\theta}_t)\dd t \bigg|\overleftarrow\bx_T=\textbf{x}_T\bigg]\label{VSDMf},
\end{align}
where $\Gamma_{\zeta}$ is defined in Eq.\eqref{Gamma_def}. Since the structure property $\overrightarrow\psi_t \overleftarrow\varphi_t =\overrightarrow{\rho}_t$ in Eq.\eqref{FB-SDE} is compromised by the variational inference, we propose to tune $\zeta$ in our experiments.

\subsection{Closed-form Expression of Backward Score}
\label{closed_form_score}
Assume a prior knowledge of $\bA_t$ is given, we can rewrite the forward process \eqref{linear_forward} in the VFB-SDE and derive a multivariate forward diffusion \citep{multivariateDM}:
% \begin{subequations}
% \begin{align}
\begin{equation}
\begin{split}
\label{mSGM-forward}
    \dd \overrightarrow\bx_t&=\left[-\frac{1}{2}\beta_t\bI +  {\beta_t}  \textcolor{black}{\bA_t }\right] \overrightarrow\bx_t \dd t+\sqrt{\beta_t} \dd \overrightarrow\bw_t \\
    &=-\frac{1}{2}\bD_t\beta_t \overrightarrow\bx_t \dd t+\sqrt{\beta_t} \dd \overrightarrow\bw_t,
\end{split}
\end{equation}
% \end{align}
% \end{subequations}
where $\bD_t=\bI-2\bA_t\in\mathbb{R}^{d\times d}$ is a positive-definite matrix \footnote{$\bD_t=-2\bA_t\in\mathbb{R}^{d\times d}$ when the forward SDE is VE-SDE.}. %parametrized by tools such as neural networks. 
Consider the multivariate OU process \eqref{mSGM-forward}. The mean and covariance follow
\begin{subequations}
\begin{align}
    &\frac{\dd \bmu_{t|0}}{\dd t}=-\frac{1}{2}\beta_t \bD_t \bmu_{t|0}\label{mu_diffusion}\\
    &\frac{\dd \bSigma_{t|0}}{\dd t}=-\frac{1}{2}\beta_t \big(\bD_t  \bSigma_{t|0} +\bSigma_{t|0}\bD_t^{\intercal}\big)  + \beta_t\bI \label{sigma_diffusion}.
\end{align}
\end{subequations}
Solving the differential equations with the help of integration factors, the mean process follows
\begin{align}
    \bmu_{t|0}&=e^{-\frac{1}{2} [\beta\bD]_t} \bx_0,\label{mean_dyn}
\end{align}
where $[\beta\bD]_t=\int_0^t \beta_s \bD_s\dd s$. By matrix decomposition $\bSigma_{t|0}=\bC_t \bH_t^{-1}$ \citep{applied_sde}, the covariance process follows that:
    \begin{align}\label{cov_dynamics}
      \begin{pmatrix}
        \bC_t \\
        \bH_t
      \end{pmatrix} &=
      \exp\Bigg[
      \begin{pmatrix}
        -\frac{1}{2}[\beta\bD]_t & [\beta\bI]_t \\
        \bm{0} &  \frac{1}{2}[\beta\bD^{\intercal}]_t
      \end{pmatrix}
      \Bigg]
            \begin{pmatrix}
              {\bSigma}_0 \\
              {\bI} 
            \end{pmatrix},            
    \end{align}
where the above matrix exponential can be easily computed through modern computing libraries. Further, to avoid computing the expensive matrix exponential for high-dimensional problems, we can adopt a diagonal and time-invariant $\bD_t$.

% More specifically, if we parametrize the $\bD_t$ as a diagonal positive-definite matrix which writes 
% \begin{align}
%     \bD_t = 
% \end{align}

% \Wei{high dim diag case, has even simpler forms for us. Weijian TO DO.}

Suppose $\bSigma_{t|0}$ has the Cholesky decomposition $\bSigma_{t|0} = \bL_t \bL_t^{\intercal}$ for some lower-triangular matrix $\bL_t$. We can have a closed-form update that resembles the SGM. 
\begin{align*}
    \overrightarrow\bx_t = \bmu_{t|0} + \bL_t\bepsilon,
\end{align*}
where $\bmu_{t|0}$ is defined in Eq.\eqref{mean_dyn} and $\bepsilon$ is the standard $d$-dimensional Gaussian vector. The score function follows
\begin{align}
     \nabla \log  \overrightarrow{\rho}_{t|0}(\overrightarrow\bx_t) &= -\frac12 \nabla[(\overrightarrow\bx_t - \bmu_t)^{\intercal} \bSigma_{t|0}^{-1} (\overrightarrow\bx_t - \bmu_t)] \notag\\
    &= -\bSigma_{t|0}^{-1}(\overrightarrow\bx_t-\bmu_t) \label{score_expression}\\
    &= -\bL_t^{-\intercal} \bL_t^{-1}\bL_t\bepsilon:=-\bL_t^{-\intercal}\bepsilon.\notag
\end{align}
Invoking the ESM loss function in Eq.\eqref{esm_loss}, we can learn the score function for Eq.\eqref{mSGM-forward}  %(up to an additional drift) $\nabla \log  \overrightarrow{\rho}_{t|0}(\overrightarrow\bx_t | \overrightarrow\bx_0)+\bA_t \bx_t$ 
using a  neural network parametrization $s_{t}(\cdot)$ and optimize the loss function: 
\begin{equation}\label{MDM_loss}
    \mathbb{E}_t \big[\mathbb{E}_{\overrightarrow\bx_0} \mathbb{E}_{\overrightarrow\bx_t|\overrightarrow\bx_0}[\|-\bL_t^{-\intercal}\bm{\epsilon}-s_{t}(\bx_t)\|_2^2]].
\end{equation}

One may further consider preconditioning techniques \citep{EDM} or variance reduction \citep{multivariateDM} to stabilize training and accelerate training speed.

\paragraph{Speed-ups via Time-invariant and Diagonal $\bD_t$} If we parametrize $\bD_t$ as a time-invariant and diagonal positive-definite matrix, the formula \eqref{cov_dynamics} has simpler explicit expressions that do not require calling matrix exponential operators. We present such a result in Corollary \ref{corr:diagonal}. For the image generation experiment in Section \ref{image_gen}, we use such a diagonal parametrization when implementing the VSDM. 

\begin{corollary}\label{corr:diagonal}
If $\bD_t =\bm{\Lambda} \coloneqq \operatorname{diag}(\bm{\lambda})$, where $\lambda_i \geq 0,~ \forall 1 \leq i \leq d$. If we denote the $\sigma_t^2 \coloneqq \int_0^t \beta_s \mathrm{d}s$, then matrices $\bC_t$ and $\bH_t$ has simpler expressions with 
\begin{align*}
    & \bC_t = \bm{\Lambda}^{-1} \big\{ \exp(\frac{1}{2}\sigma_t^2 \bm{\Lambda}) - \exp(-\frac{1}{2}\sigma_t^2 \bm{\Lambda}) \big\} \\
    & \bH_t = \exp(\frac{1}{2}\sigma_t^2 \bm{\Lambda}),
\end{align*}
which leads to $\bC_t\bH_t^{-1} = \bm{\Lambda}^{-1} \big\{ \mathbf{I} - \exp (-\sigma_t^2 \bm{\Lambda})\big\}$. As a result, the corresponding forward transition writes 
\begin{align*}
    \bm{\mu}_{t|0} = \exp (-\frac{1}{2}\sigma_t^2 \bm{\Lambda})\bx_0,~ \bL_t = \bm{\Lambda}^{-\frac{1}{2}}\sqrt{\mathbf{I} - \exp(-\sigma_t^2 \bm{\Lambda}) }.
\end{align*}
\end{corollary}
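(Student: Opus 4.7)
The plan is to specialize the general formulas \eqref{mean_dyn}--\eqref{cov_dynamics} to the diagonal, time-invariant setting and show that the block matrix exponential collapses to elementary expressions. Throughout, everything commutes because $\bm{\Lambda}$ is diagonal (so $\bm{\Lambda}$, $\bI$, and functions of $\bm{\Lambda}$ mutually commute), and this is what powers the simplification.

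First I would substitute $\bD_t \equiv \bm{\Lambda}$ into the defining integrals to get $[\beta \bD]_t = \sigma_t^2 \bm{\Lambda}$, $[\beta \bI]_t = \sigma_t^2 \bI$, and $[\beta \bD^{\intercal}]_t = \sigma_t^2 \bm{\Lambda}$. Then the mean formula \eqref{mean_dyn} immediately gives $\bm{\mu}_{t|0} = \exp(-\tfrac{1}{2}\sigma_t^2 \bm{\Lambda})\bx_0$, which is the claimed mean.

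Next I would tackle the block matrix exponential in \eqref{cov_dynamics}. Writing
\begin{equation*}
M_t = \begin{pmatrix} -\tfrac{1}{2}\sigma_t^2 \bm{\Lambda} & \sigma_t^2 \bI \\ \bm{0} & \tfrac{1}{2}\sigma_t^2 \bm{\Lambda} \end{pmatrix},
\end{equation*}
the block upper-triangular structure with commuting diagonal and off-diagonal blocks yields
\begin{equation*}
\exp(M_t) = \begin{pmatrix} e^{-\frac{1}{2}\sigma_t^2 \bm{\Lambda}} & X_t \\ \bm{0} & e^{\frac{1}{2}\sigma_t^2 \bm{\Lambda}} \end{pmatrix}, \quad X_t = \int_0^1 e^{-\frac{s}{2}\sigma_t^2 \bm{\Lambda}}\,(\sigma_t^2 \bI)\, e^{\frac{1-s}{2}\sigma_t^2 \bm{\Lambda}}\, ds.
\end{equation*}
Because the blocks commute I can combine the two exponentials under the integrand, reducing $X_t$ to the scalar-style integral $\sigma_t^2 \int_0^1 e^{\frac{1-2s}{2}\sigma_t^2 \bm{\Lambda}}ds$, which evaluates to $\bm{\Lambda}^{-1}\{\exp(\tfrac{1}{2}\sigma_t^2 \bm{\Lambda}) - \exp(-\tfrac{1}{2}\sigma_t^2 \bm{\Lambda})\}$. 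Taking the initial condition $\bm{\Sigma}_0 = \bm{0}$ (conditional on $\bx_0$) in \eqref{cov_dynamics}, the top block gives $\bC_t = X_t$ and the bottom block gives $\bH_t = \exp(\tfrac{1}{2}\sigma_t^2 \bm{\Lambda})$, exactly the claimed forms.

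Finally I would multiply out $\bC_t \bH_t^{-1}$ using commutativity of diagonal matrices to obtain $\bm{\Sigma}_{t|0} = \bm{\Lambda}^{-1}\{\bI - \exp(-\sigma_t^2 \bm{\Lambda})\}$, which is diagonal and positive-semidefinite (using $\lambda_i \geq 0$ and the elementary bound $1 - e^{-x} \geq 0$ for $x \geq 0$, with the case $\lambda_i = 0$ handled by the limit $\lambda_i^{-1}(1 - e^{-\sigma_t^2 \lambda_i}) \to \sigma_t^2$). Since $\bm{\Sigma}_{t|0}$ is diagonal, its Cholesky factor is the diagonal square root $\bL_t = \bm{\Lambda}^{-1/2}\sqrt{\bI - \exp(-\sigma_t^2 \bm{\Lambda})}$, completing the corollary. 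There is no real obstacle here beyond the block matrix exponential step, which is the only place commutativity is genuinely used; a short sanity check that $X_0 = \bm{0}$ at $t = 0$ and that the full SGM recovery happens as $\bm{\Lambda} \to \bI$ would round out the proof.
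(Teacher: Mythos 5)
Your derivation is correct and reaches the same formulas, but it takes a genuinely different route through the key step. The paper computes $\exp(\bM_t)$ by writing out $\bM_t^2, \bM_t^3, \bM_t^4, \bM_t^5, \dots$ explicitly, recognizing the alternating pattern in the blocks, and re-summing the Taylor series into the closed-form exponentials. You instead invoke the variation-of-constants formula for the exponential of a block upper-triangular matrix,
\[
\exp\begin{pmatrix} A & B \\ \bm{0} & D \end{pmatrix}
= \begin{pmatrix} e^{A} & \displaystyle\int_0^1 e^{(1-s)A}\,B\,e^{sD}\,ds \\[2pt] \bm{0} & e^{D} \end{pmatrix},
\]
and then use commutativity of diagonal matrices to collapse the integrand into a single exponential and evaluate the scalar-type integral. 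Both approaches land on the same $\bC_t$ and $\bH_t$; yours is shorter and more structural (it isolates exactly where commutativity enters — in evaluating the integral — and would generalize cleanly to noncommuting blocks by leaving the integral unevaluated), while the paper's term-by-term series sum is more elementary and self-contained, requiring no prior identity for block exponentials. Two small remarks: you correctly obtain $\bL_t = \bm{\Lambda}^{-1/2}\sqrt{\bI - \exp(-\sigma_t^2\bm{\Lambda})}$, consistent with $\bL_t\bL_t^{\intercal} = \bSigma_{t|0}$ (the paper's proof appendix writes a spurious extra factor of $\sigma_t$ in its last line, which is a typo); and your handling of the $\lambda_i = 0$ boundary case via the limit $\lambda_i^{-1}(1 - e^{-\sigma_t^2\lambda_i}) \to \sigma_t^2$ is a worthwhile detail the paper leaves implicit.
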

In Corrolary \ref{corr:diagonal} detailed in Appendix \ref{diagonal_matrix}, since the matrix $\bm{\Lambda} = \operatorname{diag}(\bm{\lambda})$ is diagonal and time-invariant, the matrix exponential and square root can be directly calculated element-wise on each diagonal elements $\lambda_i$ independently.

\subsubsection{Backward SDE}
\label{backward_DSM_setup}
Taking the time reversal \citep{Anderson82} of the forward multivariate OU process \eqref{mSGM-forward}, the backward SDE satisfies
\begin{align}
\dd \overleftarrow\bx_t=(-\frac{1}{2}\bD_t\beta_t \overleftarrow\bx_t - {\beta_t} s_t(\overleftarrow\bx_t))\dd t+ \sqrt{\beta_t} \dd \overleftarrow\bw_t. \label{backward_DSM}
\end{align}
Notably, with a general PD matrix $\bD_t$, the prior distribution follows that $\bx_T \sim \mathrm{N}(\bm{0}, \bSigma_{T|0})$\footnote{See the Remark on the selection of $\rho_{\text{prior}}$ in section \ref{part_1}.}. We also note that the prior is now limited to Gaussian distributions, which is not a general bridge anymore.

\subsubsection{Probability Flow ODE}

We can follow \citet{score_sde} and obtain the deterministic process directly:
\begin{align}
\dd \overleftarrow\bx_t&=\bigg(-\frac{1}{2}\bD_t\beta_t \overleftarrow\bx_t - \frac{1}{2}{\beta_t} s_t(\overleftarrow\bx_t)\bigg)\dd t, \label{backward_DSM_ode}
\end{align}
where $\bx_T \sim \mathrm{N}(\bm{0}, \bSigma_{T|0})$ and the sample trajectories follow the same marginal densities $\overrightarrow{\rho}_{t}(\bx_t)$ as in the SDE.

\begin{algorithm*}[tb]
   \caption{Variational Schr\"odinger Diffusion Models (VSDM). $\rho_{\text{prior}}$ is fixed to a Gaussian distribution. $\eta_k$ is the step size for SA and $h$ is the learning rate for the backward sampling of Eq.\eqref{backward_DSM}. The exponential moving averaging (EMA) technique can be used to further stabilize the algorithm.}
   \label{VSDM_alg}
\begin{algorithmic}
\REPEAT
   \STATE{\textbf{Simulation-free Optimization of Backward Score}}
    \STATE{\text{Draw $\bx_0\sim\rho_{\text{data}}$, $n\sim \{0, 1, \cdots, N-1\}$, $\bm{\epsilon}\sim\mathrm{N}(\bm{0}, \bI)$.}}
    \STATE{\text{Sample $\bx_{nh}|\bx_0\sim \mathrm{N}(\bmu_{nh|0}, \bSigma_{nh|0})$ by Eq.\eqref{mean_dyn} and \eqref{cov_dynamics} given $\bA_{nh}^{(k)}$.}}
    \STATE{\text{Cache $\{\bmu_{nh|0}\}_{n=0}^{N-1}$ and $\{\bL_{nh}^{-\intercal}\}_{n=0}^{N-1}$ via Cholesky decomposition of $\{\bSigma_{nh}\}_{n=0}^{N-1}$ to avoid repeated computations.}}
    \STATE{\text{Optimize the score functions
    $s^{(k+1)}_{nh}$ sufficiently through the loss $\mathbb{E}_n \big[\mathbb{E}_{\bx_0}\mathbb{E}_{\bx_{nh}|\bx_0}[\|-\bL_{nh}^{-\intercal}\bm{\epsilon}-s^{(k+1)}_{nh}(\bx_{nh})\|_2^2]]$.}}
   \STATE{\textbf{Optimization of Variational Score via Stochastic Approximation (SA)}}
   \STATE{\text{Simulate the backward trajectory $\overleftarrow\bx^{(k+1)}_{nh}$ given $\bA_{nh}^{(k)}$ and $s^{(k+1)}_{nh}$ via Eq.\eqref{backward_DSM_discrete}, where $\overleftarrow\bx^{(k+1)}_{(N-1)h}\sim \mathrm{N}(\bm{0}, \bSigma^{(k)}_{(N-1)h|0})$.}}
   \STATE{\text{Optimize variational score $\bA_{nh}^{(k+1)}$ using the loss function \eqref{VSDMf}, where $n\in\{0, 1, \cdots, N-1\}$:}}
   \begin{equation}\label{dsa_iterate_main}
       \bA_{nh}^{(k+1)}=\bA_{nh}^{(k)}-\eta_{k+1} \nabla\overrightarrow{\mathcal{L}}_{nh}(\bA_{nh}^{(k)}; \overleftarrow\bx^{(k+1)}_{nh}).
   \end{equation}
   \UNTIL{\text{Stage} $k=k_{\max}$}
   \STATE{\text{Sample $\overleftarrow\bx_{0}$ with stochastic (resp. deterministic) trajectories via the discretized Eq.\eqref{backward_DSM}} (resp. Eq.\eqref{backward_DSM_ode}).}
    \vskip -3 in
\end{algorithmic}
\end{algorithm*}

\subsection{Adaptive Diffusion via Stochastic Approximation}

Our major goal is to generate high-fidelity data with efficient transportation plans based on the optimal $\bA_t^{\star}$ in the forward process \eqref{mSGM-forward}. %\yt{i.e. $\min_{A} \E_{x\sim A^*}[A(x)]$. However, $A^*$ is not known \emph{a priori}}. % Wei: we don't need it. we may need to introduce extra notations.
However, the optimal $\bA_t^{\star}$ is not known \emph{a priori}. To tackle this issue, we leverage stochastic approximation (SA) \citep{RobbinsM1951, Albert90} to adaptively optimize the variational score $\bA_t^{(k)}$ through optimal transport and simulate the backward trajectories. 
\begin{itemize}
\item[(1)] Simulate backward trajectories $\{\overleftarrow\bx^{(k+1)}_{nh}\}_{n=0}^{N-1}$ via the Euler–Maruyama (EM) scheme of the backward process \eqref{backward_DSM} with a learning rate $h$.
\item[(2)] Optimize variational scores $\big\{\bA^{(k)}_{nh}\}_{n=0}^{N-1}$: 
$$\bA_{nh}^{(k+1)}=\bA_{nh}^{(k)}-\eta_{k+1} \nabla\overrightarrow{\mathcal{L}}_{nh}(\bA_{nh}^{(k)}; \overleftarrow\bx^{(k+1)}_{nh}),$$
\end{itemize}
% \yt{if $\overrightarrow{\mathcal{L}}(\bA)$ is an integral from $0$ to $T$, how do we define $\overrightarrow{\mathcal{L}_t}(\bA)$ and $\nabla \overrightarrow{\mathcal{L}_t}(\bA)$?} %Wei: good question, rigorous writing may need to write lots of extra. may not be easy to clarify easily. 
where $\nabla\overrightarrow{\mathcal{L}}_{nh}(\bA_{nh}^{(k)}; \overleftarrow\bx^{(k+1)}_{nh})$ is the gradient of the loss function \eqref{VSDMf} at time $nh$ and is known as the random field. We expect that the simulation of backward trajectories $\{\overleftarrow\bx^{(k+1)}_{nh}\}_{n=0}^{N-1}$ given $s^{(k+1)}_{nh}$ helps the optimization of $\bA_{nh}^{(k+1)}$ and the optimized $\bA_{nh}^{(k+1)}$ in turn contributes to a more efficient transportation plan for estimating $s^{(k+2)}_{nh}$ and simulating the backward trajectories $\{\overleftarrow\bx^{(k+2)}_{nh}\}_{n=0}^{N-1}$.

\paragraph{Trajectory Averaging} The stochastic approximation algorithm is a standard framework to study adaptive sampling algorithms \citep{Liang07}. %and also paves the way for studying adaptively nonlinear forward diffusions \citep{icsgld, awsgld} for future works. 
Moreover, the formulation suggests to stabilize the trajectories \citep{Polyak} with averaged parameters $\overline{\bA}_{nh}^{(k)}$ as follows
\begin{equation*}
    \overline{\bA}_{nh}^{(k)}=\sum_{i=1}^{k} \bA_{nh}^{(i)}=\bigg(1-\frac{1}{k}\bigg)\overline{\bA}_{nh}^{(k-1)} + \frac{1}{k}  \bA_{nh}^{(k)},
\end{equation*}
where $\overline{\bA}_{nh}^{(k)}$ is known to be an asymptotically efficient (optimal) estimator \citep{Polyak} in the local state space $\mathcal{A}$ by assumption \ref{ass_local_state_space}. % and converge in distribution to a Gaussian vector. 

\paragraph{Exponential Moving Average (EMA)} Despite guarantees in convex scenarios, the parameter space differs tremendously in different surfaces in non-convex state space $\mathcal{A}$. Empirically, if we want to exploit information from multiple modes, a standard extension is to employ the EMA technique \citep{slides_SA}:
\begin{equation*}
    \overline{\bA}_{nh}^{(k)}=(1-\eta)\overline{\bA}_{nh}^{(k-1)} + \eta  \bA_{nh}^{(k)}, \text{where } \eta \in(0, 1).
\end{equation*}
The EMA techniques are widely used empirically in diffusion models and Schr\"odinger bridge \citep{Song_improved_techniques_20, DSB, forward_backward_SDE} to avoid oscillating trajectories. %nonlinear state space does not imply a clear theoretical guarantee and we leave the study for future works. 
Now we are ready to present our methodology in Algorithm \ref{VSDM_alg}.

\paragraph{Computational Cost} Regarding the wall-clock computational time: i) training (\textcolor{dark2blue}{linear}) variational scores, albeit in a \emph{simulation-based manner}, becomes significantly faster than estimating \textcolor{darkred}{nonlinear} forward scores in Schr\"odinger bridge; ii) the variational parametrization greatly reduced the number of model parameters, which yields a much-reduced variance in the Hutchinson's estimator \citep{Hutchinson89}; iii) since we don't need to update $\bA_t$ as often as the backward score model, we can further amortize the training of $\bA_t$. In the simulation example in Figure.\ref{eigenvalue_analysis_main}(b), VSDM is only 10\% slower than the SGM with the same training complexity of backward scores while still maintaining efficient convergence of variational scores.

% the variational parametrization greatly reduced the number of parameters by thousands of times. As such, the training of the (\textcolor{dark2blue}{linear}) variational scores, albeit in a \emph{simulation-based manner}, becomes significantly faster than estimating \textcolor{darkred}{nonlinear} forward scores as in Schr\"odinger bridge and the training variance is also much smaller \citep{FFJORD}.  Overall, VSDM is only around 10\% slower than the SGM with the same number of iterations to train the backward score. 

% \section{Adaptive prior}

\section{Convergence of Stochastic Approximation}

In this section, we study the convergence of $\bA_{t}^{(k)}$ to the optimal $\bA_{t}^{\star}$, where $t\in[0,T]$ \footnote{We slightly abuse the notation and generalize $\bA_{nh}^{(k)}$ to $\bA_{t}^{(k)}$.}. The primary objective is to show  the iterates \eqref{dsa_iterate_main} follow
the trajectories of the dynamical system asymptotically:
\begin{equation}
\label{mean_field_perturbed_ode_main}
    \dd \bA_t = \nabla \overrightarrow{\bL}_t(\bA_t)\dd s,
\end{equation}
where $\frac{\dd \bA_t}{\dd s}=\lim_{\eta\rightarrow 0} \frac{\bA_t^{(k+1)}-\bA_t^{(k)}}{\eta}$ and $\nabla  \overrightarrow{\bL}_t(\cdot)$ is the mean field at time $t$:
\begin{equation}
\begin{split}
\label{mean_field_perturbed_main}
\nabla  \overrightarrow{\bL}_t(\bA_t)&=\int_{\MX} \nabla\overrightarrow{\mathcal{L}}_t(\bA_t; \overleftarrow\bx^{(\cdot) }_t)\overleftarrow{\rho}_{t}(\dd\overleftarrow\bx^{(\cdot) }_t),\\
\end{split}
\end{equation}
where $\MX$ denotes the state space of data $\bx$ and $\nabla\overrightarrow{\mathcal{L}}_t$ denotes the gradient w.r.t. $\bA_t$; $\overleftarrow{\rho}_{t}$ is the distribution of the continuous-time interpolation of the discretized backward SDE \eqref{backward_DSM_discrete} from $t=T$ to $0$. We denote by $ \bA_t^{\star}$ one of the solutions of $\nabla  \overrightarrow{\bL}_t( \bA_t^{\star})=\bm{0}$. 

The aim is to find the optimal solution $\bA_{t}^{\star}$ to the mean field $\nabla\overrightarrow{\bL}_t(\bA_{t}^{\star})=\bm{0}$. However, we acknowledge that the equilibrium is not unique in general nonlinear dynamical systems. To tackle this issue, we focus our analysis around a neighborhood $\bTheta$ of the equilibrium by assumption \ref{ass_local_state_space}. After running sufficient many iterations with a small enough step size $\eta_k$, suppose $\bA_t^{(k)}\in\bTheta$ is somewhere near one equilibrium $\bA_t^{\star}$ (out of all equilibrium), then by the induction method, the iteration tends to get trapped in the same region as shown in Eq.\eqref{final_sa_convergence2} and yields the convergence to one equilibrium $\bA_t^{\star}$.  We also present the variational gap of the (sub)-optimal transport and show our transport is more efficient than diffusion models with Gaussian marginals.

Next, we outline informal assumptions and sketch our main results, reserving formal ones for readers interested in the details in the appendix. We also formulate the optimization of the variational score $\bA_{t}$ using stochastic approximation in Algorithm \ref{alg:SA_algorithm} in the supplementary material. 

% \vspace{-0.04in}
\paragraph{Assumption A1}(Regularity). \emph{(Positive definiteness) For any $t\geq 0$ and $\bA_t \in \mathcal{A}$, $\bD_t=\bI - 2\bA_t$ is positive definite. (Locally strong convexity) For any stable local minimum $\bA_t^{\star}$ with $\nabla \overrightarrow{\bL}_t(\bA_t^{\star})= \bm{0}$, there is always a neighborhood $\bTheta$ s.t. $\bA_t^{\star}\in \bTheta\subset \mathcal{A}$ and $\overrightarrow{\bL}_t$ is strongly convex in $\bTheta$.}

By the mode-seeking property of the exclusive (reverse) KL divergence \citep{FKL_vs_BKL}, we only make a mild assumption on a small neighborhood of the solution and expect the convergence given proper regularities.

% \vspace{-0.04in}
\paragraph{Assumption A2}(Lipschitz Score). \emph{For any $t\in[0, T]$, the score $\nabla\log  \overrightarrow\rho_{t}$ is $L$-Lipschitz.} 

% \vspace{-0.04in}
\paragraph{Assumption A3}(Second Moment Bound). \emph{The data distribution has a bounded second moment.}

\paragraph{Assumption A4}(Score Estimation Error). \emph{We have bounded score estimation errors in $L^2$ quantified by $\epsilon_{\text{score}}$.}

We first use the multivariate diffusion to train our score estimators  $\{s^{(k)}_{t}\}_{n=0}^{N-1}$ via the loss function \eqref{MDM_loss} based on the pre-specified $\bA_t^{(k)}$ at step $k$. Similar in spirit to  \citet{chen2023improved, Sitan_22_sampling_is_easy}, we can show the generated samples based on $\{s^{(k)}_{t}\}_{n=0}^{N-1}$ are close in distribution to the ideal samples in Theorem  \ref{theorem:quality_of_data}. The novelty lies in the extension of \textcolor{dark2blue}{single-variate diffusions} to \textcolor{dark2blue}{multi-variate diffusions}.

% \vspace{-0.04in}
\paragraph{Theorem 1}(Generation quality, informal). \emph{Assume assumptions \ref{ass_local_state_space}-\ref{ass_score_estimation} hold with a fixed $\bA^{(k)}_t$, the generated data distribution is close to the data distributions $\rho_{\text{data}}$ such that
    \begin{equation*}
    \begin{split}
        \mathrm{TV}(\overleftarrow{\rho}^{(k)}_{0}, \rho_{\text{data}})&\lesssim  \exp(-T) + (\sqrt{dh} +\epsilon_{\text{score}})\sqrt{T}.
    \end{split}
    \end{equation*}
    }

% \vspace{-0.04in}
To show the convergence of $\bA_{t}^{(k)}$ to $ \bA_{t}^{\star}$, the proof hinges on a stability condition such that the solution asymptotically tracks the equilibrium $ \bA_{t}^{\star}$ of the mean field \eqref{mean_field_perturbed_ode_main}.

% \vspace{-0.04in}
\paragraph{Lemma 2}(Local stability, informal). \emph{Assume the assumptions \ref{ass_local_state_space} and \ref{ass_smoothness} hold. For $\forall t\in[0, T]$ and $\forall\bA\in\bTheta$, the solution satisfies a local stability condition such that
        \begin{equation*}
            \langle \bA- \bA_t^{\star}, \nabla \overrightarrow{\bL}_t(\bA) \rangle \gtrsim  \|\bA-\bA_t^{\star}\|_2^2.
        \end{equation*}}

% \vspace{-0.05in}

The preceding result illustrates the convergence of the solution toward the equilibrium on average. The next assumption assumes a standard slow update of the SA process, which is standard for theoretical analysis but may not be always needed in empirical evaluations. 

\vspace{-0.04in}
\paragraph{Assumption A5}(Step size). \emph{The step size $\{\eta_{k}\}_{k\in \mathrm{N}}$ is a positive and decreasing sequence }
\begin{equation*} 
\eta_{k}\rightarrow 0, \ \ \sum_{k=1}^{\infty} \eta_{k}=+\infty,\ \sum_{k=1}^{\infty} \eta^2_{k}<+\infty.
\end{equation*}

Next, we use the stochastic approximation theory to prove the convergence of $\bA_t^{(k)}$ to an equilibrium $  \bA_t^{\star}$.

\paragraph{Theorem 2}(Convergence in $L^2$). \emph{Assume assumptions \ref{ass_local_state_space}-\ref{ass_step_size} hold. The variational score $\bA_t^{(k)}$ converges to an equilibrium $ \bA_t^{\star}$ in $L^2$ such that
    \begin{equation*}
    \E[\|\bA_t^{(k)}- \bA_t^{\star}\|_2^2]\leq 2 \eta_{k},
\end{equation*}
where the expectation is taken w.r.t samples from $\overleftarrow{\rho}_{t}^{(k)}$.
}

In the end, we adapt Theorem \ref{theorem:quality_of_data} again to show the adaptively generated samples are asymptotically close to the samples based on the optimal $\bA_{t}^{\star}$ in Theorem \ref{theorem_adaptive_sampling}, which quantifies the quality of data based on more efficient transportation plans.

\vspace{-0.05in}
\paragraph{Theorem 3}(Generation quality of adaptive samples). \emph{Given assumptions \ref{ass_local_state_space}-\ref{ass_step_size}, the generated sample distribution at stage $k$ is close to the exact sample distribution 
based on the equilibrium $ \bA_{t}^{\star}$ such that
\begin{equation*}
    \begin{split}
    \mathrm{TV}&(\overleftarrow{\rho}^{\star}_{0}, \rho_{\text{data}})\lesssim \exp(-T)+ (\sqrt{dh} + \epsilon_{\text{score}}+\sqrt{\eta_k})\sqrt{T}.
    \end{split}
    \end{equation*}}

% \paragraph{Early Stopping} The adaptive diffusion leads to an error $\sqrt{\eta_k}$ by learning $\bA_t^{(k)}$ on the fly, we can conduct \emph{early stopping} when we are content with the forward approximations.

\section{Variational Gap} 

Recall that the optimal and variational forward SDEs follow
\begin{align*}
    {\dd  \overrightarrow\bx_t}&={\left[\bbf_t(\overrightarrow\bx_t) +  \beta_t\nabla\log\overrightarrow\psi_t(\overrightarrow\bx_t)\right]\dd t}+ \sqrt{\beta_t} \dd  \overrightarrow\bw_t, \\
    \dd \overrightarrow\bx_t&=\left[\bbf_t(\overrightarrow\bx_t) +  {\beta_t} \bA_t^{(k)}\overrightarrow\bx_t\right]\dd t+\sqrt{\beta_t} \dd \overrightarrow\bw_t,\\
    \dd \overrightarrow\bx_t&=\left[\bbf_t(\overrightarrow\bx_t) +  {\beta_t} \bA_t^{\star}\overrightarrow\bx_t\right]\dd t+\sqrt{\beta_t} \dd \overrightarrow\bw_t,
\end{align*}
where we abuse the notion of $\overrightarrow\bx_t$ for the sake of clarity and they represent three different processes. Despite the improved efficiency based on the ideal $\bA_t^{\star}$ compared to the vanilla $\bA_t\equiv\bm{0}$, the variational score inevitably yields a sub-optimal transport in general nonlinear transport. We denote the law of the above processes by $\mathrm{L}$, $ {\mathrm{L}}^{(k)}$, and $\mathrm{L}^{\star}$. %, and  \yt{$L, L^{(k)}$ and $L^*$ in order?}Wei: nice catch, fixed., respectively. 
To assess the disparity, we leverage the Girsanov theorem to study the variational gap.

\paragraph{Theorem 4}(Variational gap). \emph{Assume the assumption \ref{ass_smoothness} and Novikov’s condition hold. Assume $\bbf_t$ and $\nabla\log\overrightarrow\psi_t$ are Lipschitz smooth and satisfy the linear growth. The variational gap follows that
\begin{equation*}
\begin{split}
    &\mathrm{KL}({\mathrm{L}}\|{\mathrm{L}}^{\star})=\frac{1}{2} \int_0^T \E\bigg[\beta_t \|\bA_t^{\star}\overrightarrow\bx_t - \nabla\log\overrightarrow\psi_t(\overrightarrow\bx_t)\|_2^2\bigg]\dd t\\
    &\mathrm{KL}({\mathrm{L}}\| {\mathrm{L}}^{(k)}) \lesssim \eta_k + \mathrm{KL}({\mathrm{L}}\|{\mathrm{L}}^{\star}).
\end{split}
\end{equation*}}
% \yt{In the first equation, $\overrightarrow\bx_t$ is sampled from $L$. We need to specify it as in Theorem 4. Also, from this theorem, can we derive that the transport cost defined in~\eqref{control_diffusion} of our $L^{(k)}$ is close to that of $L$? As we are more interested in reducing the cost, instead of control the KL divergence between $L^{(k)}$ and $L$.}

\paragraph{Connections to Gaussian Schr\"odinger bridge (GSB)}

When data follows a Gaussian distribution, VSDM approximates the closed-form OT solution of Schr\"odinger bridge \citep{EOT_gaussian, SB_closed_form}. We refer readers to Theorem 3  \citep{SB_closed_form} for the detailed transportation plans. Compared to the vanilla $\bA_t\equiv \bm{0}$, we can significantly reduce the variational gap with $\mathrm{KL}({\mathrm{L}}\|{\mathrm{L}}^{\star})$ using proper parametrization and sufficient training.

% \vspace{-0.1in}
We briefly compare VSDM to SGM and SB in the following:
\begin{table}[h]
\vspace{-0.2in}
\begin{tabular}{ c | c | c | c }
\label{VSDM_SGM_SB}
\footnotesize
Properties & SGM & SB & VSDM \\ 
\hline
\footnotesize{Entropic Optimal Transport} & $\color{red}\times$  &  \footnotesize{\textcolor{darkblue}{Optimal}}  &  \footnotesize{\textcolor{darkblue}{Sub-Optimal}} \\ 
\hline
 {Simulation-free Forward} &  \color{green}\checkmark  &  $\color{red}\times$ &  \color{green}\checkmark   \\
\end{tabular}
\end{table}

\section{Empirical Studies}

\subsection{Comparison to Gaussian Schrodinger Bridge}

VSDM is approximating GSB \citep{SB_closed_form} when both marginals are Gaussian distributions. To evaluate the solutions, we run our VSDM with a fixed $\beta_t\equiv 4$ in Eq.(25) in \citet{score_sde} and use the same marginals to replicate the VPSDE of the Gaussian SB with $\alpha_t\equiv 0$ and $c_t\equiv -2$ in Eq.(7) in \citet{SB_closed_form}. We train VSDM with 20 stages and randomly pick 256 samples for presentation. We compare the flow trajectories from both models and observe in Figure \ref{vsdm_vs_dsm} that the ground truth solution forms an almost linear path, while our VSDM sample trajectories exhibit a consistent alignment with trajectories from Gaussian SB. We attribute the bias predominantly to score estimations and numerical discretization.

\begin{figure}[!ht]
  \centering
  \vspace{-0.1in}
  \subfigure[{GSB}]{\includegraphics[scale=0.33]{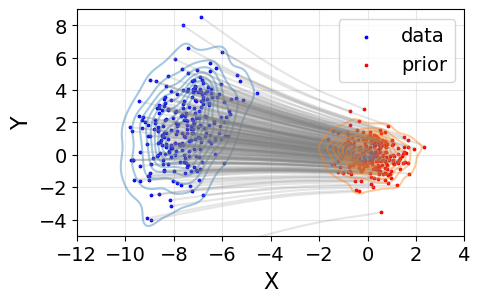}}
  \subfigure[{VSDM}]{\includegraphics[scale=0.33]{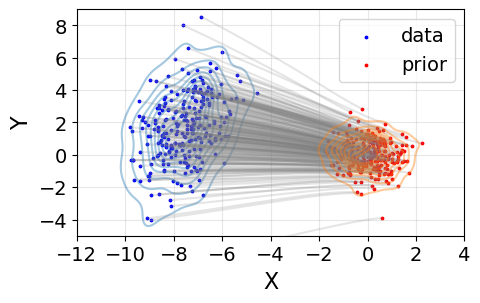}} 
  \vskip -0.1in
  \caption{Gaussian SB (GSB) v.s. VSDM on the flow trajectories.}\label{vsdm_vs_dsm}
  \vspace{-1em}
\end{figure}

\subsection{Synthetic Data}

We test our variational Schr\"odinger diffusion models ({VSDMs}) on two synthetic datasets: spiral and checkerboard (detailed in section \ref{Checkerboard_data}). We include SGMs as the baseline models and aim to show the strength of VSDMs on general shapes with straighter trajectories. As such, we stretch the Y-axis of the spiral data by 8 times and the X-axis of the checkerboard data by 6 times and denote them by spiral-8Y and checkerboard-6X, respectively.

We adopt a monotone increasing $\{\beta_{nh}\}_{n=0}^{N-1}$ similar to \citet{score_sde} and denote by $\beta_{\min}$ and $\beta_{\max}$ the minimum and maximum of $\{\beta_{nh}\}_{n=0}^{N-1}$. We fix $\zeta=0.75$ and $\beta_{\min}=0.1$ and we focus on the study with different $\beta_{\max}$. We find that SGMs work pretty well with $\beta_{\max}=10$ (SGM-10) on standard isotropic shapes. However, when it comes to spiral-8Y, the SGM-10 \textcolor{darkred}{\emph{struggles to recover}} the boundary regions on the spiral-8Y data as shown in Figure \ref{VSDM_SGM} (top).

\paragraph{Generations of Anisotropic Shapes} To illustrate the effectiveness of our approach, Figure \ref{VSDM_SGM} (bottom) shows that VSDM-10 accurately reconstructs the edges of the spiral and generates high-quality samples.

\begin{figure}[!ht]
  \centering
  % \vspace{-0.05in}
    \subfigure{\includegraphics[scale=0.13]{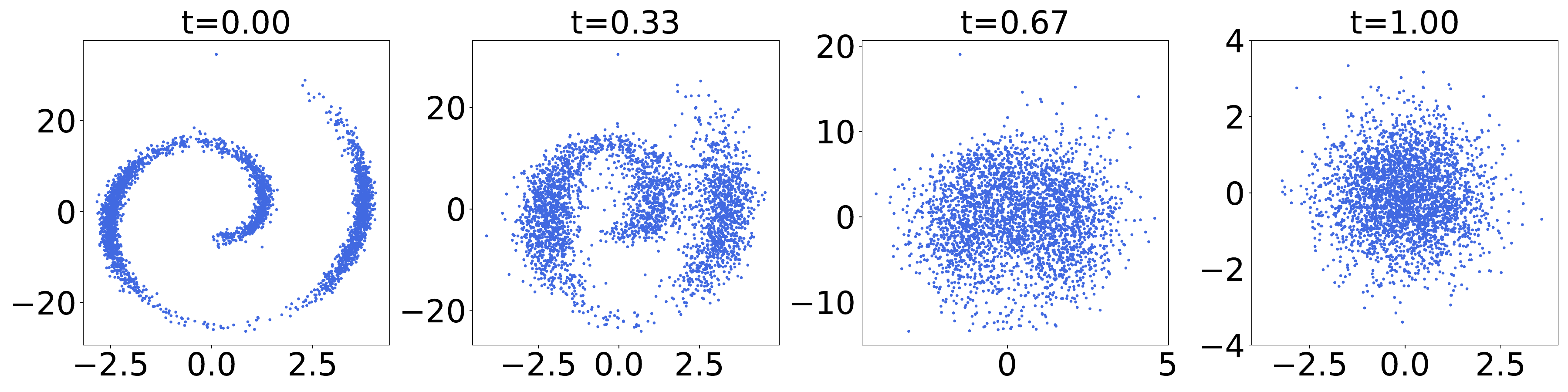}}
  \vspace{-0.1in}
    \subfigure{\includegraphics[scale=0.13]{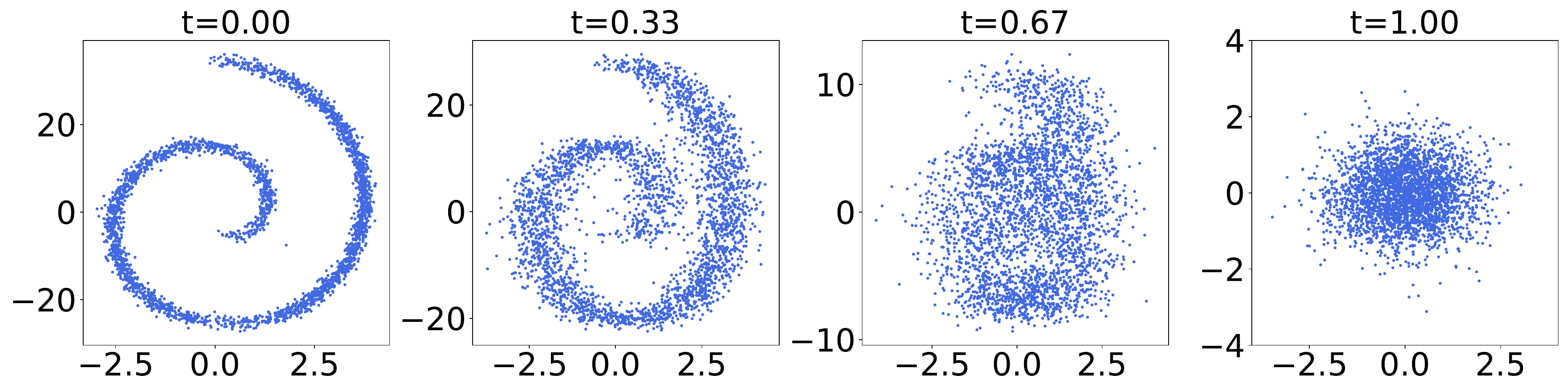}}
    % \vspace{-0.05in}
  \caption{Variational Schr\"odinger diffusion models (VSDMs, bottom) v.s. SGMs (top) with the same hyperparameters ($\beta_{\max}=10$).}\label{VSDM_SGM}
  \vspace{-1em}
\end{figure}

\paragraph{Straighter Trajectories} The SGM-10 fails to fully generate the anisotropic spiral-8Y and increasing $\beta_{\max}$ to 20 or 30 (SGM-20 and SGM-30) signiﬁcantly alleviates this issue. However, we observe that \textcolor{darkred}{excessive $\beta_{\max}$ values in SGMs compromise the straightness} and leads to inefficient transport, especially in the X-axis of spiral-8Y. 

\begin{figure}[!ht]
  \centering
    \vspace{-0.07in}
  \subfigure[\small{SGM-10}]{\includegraphics[scale=0.15]{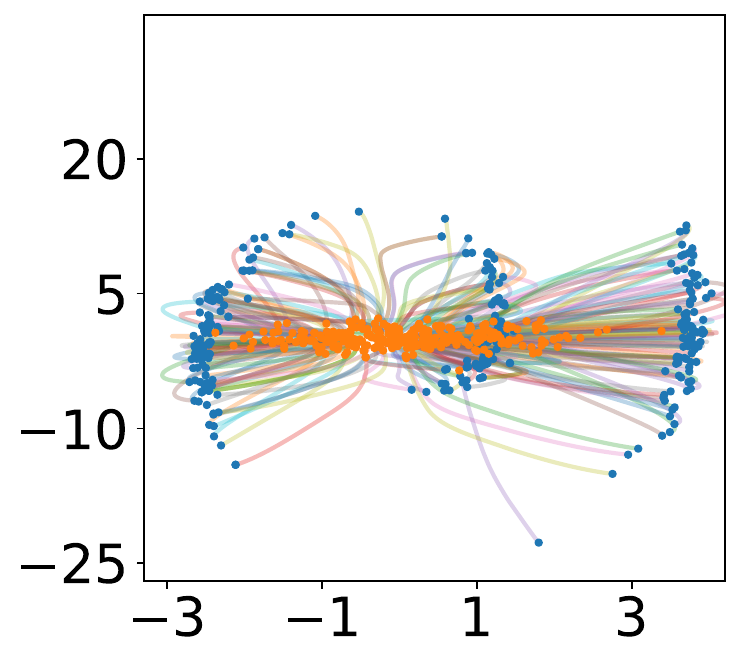}}
  \subfigure[\small{SGM-20}]{\includegraphics[scale=0.15]{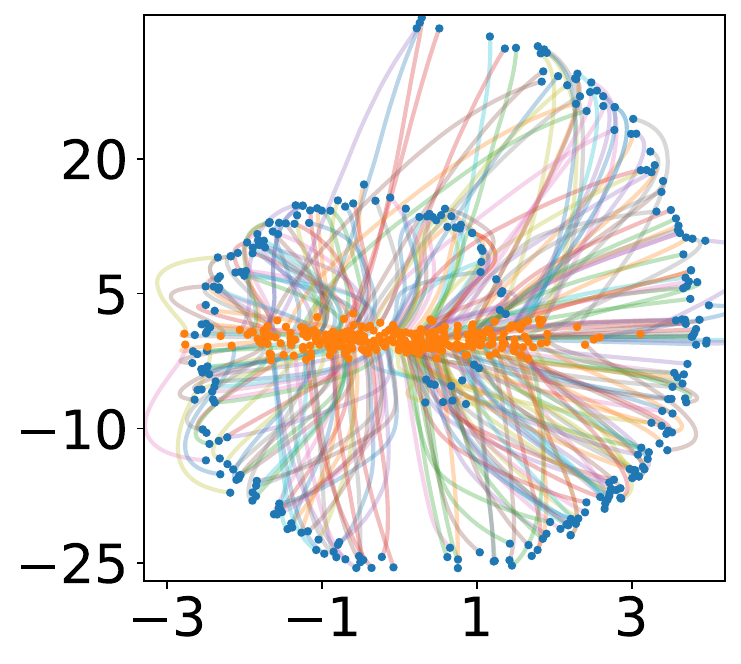}}
    \subfigure[\small{SGM-30}]{\includegraphics[scale=0.15]{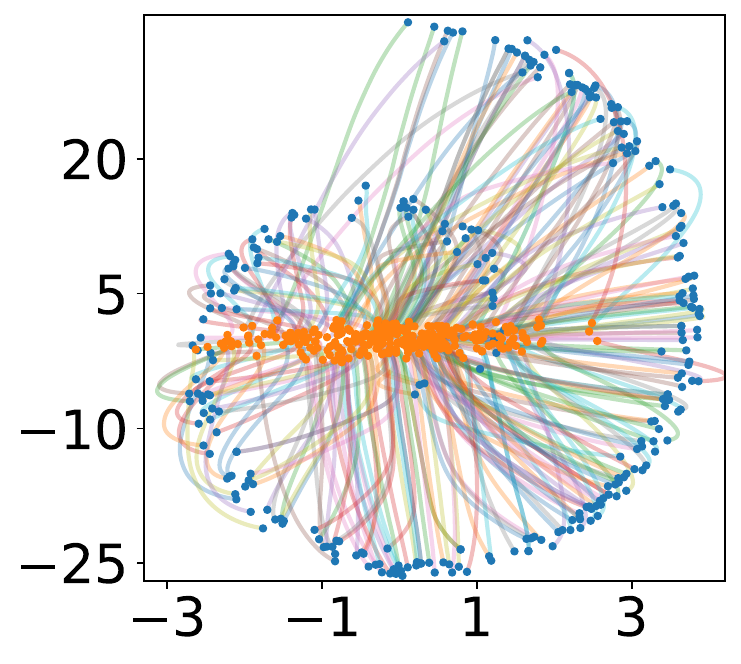}}
  \subfigure[\small{VSDM-10}]{\includegraphics[scale=0.15]{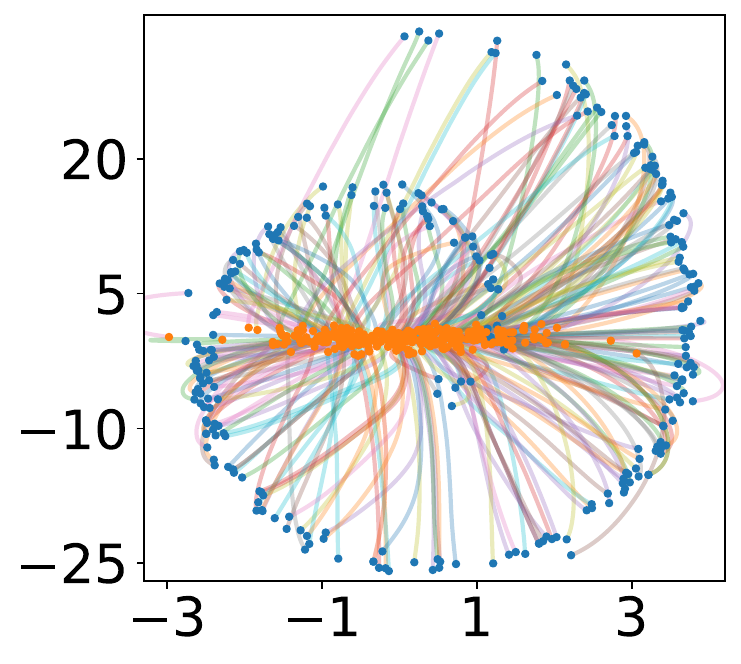}} \vskip -0.1in
  \caption{Probability flow ODE via VSDMs and SGMs. SGM with $\beta_{\max}=10$ is denoted by SGM-10 for convenience.}\label{trajectories_dynamics}
  \vspace{-0.5em}
\end{figure}

Instead of setting excessive $\beta_{\max}$ on both axes, our VSDM-10, by contrast, proposes \textcolor{darkblue}{conservative diffusion scales} on the X-axis of spiral-8Y and explores more on the Y-axis of spiral-8Y. As such, we obtain around \textcolor{dark2blue}{40\% improvement on the straightness} in Figure \ref{trajectories_dynamics} and Table \ref{straight_metric}.

Additional insights into a similar analysis of the checkboard dataset, convergence analysis, computational time, assessments of straightness, and evaluations via a smaller number of function evaluations (NFEs) can be found in Appendix \ref{syn_appendix}.

\subsection{Image Data Modeling}\label{image_gen}

\begin{figure}
\label{fig:c10_gen}
\centering
\includegraphics[width=0.3\textwidth]{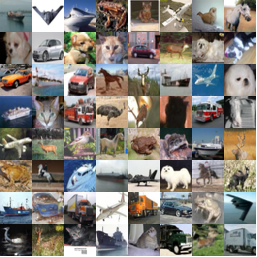}
\vspace{-2mm}
\caption{Unconditional generated samples from VSDM on CIFAR10 (32$\times 32$ resolution) trained from scratch.}
\vspace{-2mm}
\end{figure}

\begin{table*}[!t]
\vspace{-5pt} 
\small
\begin{sc}
\caption{Convergence speed of FID values for VSDM.} \label{tab:c10_converge}
\begin{center}
\begin{scriptsize}
\resizebox{0.95\textwidth}{!}{
\begin{tabular}{ccccccccccc}
\toprule
\text{K Images}
& 0 & 10k & 20k & 30k & 40k & 50k & 100k & 150k & 200k & \text{converge} \\
\midrule
\textbf{FID$\downarrow$ (NFE=35)} & 406.13 & 13.13 & 8.65 & 6.83 & 5.66 & 5.21 & 3.62 & 3.29 & 3.01 & 2.28 \\
\bottomrule
\end{tabular}}
\end{scriptsize}
\end{center}
\vspace{-5pt} 
\end{sc}
\end{table*}

\paragraph{Experiment Setup}
In this experiment, we evaluate the performance of VSDM on image modeling tasks. We choose the CIFAR10 dataset% \citep{Krizhevsky2009LearningML} 
as representative image data to demonstrate the scalability of the proposed VSDM on generative modeling of high-dimensional distributions. We refer to the code base of FB-SDE \citep{forward_backward_SDE} and use the same forward diffusion process of the EDM model \citep{EDM}. Since the training of VSDM is an alternative manner between forward and backward training, we build our implementations based on the open-source diffusion distillation code base \citep{luo2024diff} \footnote{\scriptsize{See code in \url{https://github.com/pkulwj1994/diff_instruct}}}, which provides a high-quality empirical implementation of alternative training with EDM model on CIFAR10 data. To make the VSDM algorithm stable, we simplify the matrix $\bD_t$ to be diagonal with learnable diagonal elements, which is the case as we introduced in Corollary \ref{corr:diagonal}. We train the VSDM model from scratch on two NVIDIA A100-80G GPUs for two days and generate images from the trained VSDM with the Euler–Maruyama numerical solver with 200 discretized steps for generation.

\vspace{-0.1in}
\begin{table}[h]
  \small
    \centering
    \begin{sc}
    \caption{
      CIFAR10 evaluation using sample quality (FID score). Our VSDM outperforms other optimal transport baselines by a large margin.
    }
      % \vskip -0.05in
      \begin{tabular}{llcccccc}
        \toprule
        {Class} & {Method}  & {FID $\downarrow$} \\
        \midrule
        \multirow{4}{*}{OT}
        & \textbf{VSDM (ours)}          &  \textbf{2.28} \\[1pt]
        & SB-FBSDE \citep{forward_backward_SDE}          &  3.01 \\[1pt]
        & DOT \citep{Tanaka2019DiscriminatorOT}  & 15.78 \\[1pt]
        % & Multi-stage SB \citep{Wang2021DeepGL}  & 12.32 \\[1pt] % this one looks fairly week
        & DGflow \citep{Ansari2020RefiningDG}    &  9.63 \\[1pt]
        \midrule
        \multirow{5}{*}{SGMs}
        & SDE (\citet{score_sde})         & 2.92  \\[1pt] % maybe remove deep ?
        & ScoreFlow \citep{song_likelihood_training}             & 5.7  \\[1pt]
        & VDM \citep{Kingma2021VariationalDM}             & 4.00  \\[1pt]
        & LSGM\citep{vahdat2021score}                   & 2.10 \\[1pt]
        & EDM\citep{EDM}                   &\textbf{1.97} \\[1pt]
        \bottomrule
      \end{tabular} \label{table:c10_fid}
\end{sc}
\vskip -0.15in
\end{table}

\paragraph{Performances.}
We measure the generative performances in terms of the Fretchat Inception Score (FID \citep{Heusel2017GANsTB}, the lower the better), which is a widely used metric for evaluating generative modeling performances.

Tables \ref{table:c10_fid} summarize the FID values of VSDM along with other optimal-transport-based and score-based generative models on the CIFAR10 datasets (unconditional without labels).  The VSDM outperforms other optimal transport-based models with an \textcolor{dark2blue}{FID of 2.28}. This demonstrates that the VSDM has applicable scalability to model high-dimensional distributions. Figure \ref{fig:c10_gen} shows some non-cherry-picked unconditional generated samples from VSDM trained on the CIFAR10 dataset.

\paragraph{Convergence Speed.}
To demonstrate the convergence speed of VSDM along training processes, we record the FID values in Table \ref{tab:c10_converge} for a training trail with no warmup on CIFAR10 datasets (unconditional). We use a batch size of 256 and a learning rate of $1e-4$. We use the 2nd-order Heun numerical solver to sample. The result shows that VSDM has a smooth convergence performance.

\subsection{Time Series Forecasting}

We use multivariate probabilistic forecasting as a real-world \emph{conditional} modeling task. Let $\{ (t_1, \bx_1), \dots, (t_n, \bx_n) \}$, $\bx \in \mathbb{R}^d$, denote a single multivariate time series. Given a dataset of such time series we want to predict the next $P$ values $\bx_{n+1}, \dots, \bx_{n + P}$. In probabilistic modeling, we want to generate forecasts from learned $p(\bx_{n+1:n + P} | \bx_{1:n} )$.

The usual approach is to have an encoder that represents a sequence $\bx_{1:i}$ with a fixed-sized vector $\bm{h}_i \in \mathbb{R}^h$, $\forall i$, and then parameterize the output distribution $p(\bx_{i+1} | \bm{h}_i)$. At inference time we encode the history into $\bm{h}_n$ and sample the next value from $p(\bx_{n+1} | \bm{h}_n)$, then use $\bx_{n+1}$ to get the updated $\bm{h}_{n+1}$ and repeat until we obtain $\bx_{n + P}$.

% , normalizing flows \cite{rasul2020multivariate},
In the previous works, the output distribution has been specified with a Copulas \cite{salinas2019high} and denoising diffusion \cite{rasul2021autoregressive}. We augment our approach to allow conditional generation which requires only changing the model to include the conditioning vector $\bm{h}_i$. For that we adopt the U-Net architecture. %\cite{ronneberger2015u}. 
We use the LSTM neural network as a sequence encoder.

We use three real-world datasets, as described in Appendix~\ref{app:forecasting}. We compare to the SGM and the denoising diffusion approach from \citet{rasul2021autoregressive} which we refer to as DDPM. Table~\ref{tab:forecasting_results} shows that our method matches or outperforms the competitors. Figure \ref{fig:ts_demo} is a demo for conditional time series generation and more details are presented in Figure~\ref{fig:electricity} to demonstrate the quality of the forecasts.

\begin{table}[]
\begin{sc}
    \centering
    \caption[Table caption text]{\small{Forecasting results (lower is better).}}
    \begin{tabular}{c|ccc}
         \hline
         \scriptsize{CRPS-sum} & \scriptsize{Electricity} & \scriptsize{Exchange rate} & \scriptsize{Solar} \\
         \hline
         \scriptsize{DDPM}       & \textbf{\scriptsize{0.026$\pm$0.007}} & \scriptsize{0.012$\pm$0.001} & \scriptsize{0.506$\pm$0.058} \\
         \scriptsize{SGM}        & \scriptsize{0.045$\pm$0.005} & \scriptsize{0.012$\pm$0.002} & \scriptsize{0.413$\pm$0.045} \\
         \scriptsize{VSDM (our)} & \scriptsize{0.038$\pm$0.006} & \textbf{\scriptsize{0.008$\pm$0.002}} & \textbf{\scriptsize{0.395$\pm$0.011}} \\
         \hline
    \end{tabular}
    \label{tab:forecasting_results}
\end{sc}
\end{table}

\vspace{-0.1in}
\begin{figure}
\centering
% \vspace{-4mm}
\includegraphics[width=0.4\textwidth]{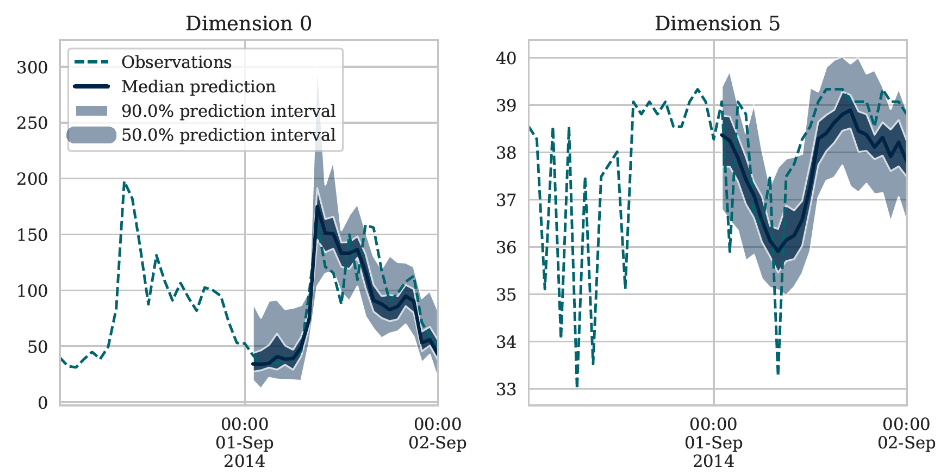}
\vspace{-2mm}
\caption{Example for Electricity for 2 (out of 370) dimensions.}
\label{fig:ts_demo}
\vspace{-2mm}
\end{figure}

\vskip -0.5in
\section{Conclusions and Future Works}

The Schr\"odinger bridge diffusion model offers a principled approach to solving optimal transport, but estimating the intractable forward score relies on implicit training through costly simulated trajectories. To address this scalability issue, we present the variational Schr\"odinger diffusion model (VSDM), utilizing linear variational forward scores for simulation-free training of backward score functions. Theoretical foundations leverage stochastic approximation theory, demonstrating the convergence of variational scores to local equilibrium and highlighting the variational gap in optimal transport. Empirically, VSDM showcases 
the strength of generating data with anisotropic shapes and yielding the desired straighter transport paths for reducing the number of functional evaluations. VSDM also exhibits scalability in handling large-scale image datasets without requiring warm-up initializations. In future research, we aim to explore the critically damped (momentum) acceleration \citep{SGM_damped} and Hessian approximations to develop the ``ADAM'' alternative of diffusion models.

\section*{Acknowledgements}

We thank Valentin De Bortoli, Tianyang Hu, and the anonymous reviewers for their valuable insights, and Anderson Schneider for his careful proofreading and help with typos.

\section*{Impact Statement}

This paper proposed a principled approach to accelerate the training and sampling of generative models using optimal transport. This work will contribute to developing text-to-image generation, artwork creation, and product design. However, it may also raise challenges in the fake-content generation and pose a threat to online privacy and security.

% \newpage
\bibliography{mybib, other}
\bibliographystyle{icml2024}

\newpage
\onecolumn
\appendix

\begin{Large}
\begin{center}
    \textbf{Supplementary Material for \textbf{``Variational Schr\"odinger Diffusion Models''}}
\end{center}
\end{Large}
$\newline$
In section \ref{diagonal_matrix}, we study the closed-form expression of matrix exponential for diagonal and time-invariant $\bD_t$; In section \ref{app:SA_process}, we study the convergence of the adaptive diffusion process; In section \ref{app:variational_gap}, we study the variational gap of the optimal transport and discuss its connections to Gaussian Schr\"odinger bridge; In section \ref{supp_exp_details}, we present more details on the empirical experiments.

\paragraph{Notations:} $\mathcal{X}$ is the state space for the data $\bx$; $\overleftarrow\bx^{(k)}_{nh}$ is the $n$-th backward sampling step with a learning rate $h$ at the $k$-th stage. $\eta_k$ is the step size to optimize $\bA$. $\mathcal{A}$ is the (latent) state space of $\bA$; $\bA_t^{(k)}$ is the forward linear score estimator at stage $k$ and time $t$, $\bA_t^{\star}$ is the equilibrium  of Eq.\eqref{mean_field_perturbed} at time $t$. $\nabla\overrightarrow{\mathcal{L}}_t$ is the random field in the stochastic approximation process and also the loss \eqref{VSDMf} at time $t$; $\nabla \overrightarrow{\bL}_t$ is the mean field with the equilibrium $\bA_t^{\star}$. Given a fixed $\bA_t^{(k)}$ at step $k$, $\nabla \log  \overrightarrow{\rho}_{t}^{(k)}$ (resp. $\nabla \log  \overrightarrow{\rho}_{t|0}^{(k)}$) is the (resp. conditional) forward score function of Eq.\eqref{mSGM-forward} at time $t$ and step $k$; $\bA_t^{(k)}$ yields the approximated score function $s_{t}^{(k)}$ and $\overleftarrow{\rho}_{t}^{(k)}$ is the distribution of the continuous-time interpolation of the discretized backward SDE \eqref{backward_DSM_discrete}.

\section{Closed-form Expression with Diagonal and Time-Invariant $\bD_t$}\label{diagonal_matrix}

In this section, we give the proof of Corollary \ref{corr:diagonal}. 

\begin{proof}
    Denote $\bD_t =\bm{\Lambda} \coloneqq \operatorname{diag}(\bm{\lambda})$, where $\lambda_i \geq 0,~ \forall 1 \leq i \leq d$, and $\sigma_t^2 \coloneqq \int_0^t \beta_s \mathrm{d}s$, then by Eq. \eqref{cov_dynamics}, we have 
    \begin{align*}
        \begin{pmatrix}
        \bC_t \\
        \bH_t
      \end{pmatrix} &=
      \exp\Bigg[
      \begin{pmatrix}
        -\frac{1}{2}[\beta\bD]_t & [\beta\bI]_t \\
        \bm{0} &  \frac{1}{2}[\beta\bD^{\intercal}]_t
      \end{pmatrix}
      \Bigg]
            \begin{pmatrix}
              {\bSigma}_0 \\
              {\bI} 
            \end{pmatrix} = \exp(\bM_t)\begin{pmatrix}
              {\bSigma}_0 \\
              {\bI}
            \end{pmatrix}.
    \end{align*}
Here $[\beta\bD]_t = \int_{0}^t \beta_s \bD_t \mathrm{d}s = \sigma_t^2 \bm{\Lambda}$. The matrix $\bM_t$ is defined as 
\begin{align*}
    \bM_t = \begin{pmatrix}
        -\frac{1}{2}\sigma_t^2 \bm{\Lambda} & \sigma_t^2\mathbf{I}\\
        \bm{0} &  \frac{1}{2}\sigma_t^2 \bm{\Lambda}
      \end{pmatrix}
\end{align*}
Therefore, we have 
\begin{align*}
&\bM_t^2 = \begin{pmatrix}
        (-\frac{1}{2}\sigma_t^2 \bm{\Lambda})^2 & \bm{0}\\
        \bm{0} &  (\frac{1}{2}\sigma_t^2 \bm{\Lambda})^2
      \end{pmatrix},~~ 
      \bM_t^3 = \begin{pmatrix}
        (-\frac{1}{2}\sigma_t^2 \bm{\Lambda})^3 & \sigma_t^2 (\frac{1}{2}\sigma_t^2 \bm{\Lambda})^2\\
        \bm{0} &  (\frac{1}{2}\sigma_t^2 \bm{\Lambda})^3
      \end{pmatrix},~~ \\
      &\bM_t^4 = \begin{pmatrix}
        (-\frac{1}{2}\sigma_t^2 \bm{\Lambda})^4 & \bm{0}\\
        \bm{0} &  (\frac{1}{2}\sigma_t^2 \bm{\Lambda})^4,~~
      \end{pmatrix},~~ 
      \bM_t^5 = \begin{pmatrix}
        (-\frac{1}{2}\sigma_t^2 \bm{\Lambda})^5 & \sigma_t^2 (\frac{1}{2}\sigma_t^2 \bm{\Lambda})^4\\
        \bm{0} &  (\frac{1}{2}\sigma_t^2 \bm{\Lambda})^5
      \end{pmatrix},~~ ...
\end{align*}
According to the definition of matrix exponential, we have
\begin{align*}
    \exp(\bM_t) & = [\mathbf{I} + \frac{1}{1!}\bM_t + \frac{1}{2!} \bM_t^2 + \frac{1}{3!}\bM_t^3 + ...] \\
    & = \begin{pmatrix}
        \exp(-\frac{1}{2}\sigma_t^2 \bm{\Lambda}) & \Big[ \sigma_t^2\mathbf{I} + \frac{1}{3!}\sigma_t^2(\frac{1}{2}\sigma_t^2 \bm{\Lambda})^2 + \frac{1}{5!}(\frac{1}{2}\sigma_t^2 \bm{\Lambda})^4 + ...\Big]\\
        \bm{0} &  \exp(\frac{1}{2}\sigma_t^2 \bm{\Lambda})
      \end{pmatrix}\\
      & = \begin{pmatrix}
        \exp(-\frac{1}{2}\sigma_t^2 \bm{\Lambda}) & \frac{\sigma_t^2}{\frac{1}{2}\sigma_t^2\bm{\Lambda}}\Big[ (\frac{1}{2}\sigma_t^2 \bm{\Lambda})^1 + \frac{1}{3!}\sigma_t^2(\frac{1}{2}\sigma_t^2 \bm{\Lambda})^3 + \frac{1}{5!}(\frac{1}{2}\sigma_t^2 \bm{\Lambda})^5 + ... \Big]\\
        \bm{0} &  \exp(\frac{1}{2}\sigma_t^2 \bm{\Lambda})
      \end{pmatrix}\\
      & = \begin{pmatrix}
        \exp(-\frac{1}{2}\sigma_t^2 \bm{\Lambda}) & \bm{\Lambda}^{-1}\Big[ \exp(\frac{1}{2}\sigma_t^2 \bm{\Lambda}) - \exp(-\frac{1}{2}\sigma_t^2 \bm{\Lambda}) \Big]\\
        \bm{0} &  \exp(\frac{1}{2}\sigma_t^2 \bm{\Lambda})
      \end{pmatrix}.\\
\end{align*}
Notice that, when we have ${\bSigma}_0=\bm{0}$, the expression can be simplified as follows
\begin{align*}
        \begin{pmatrix}
        \bC_t \\
        \bH_t
      \end{pmatrix} & = \exp(\bM_t)\begin{pmatrix}
              \bm{0} \\
              {\bI} 
            \end{pmatrix} = \begin{pmatrix}
              \bm{\Lambda}^{-1}\Big[ \exp(\frac{1}{2}\sigma_t^2 \bm{\Lambda}) - \exp(-\frac{1}{2}\sigma_t^2 \bm{\Lambda}) \Big] \\
              \exp(\frac{1}{2}\sigma_t^2 \bm{\Lambda})
            \end{pmatrix}.
\end{align*}
Therefore, $\bC_t\bH_t^{-1} = \bm{\Lambda}^{-1} \big\{ \mathbf{I} - \exp (-\sigma_t^2 \bm{\Lambda})\big\}$. As a result, the corresponding forward transition writes 
\begin{align*}
    \bm{\mu}_{t|0} &= \exp (-\frac{1}{2}\sigma_t^2 \bm{\Lambda})\bx_0\\
    \bL_t &= \sigma_t \bm{\Lambda}^{-\frac{1}{2}}\sqrt{\mathbf{I} - \exp(-\sigma_t^2 \bm{\Lambda})}.
\end{align*}
\end{proof}

\section{Stochastic Approximation}
\label{app:SA_process}

Stochastic approximation (SA), also known as the Robbins–Monro algorithm \citep{RobbinsM1951, Albert90} offers a conventional framework for the study of adaptive algorithms. The stochastic approximation algorithm works by repeating the sampling-optimization iterations in the dynamic setting in terms of simulated trajectories. We present our algorithm in Algorithm \ref{alg:SA_algorithm}.

\begin{algorithm*}
   \caption{The (dynamic) stochastic approximation (SA) algorithm. The (dynamic) SA is a theoretical formulation of Algorithm \ref{VSDM_alg}. We assume optimizing the loss function \eqref{MDM_loss} yields proper score estimations $s^{(k)}_{t}$ at each stage $k$ and time $t$ to approximate $\nabla \log  \overrightarrow{\rho}_{t}^{(k)}(\overrightarrow\bx_t | \overrightarrow\bx_0)$ in Eq.\eqref{multi_backward}.}
   \label{alg:SA_algorithm}
\begin{algorithmic}
\REPEAT
   \STATE{\textbf{Simulation}: Sample the backward process from \eqref{backward_DSM} given a fixed $\bA_{nh}^{(k)}$}
    \begin{equation}
     \overleftarrow\bx^{(k+1)}_{(n-1)h}= \left(-\frac{1}{2}\big(\bI-2\bA_{nh}^{(k)}\big)\beta_{nh} \overleftarrow\bx^{(k+1)}_{nh} - {\beta_{nh}} s^{(k+1)}_{nh}\big(\overleftarrow\bx^{(k+1)}_{nh}\big)\right) h+\sqrt{\beta_{nh} h} \bxi_n,\label{backward_DSM_discrete}
\end{equation}
where $\overleftarrow\bx^{(k+1)h}_{(N-1)}\sim \mathrm{N}(\bm{0}, \bSigma^{(k)}_{(N-1)h|0})$, $n\in \{1,2,\cdots, N-1\}$ and $h$ is the learning rate for the backward sampling \eqref{backward_DSM} via the Euler–Maruyama (EM) discretization. $\bxi_n$ denotes the standard Gaussian vector at the sampling iteration $n$. 
   \STATE{\textbf{Optimization}: } Minimize the implicit forward loss function \eqref{VSDMf}
   \begin{equation}\label{dsa_iterate}
       \bA_{nh}^{(k+1)}=\bA_{nh}^{(k)}-\eta_{k+1} \nabla\overrightarrow{\mathcal{L}}_{nh}(\bA_{nh}^{(k)}; \overleftarrow\bx^{(k+1)}_{nh}),
   \end{equation}
   where $\nabla\overrightarrow{\mathcal{L}}_{nh}(\bA_{nh}^{(k)}; \overleftarrow\bx^{(k+1)}_{nh})$ is the (dynamic) random field and $\eta_k$ is the step size. $n\in\{0, 1, \cdots, N-1\}$.
   \UNTIL{\text{Stage} $k=k_{\max}$}
    \vskip -1 in
\end{algorithmic}
\end{algorithm*}

To facilitate the analysis, we assume we only make a one-step sampling in Eq.\eqref{dsa_iterate}. Note that it is not required in practice and multiple-step extensions can be employed to exploit the cached data more efficiently. The theoretical extension is straightforward and omitted in the proof. We also slightly abuse the notation for convenience and generalize $\bA_{nh}$ to $\bA_{t}$.

Theoretically, the primary objective is to show  the iterates \eqref{dsa_iterate_main} follow
the trajectories of the dynamical system asymptotically:
\begin{equation}
\label{mean_field_perturbed_ode}
    \dd \bA_t = \nabla\overrightarrow{\bL}_t(\bA_t)\dd s,
\end{equation}
where $\nabla \overrightarrow{\bL}_t(\bA_t)$ is the mean field defined as follows:
\begin{equation}
\begin{split}
\label{mean_field_perturbed}
\nabla \overrightarrow{\bL}_t(\bA_t)&=\int_{\MX} \nabla\overrightarrow{\mathcal{L}}_t(\bA_t; \overleftarrow\bx^{(\cdot)}_t)  \overleftarrow{\rho}_{t}(\dd\overleftarrow\bx^{(\cdot)}_t).
\end{split}
\end{equation}

We denote by $\bA_t^{\star}$ the solution of $\nabla \overrightarrow{\bL}_t(\bA_t^{\star})=\bm{0}$. Since the samples simulated from $\overleftarrow{\rho}_{t}$ are slightly biased due to the convergence of forward process, discretization error, and score estimation errors as shown in Theorem \ref{theorem:quality_of_data}. We expect the mean field is also biased with a perturbed equilibrium. However, by the perturbation theory \citep{Eric}, the perturbation is mild and controlled by the errors in Theorem \ref{theorem:quality_of_data}. Hence although $\bA_t^{\star}$ is not the optimal linear solution in terms of optimal transport, it still yields efficient transportation plans.

Since the exclusive (reverse) KL divergence is known to approximate a single mode (denoted by $\bA_t^{\star}$) in fitting multi-modal distributions, we proceed to assume the following regularity conditions for the solution $\bA_t^{\star}$ and the neighborhood of $\bA_t^{\star}$.

\begin{assump}[Regularity]
\label{ass_local_state_space}

(Positive definiteness) For any $t\geq 0$ and $\bA_t \in \mathcal{A}$, there exists a constant $\lambda_{\min}>0$ s.t. $\lambda_{\min} \bI \preccurlyeq \bD_t=\bI - 2\bA_t$, where $\bA\preccurlyeq \bB$ means $\bB-\bA$ is semi positive definite. (Locally strong convexity) For any stable local minimum $\bA_t^{\star}$ with $\nabla \overrightarrow{\bL}_t(\bA_t^{\star})= \bm{0}$, there is always a neighborhood $\bTheta$ s.t. $\bA_t^{\star}\in \bTheta\subset \mathcal{A}$ and $\overrightarrow{\bL}_t$ is strongly convex in $\bTheta$, i.e. there exists fixed constants $M>m>0$ s.t. for $\forall \bA \in \bTheta$, $m\bI \preccurlyeq \frac{\partial^2 \overrightarrow{\bL}_t}{\partial \bA^2}(\bA)\preccurlyeq M \bI$. 
\end{assump}

The first part of the above assumption is standard and can be achieved by an appropriate regularization during the training; the second part only assumes the strong convexity for a small neighborhood $\bTheta$ of the optimum $\bA_t^{\star}$. As such, when conditions for Eq.\eqref{final_sa_convergence} hold, we can apply the induction method to make sure all the subsequent iterates of $\bA_t^{(k)}$ stay in the same region $\bTheta$ and converge to the local minimum $\bA_t^{\star}$. For future works, we aim to explore the connection between $m$ and $\lambda_{\min}$.

Next, we lay out three standard assumptions following \citet{chen2023improved} to conduct our analysis. Similar results are studied by \citet{lee2022convergence, Sitan_22_sampling_is_easy} with different score assumptions.

\begin{assump}[Lipschitz Score]
\label{ass_smoothness}
The score function $\nabla\log  \overrightarrow{\rho}_{t}$ ($\nabla\log  \overrightarrow{\rho}_{t, \bA}$)\footnote{We abstain from using $\nabla\log  \overrightarrow\rho_{t, \bA_t}$ for the sake of clarity. The smoothness w.r.t. $\bA_t$ is only used in Eq.\eqref{closedness}. When its use may lead to confusion elsewhere, we employ the $\nabla\log  \overrightarrow{\rho}_{t, \bA}$ notation.} is $L$-Lipschitz in both $\bx$ and $\bA$ for any $t\in[0, T]$. For any $\bA,\bB\in\mathcal{A}$ and any $\bx, \by \in \mathcal{X}$, we have
\begin{equation*}
\begin{split}
        \|\nabla\log  \overrightarrow{\rho}_{t, \bA}(\bx)-\nabla\log  \overrightarrow{\rho}_{t, \bA}(\by)\|_2 &\leq L \|\bx-\by\|_2 \\
        \|\nabla\log  \overrightarrow{\rho}_{t, \bA}(\bx)-\nabla\log  \overrightarrow{\rho}_{t, \bB}(\by)\|_2 &\leq L \|\bA-\bB\| \\
\end{split}
\end{equation*}
where $\|\cdot\|_2$ is the standard $L^2$ norm and $\|\cdot\|$ is matrix norm.

\end{assump}

\begin{assump}[Second Moment Bound]
\label{ass_moment_bound}
The data distribution has a bounded second moment $\mathfrak{m}_2^2:=\E_{\rho_{\text{data}}}[\|\cdot\|_2^2]<\infty$.
\end{assump}

\begin{assump}[Score Estimation Error]
\label{ass_score_estimation}

For all $t\in[0, T]$, and any $\bA_t$, we have some estimation error . 
$$\E_{\overrightarrow{\rho}_{t}}[\|s_{t}- \nabla \log \overrightarrow{\rho}_{t}\|_2^2]\leq \epsilon_{\text{score}}^2.$$

\end{assump}

We first use the multivariate diffusion to train our score estimators  $\{s^{(k)}_{t}\}_{n=0}^{N-1}$ via the loss function \eqref{MDM_loss} based on the pre-specified $\bA_t^{(k)}$. Following \citet{chen2023improved}, we can show the generated samples based on $\{s^{(k)}_{t}\}_{n=0}^{N-1}$ are close in distribution to the ideal samples in Theorem  \ref{theorem:quality_of_data}. The novelty lies in the extension of \textcolor{dark2blue}{single-variate diffusions} to \textcolor{dark2blue}{multi-variate diffusions}.

Next, we use the stochastic approximation theory to prove the convergence of $\bA_t^{(k)}$ to a local equilibrium $\bA_t^{\star}$ in Theorem \ref{theorem_L2_convergence}. %Then we show the approximated samples yield a perturbed mean field, which is close to the exact mean field integrated by the ideal samples such that $\bA_t^{\star}\approx \bA_t^{\star}$ in Lemma \ref{prop:mean_field_perturbation}. 
In the end, we adapt Theorem  \ref{theorem:quality_of_data} again to show the adaptively generated samples are asymptotically close to the samples based on the optimal $\bA_t^{\star}$ in Theorem \ref{theorem_adaptive_sampling}, which further optimizes the transportation plans through a variational formulation. To facilitate the understanding, we summarize the details as follows
\begin{equation*}
\begin{split}
    &\small{\text{Sample via $\bA_t^{(k)}$}}\qquad\qquad\qquad\text{Random Field} 
    \qquad\qquad\   \text{Mean Field} \qquad\qquad\quad  \text{Convergence of $\bA_t^{(k)}$} \qquad\quad\quad \text{Sample via $\bA_t^{\star}$} \\
    &\footnotesize{\text{$s_{t}^{(k)}$} \xRightarrow[\text{Theorem  \ref{theorem:quality_of_data}}]{\text{Backward Sampling}}\nabla\overrightarrow{\mathcal{L}}_t(\bA_t^{(k)}; \overleftarrow\bx^{(k+1)}_t) 
    \xRightarrow[]{\text{Eq.\eqref{mean_field_perturbed}}}
    \nabla \overrightarrow{\bL}_t(\bA_t^{(k)})
    \xRightarrow[\text{Theorem \ref{theorem_L2_convergence}}]{\text{Convergence }} 
    \bA_t^{(k)}\rightarrow\bA_t^{\star} \xRightarrow[\text{Theorem \ref{theorem_adaptive_sampling}}]{\text{Adaptive Sampling}} \text{$\lim_{k\rightarrow\infty}\overleftarrow\bx_t^{(k+1)}$}}.
\end{split}
\end{equation*}

\paragraph{Proof of Sketch}
\begin{itemize}
    \item \textbf{Part \ref{part_1}}: The generated samples (backward trajectories) approximate the ideal samples from the fixed $\bA_t^{(k)}$.

    \item \textbf{Part \ref{part_2}}: We employ the SA theory to show the convergence $\bA_t^{(k)}$ to the optimal estimator $\bA_t^{\star}$.

    \item\textbf{Part \ref{part_4}}: The adaptively generated samples approximate the ideal samples from the optimal $\bA_t^{\star}$ asymptotically.
\end{itemize}

\subsection{Convergence of Approximated Samples with a Fixed $\bA_t$}
\label{part_1}

The following result is majorly adapted from Theorem 2.1 of \citet{chen2023improved}, where the \textcolor{dark2blue}{single-variate diffusions} are extended to the general \textcolor{dark2blue}{multi-variate diffusions}. 

Recall that the forward samples $\bx_t$ are sampled by~\eqref{mSGM-forward} given a fixed $\bA_t$, we denote the density of $\bx_t$ by $\overrightarrow{\rho}_{t}$ with $\overrightarrow{\rho}_{0} = \rho_{\text{data}}$. To facilitate the proof, we introduce an auxiliary variable $\by_t$ simulated from $\text{\eqref{mSGM-forward}}$ with $\by_0\sim \mathrm{N}(\bm{0}, \bI)$ such that $\by_t$ is always a Gaussian distribution at time $t$ and $\mathrm{KL}(\rho_{\text{data}}\|\mathrm{N}(\bm{0}, \bI))$ is well defined (not applicable to deterministic initializations for $\by_0$). We denote the auxiliary distribution of $\by_t$ at time $t$ by $\overrightarrow{\rho}_{t}^{\circ}$.  For a fixed $T>0$ and score estimations  $s_{t}$, let $\overleftarrow{\rho}_{t}$ be the distribution of the continuous-time interpolation of the discretized backward SDE from $t=T$ to $0$ with $\overleftarrow{\rho}_{T} = \overrightarrow{\rho}_{T}^{\circ}$. Then generation quality is measured by the distance between $\overleftarrow{\rho}_{0}$ and $\rho_{\text{data}}$.

\begin{theorem}[Generation quality]\label{theorem:quality_of_data}
    Assume assumptions \ref{ass_smoothness}, \ref{ass_moment_bound}, and \ref{ass_score_estimation} hold. Given a fixed $\bA_t$ by assumption \ref{ass_local_state_space}, the generated data distribution via the EM discretization of Eq.\eqref{backward_DSM} is close to the data distributions $\rho_{\text{data}}$ such that
    \begin{equation*}
        \mathrm{TV}(\overleftarrow{\rho}_{0}, \rho_{\text{data}})\lesssim \underbrace{\sqrt{\mathrm{KL}(\rho_{\text{data}}\|\gamma^d)} \exp(-T)}_{\text{convergence of forward process}} + \underbrace{(L\sqrt{dh} +  \mathfrak{m}_2 h)\sqrt{T}}_{\text{EM discretization}} + \underbrace{\epsilon_{\text{score}}\sqrt{T}}_{\text{score estimation}},
    \end{equation*}

    where $\gamma^d$ is the standard Gaussian distribution.
\end{theorem}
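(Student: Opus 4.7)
}
The plan is to mirror the strategy of Theorem~2.1 of \citet{chen2023improved}, lifting it from scalar drift to the matrix drift $-\tfrac12 \beta_t \bD_t$. First I would split the target TV by a triangle inequality into three pieces that correspond exactly to the three terms on the right-hand side:
\begin{equation*}
\mathrm{TV}(\overleftarrow{\rho}_{0},\rho_{\text{data}})
\;\le\;\mathrm{TV}(\overleftarrow{\rho}_{0},\widetilde{\rho}_{0})
\;+\;\mathrm{TV}(\widetilde{\rho}_{0},\overrightarrow{\rho}_{0})
\;+\;\mathrm{TV}(\overrightarrow{\rho}_{T},\overrightarrow{\rho}_{T}^{\circ}),
\end{equation*}
where $\widetilde{\rho}_{t}$ is the law of the \emph{exact} time-reversed SDE \eqref{backward_DSM} initialized at $\overrightarrow{\rho}_{T}$ (i.e.\ with the true score and no discretization), and the last term is pushed back to time $T$ via the data-processing inequality for the backward channel driven by the true score. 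After Pinsker, each piece will be handled in KL.

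For the initialization piece, I would use the fact that for a fixed positive-definite $\bD_t$ the forward process \eqref{mSGM-forward} is a multivariate OU whose transition has mean $e^{-\frac12 [\beta\bD]_t}\bx_0$ and covariance $\bSigma_{t|0}$ (as computed in Section~\ref{closed_form_score}). Writing $[\beta\bD]_T = \int_0^T \beta_s\bD_s\,\dd s$, Assumption~\ref{ass_local_state_space} gives $[\beta\bD]_T \succcurlyeq \lambda_{\min}\int_0^T\beta_s\,\dd s \cdot \bI$. The standard KL contraction for OU (a multivariate log-Sobolev / data-processing argument started at $\rho_{\text{data}}$ versus the auxiliary $\overrightarrow{\rho}_{t}^{\circ}$ started at $\gamma^d$) then yields
\begin{equation*}
\mathrm{KL}(\overrightarrow{\rho}_T\,\|\,\overrightarrow{\rho}_T^{\circ})
\;\le\; e^{-\lambda_{\min}\int_0^T\beta_s\,\dd s}\,\mathrm{KL}(\rho_{\text{data}}\,\|\,\gamma^d),
\end{equation*}
absorbing $\lambda_{\min}$ into the $\lesssim$ and giving the $\exp(-T)$ factor.

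For the combined discretization and score-approximation piece, I would invoke Girsanov's theorem on the two path measures over $[0,T]$, one driven by the exact score at the continuous-time interpolation $\overleftarrow\bx_t$, the other driven by $s_t$ evaluated at the last discretization node. The chain rule for KL plus data processing then reduces $\mathrm{KL}(\widetilde{\rho}_0\,\|\,\overleftarrow{\rho}_0)$ to a single integral
\begin{equation*}
\tfrac12\int_0^T \beta_t\,\E\!\left[\bigl\|\nabla\log\overrightarrow{\rho}_t(\bx_t)-s_{kh}(\bx_{kh})\bigr\|_2^2\right]\dd t,
\end{equation*}
which I would further split into a score-error part, controlled by Assumption~\ref{ass_score_estimation} and giving $\epsilon_{\text{score}}^2 T$, and a discretization part of the form $\E\|\nabla\log\overrightarrow{\rho}_t(\bx_t)-\nabla\log\overrightarrow{\rho}_t(\bx_{kh})\|^2$. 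The latter is bounded by $L^2 \E\|\bx_t-\bx_{kh}\|^2$ using the Lipschitz Assumption~\ref{ass_smoothness}; a short Itô computation on \eqref{mSGM-forward} with $\|\bD_t\|$ and $\beta_t$ bounded, combined with the second-moment bound (Assumption~\ref{ass_moment_bound}) propagated through the closed-form mean/covariance, yields $\E\|\bx_t-\bx_{kh}\|^2 \lesssim \beta_t(dh + \mathfrak{m}_2^2 h^2)$, contributing the $(L\sqrt{dh}+\mathfrak{m}_2 h)\sqrt{T}$ term after Pinsker and a square-root.

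The main obstacle I anticipate is the Girsanov / one-step interpolation step: in the scalar case of \citet{chen2023improved} the diffusion matrix commutes with itself at all times, whereas here the drift $-\tfrac12\beta_t\bD_t\bx$ produces a matrix exponential $e^{-\frac12[\beta\bD]_t}$ that does not in general commute with $\bD_s$ for $s\ne t$. I would sidestep this by working with the Cholesky factor $\bL_t$ from \eqref{cov_dynamics} (so that $\bx_t = \bmu_{t|0}+\bL_t\bepsilon$ is exact) and bounding operator norms of $\bL_t$, $\bmu_{t|0}$ via the positive-definiteness in Assumption~\ref{ass_local_state_space}; all scalar conditionings $\|\cdot\|_2$, $\|\cdot\|^2$ in \citet{chen2023improved} then carry over with $\bI$ replaced by $\bD_t$ up to dimension-free constants that hide in $\lesssim$. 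Finally, assembling the three KL bounds, applying Pinsker, and using $\sqrt{a+b}\le\sqrt a+\sqrt b$ yields the stated TV inequality.
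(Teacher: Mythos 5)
Your proposal follows essentially the same route as the paper: decompose the error into a forward-convergence term, a discretization term, and a score-error term (the paper uses the KL chain rule, you use a TV triangle inequality, but these are equivalent here since your middle term $\mathrm{TV}(\widetilde\rho_0,\overrightarrow\rho_0)$ vanishes by exact time reversal); control the forward term by a log-Sobolev contraction argument; and defer the backward discretization/score piece to Girsanov and the Lipschitz assumption, exactly as in Theorem~2.1 of \citet{chen2023improved}.

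One place you are too quick is the "standard KL contraction for OU" step. You write $\mathrm{KL}(\overrightarrow\rho_T\|\overrightarrow\rho_T^{\circ})\le e^{-\lambda_{\min}\int_0^T\beta_s\,\dd s}\,\mathrm{KL}(\rho_{\text{data}}\|\gamma^d)$, but exponential KL decay along a Fokker--Planck flow toward a \emph{time-varying} reference $\overrightarrow\rho_t^{\circ}$ requires a uniform lower bound on the log-Sobolev constant of $\overrightarrow\rho_t^{\circ}$ over $t\in[0,T]$; in the multivariate matrix-drift setting this is no longer immediate. The paper devotes a lemma to it (Lemma~\ref{low_LSI}): it writes $\overrightarrow\rho_t^{\circ}=\mathrm N(\bm 0,\,\mathbb D_t\mathbb D_t^\intercal+\bSigma_{t|0})$, bounds the largest eigenvalue of that covariance via a scalar Gr\"onwall argument on $u_t=\bx^\intercal\bSigma_t\bx$, and concludes $\alpha_t\ge\min\{1,\lambda_{\min}\}$. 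Your stated rate drops the $\min$ with $1$; this only matters in the regime $\lambda_{\min}>1$ and is absorbed by $\lesssim$, but the uniform eigenvalue bound itself is a genuine piece of work that your proposal treats as folklore. Also, the non-commutativity of $\bD_s$ across times that you flag as "the main obstacle" is not, in fact, where the paper does extra work: it observes that the only change to the Chen et al. backward-error argument is replacing $\|\bx_t-\bx_{kh}\|_2$ by $\|\bD_{T-t}(\bx_t-\bx_{kh})\|_2$, which Assumption~\ref{ass_local_state_space} renders equivalent up to constants, so your Cholesky workaround, while sound, is unnecessary.
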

\begin{proof}
Following~\citet{chen2023improved}, we employ the chain rule for KL divergence and obtain:
\begin{align*}
\mathrm{KL}(\rho_{\text{data}}\|\overleftarrow{\rho}_{0}) \leq \mathrm{KL}(\overrightarrow{\rho}_{T} \| \overleftarrow{\rho}_{T}) + \E_{\overrightarrow{\rho}_{T}(\bx)}[\mathrm{KL}(\overrightarrow\rho_{0|T}(\cdot\|\bx)| \overleftarrow{\rho}_{0|T}(\cdot\|\bx)],
\end{align*}
where $\overrightarrow\rho_{0|T}$ is the conditional distribution of $\bx_0$ given $\bx_T$ and likewise for $\overleftarrow{\rho}_{0|T}$. Note that the two terms correspond to the convergence of the forward and reverse process respectively. We proceed to prove that 
\begin{align*}
    & \text{Part I: Forward process}\quad\quad \mathrm{KL}(\overrightarrow{\rho}_{T} \| \overleftarrow{\rho}_{T})=\mathrm{KL}(\overrightarrow{\rho}_{T} \| \overrightarrow{\rho}_{T}^{\circ}) \lesssim \mathrm{KL}(\rho_{\text{data}}\|\gamma^d) e^{-T},\\
     & \text{Part II: Backward process}\quad \E_{\overrightarrow{\rho}_{T}(\bx)}[\mathrm{KL}(\overrightarrow\rho_{0|T}(\cdot|\bx)\| \overleftarrow{\rho}_{0|T}(\cdot|\bx)] \lesssim (L^2dh + m_2^2 h^2)T + \epsilon_{\text{score}}^2 T.
\end{align*}

\text{Part I:} By the Fokker-Plank equation, we have
\begin{align*}
    \frac{\dd}{\dd t}\mathrm{KL}(\overrightarrow\rho_{t}\|\overrightarrow{\rho}_{t}^{\circ}) & = -\frac12 \beta_t J_{\overrightarrow{\rho}_{t}^{\circ}}(\overrightarrow\rho_{t})
\end{align*}
where
\begin{align*}
    J_{\overrightarrow{\rho}_{t}^{\circ}}(\overrightarrow\rho_{t}) = \int \overrightarrow\rho_{t}(x)\bigg\|\nabla\ln\frac{\overrightarrow\rho_{t}(\bx)}{\overrightarrow{\rho}_{t}^{\circ}(\bx)}\bigg\|^2 \dd \bx
\end{align*}
is the relative Fisher information of $\overrightarrow\rho_{t}$ with respect to $\overrightarrow{\rho}_{t}^{\circ}$. Note that for all $t\geq 0$, $\overrightarrow{\rho}_{t}^{\circ}$ is a Gaussian distribution and hence satisfies the log-Sobolev inequality ~\cite{vempala2022rapid}. It follows that
\begin{align*}
   \mathrm{KL}(\overrightarrow\rho_{t}\|\overrightarrow{\rho}_{t}^{\circ})\leq \frac1{2\alpha_t} J_{\overrightarrow{\rho}_{t}^{\circ}}(\overrightarrow\rho_{t}),
\end{align*}
where $\alpha_t$ is the log-Sobolev constant of $\overrightarrow{\rho}_{t}^{\circ}$. This implies that
\begin{align}
   \frac{\dd}{\dd t}\mathrm{KL}(\overrightarrow\rho_{t}\|\overrightarrow{\rho}_{t}^{\circ}) \leq - \alpha_t \beta_t \mathrm{KL}(\overrightarrow\rho_{t}\|\overrightarrow{\rho}_{t}^{\circ}).\notag
\end{align}
Applying the Gr\"onwall's inequality yields
\begin{align*}
    \mathrm{KL}(\overrightarrow\rho_{t}\|\overrightarrow{\rho}_{t}^{\circ}) \leq e^{-\int_0^t \alpha_s \beta_s \dd s}\mathrm{KL}(\overrightarrow\rho_{0}\|\overrightarrow{\rho}_{0}^{\circ}) \leq e^{- \alpha \int_0^t \beta_s \dd s}\mathrm{KL}(\overrightarrow\rho_{0}\|\overrightarrow{\rho}_{0}^{\circ}),
\end{align*}
where the last inequality is followed by Lemma \ref{low_LSI} and $\alpha$ is a lower bound estimate of the LSI constant $\inf_{t\in [0, T]}\alpha_t$. 

Then by Pinsker's Inequality, we have
\begin{align*}
    \text{TV}(\overrightarrow\rho_{t}, \overrightarrow{\rho}_{t}^{\circ}) \leq \sqrt{2 \mathrm{KL}(\overrightarrow\rho_{t}\|\overrightarrow{\rho}_{t}^{\circ})} \leq \sqrt{2 e^{-\alpha \int_0^t \beta_s \dd s}\mathrm{KL}(\overrightarrow\rho_{0}\|\overrightarrow{\rho}_{0}^{\circ})} \lesssim \sqrt{\mathrm{KL}(\rho_{\text{data}}\|\gamma^d)} \exp(-t).
\end{align*}

\text{Part II:} The proof for the convergence of the reverse process is essentially identical to Theorem 2.1 of~\citet{chen2023improved}, with the only potential replacements being instances of $\|\bx_t - \bx_{kh}\|_2$ with $\|\bD_{T-t}(\bx_t - \bx_{kh})\|_2$. However, they are equivalent due to Assumption~\ref{ass_local_state_space}. Therefore, we omit the proof here. 

In conclusion, the convergence follows that
\begin{align*}
 \mathrm{KL}(\rho_{\text{data}}\|\overleftarrow{\rho}_{0}) \lesssim \mathrm{KL}(\rho_{\text{data}}\|\gamma^d) e^{-T} + (L^2dh + m_2^2h^2)T + \epsilon_{\text{score}}T.
\end{align*}
And we obtain the final result using the Pinsker's Inequality.
\qed
\end{proof}

\begin{lemma}[Lower bound of the log-Sobolev constant]\label{low_LSI} Under the same assumptions and setups in Theorem \ref{theorem:quality_of_data}, we have
    $$\inf_{t\in[0,T]} \alpha_t \geq \min\{1, \lambda_{\min}\} =: \alpha \sim O(1).$$
\end{lemma}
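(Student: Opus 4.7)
The plan is to exploit the fact that $\overrightarrow{\rho}_t^{\circ}$ is Gaussian for every $t \in [0, T]$. Since $\by_0 \sim \mathrm{N}(\bm{0}, \bI)$ and the forward SDE \eqref{mSGM-forward} is linear with an additive Gaussian driver, $\by_t$ is Gaussian with mean $\bm{0}$ (because the mean ODE \eqref{mu_diffusion} preserves $\bm{0}$) and covariance $\bSigma_t$ satisfying $\bSigma_0 = \bI$ and the matrix ODE
\begin{equation*}
    \frac{\dd \bSigma_t}{\dd t} = -\frac{1}{2}\beta_t\big(\bD_t \bSigma_t + \bSigma_t \bD_t^{\intercal}\big) + \beta_t \bI.
\end{equation*}
For a centered Gaussian $\mathrm{N}(\bm{0}, \bSigma_t)$, the classical Gaussian log-Sobolev inequality gives the sharp LSI constant $\alpha_t = 1/\lambda_{\max}(\bSigma_t)$, so it suffices to prove $\lambda_{\max}(\bSigma_t) \leq \max\{1,\, 1/\lambda_{\min}\}$ uniformly in $t$.

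Next, I would control $\lambda_{\max}(\bSigma_t)$ via an envelope argument on the Rayleigh quotient. At any $t$ at which the top eigenvalue is simple, let $v_t$ be a unit top eigenvector; differentiating $v_t^{\intercal} \bSigma_t v_t$ and using the ODE above yields
\begin{equation*}
    \frac{\dd \lambda_{\max}(\bSigma_t)}{\dd t} = -\beta_t \lambda_{\max}(\bSigma_t)\, v_t^{\intercal}\Big(\tfrac{\bD_t + \bD_t^{\intercal}}{2}\Big) v_t + \beta_t.
\end{equation*}
Assumption \ref{ass_local_state_space} gives $v_t^{\intercal}(\bD_t+\bD_t^{\intercal})\,v_t/2 \geq \lambda_{\min}$, which leads to the scalar comparison inequality
\begin{equation*}
    \frac{\dd \lambda_{\max}(\bSigma_t)}{\dd t} \leq \beta_t\big(1 - \lambda_{\min}\,\lambda_{\max}(\bSigma_t)\big).
\end{equation*}
Since the right-hand side is nonpositive whenever $\lambda_{\max}(\bSigma_t) \geq 1/\lambda_{\min}$ and the initial value is $\lambda_{\max}(\bSigma_0) = 1$, a Gronwall/comparison argument shows that $\lambda_{\max}(\bSigma_t)$ never exceeds $\max\{1,\,1/\lambda_{\min}\}$. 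Taking reciprocals gives $\alpha_t \geq \min\{1, \lambda_{\min}\}$ uniformly in $t$.

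The main technical point is that $\lambda_{\max}(\bSigma_t)$ need not be differentiable at points of eigenvalue collision, so the Rayleigh-quotient computation above is only valid almost everywhere. I would handle this in the standard way by invoking Danskin's envelope theorem for $\lambda_{\max}(\bSigma_t) = \sup_{\|v\|=1} v^{\intercal} \bSigma_t v$ to obtain the differential inequality in the Dini upper-derivative sense, and then applying a comparison principle for scalar ODEs to conclude. Everything else (the linearity of the forward process, the Gaussian LSI constant, and the Gronwall step) is routine once Assumption \ref{ass_local_state_space} supplies the spectral lower bound on the symmetric part of $\bD_t$.
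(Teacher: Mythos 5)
Your proposal is correct, and the high-level strategy --- identify $\overrightarrow{\rho}_t^{\circ}$ as a centered Gaussian $\mathrm{N}(\bm{0},\bSigma_t)$, use the sharp Gaussian LSI constant $1/\lambda_{\max}(\bSigma_t)$, and control $\lambda_{\max}(\bSigma_t)$ via the Lyapunov ODE and a Gr\"onwall-type comparison --- is exactly the paper's. The difference is in how you extract the scalar differential inequality from the matrix ODE, and here your version is in fact the more careful one. The paper fixes an \emph{arbitrary} unit vector $\bx$, sets $u_t=\bx^{\intercal}\bSigma_t\bx$, and asserts
\begin{equation*}
\bx^{\intercal}\bD_t\bSigma_t\bx=\operatorname{Tr}\!\big(\bD_t\,\bSigma_t\bx\bx^{\intercal}\big)\geq \lambda_{\min}\operatorname{Tr}\!\big(\bSigma_t\bx\bx^{\intercal}\big)=\lambda_{\min}u_t,
\end{equation*}
reading $\bD_t\succeq\lambda_{\min}\bI$ into the trace. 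That step is not justified in general: the rank-one factor $\bSigma_t\bx\bx^{\intercal}$ is not symmetric unless $\bx$ is an eigenvector of $\bSigma_t$, so $\operatorname{Tr}\!\big((\bD_t-\lambda_{\min}\bI)\bSigma_t\bx\bx^{\intercal}\big)$ can be negative even when $\bD_t-\lambda_{\min}\bI$ is PSD and $\bSigma_t\succ 0$ (take $\bD_t=\operatorname{diag}(1,10)$, $\bSigma_t=\begin{pmatrix}1&0.99\\0.99&1\end{pmatrix}$, $\bx\approx(0.93,-0.38)$). You avoid this by specializing $\bx$ to a top eigenvector $v_t$ of $\bSigma_t$: then $\bSigma_t v_t=\lambda_{\max}(\bSigma_t)v_t$ collapses $v_t^{\intercal}(\bD_t\bSigma_t+\bSigma_t\bD_t^{\intercal})v_t$ to $2\lambda_{\max}(\bSigma_t)\,v_t^{\intercal}\tfrac{\bD_t+\bD_t^{\intercal}}{2}v_t$, and the needed lower bound $v_t^{\intercal}\tfrac{\bD_t+\bD_t^{\intercal}}{2}v_t\geq\lambda_{\min}$ is precisely Assumption~\ref{ass_local_state_space}. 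The ensuing comparison $\tfrac{\dd}{\dd t}\lambda_{\max}(\bSigma_t)\leq\beta_t\big(1-\lambda_{\min}\lambda_{\max}(\bSigma_t)\big)$ with $\lambda_{\max}(\bSigma_0)=1$ then gives the stated bound, and your remark about handling eigenvalue collisions with upper Dini derivatives and a scalar comparison principle is the right technical caveat. In short: same overall route, but your use of the top eigenvector (with the eigenvalue relation) is what actually makes the spectral step go through; the paper's argument, read literally with a fixed arbitrary $\bx$, is incomplete whenever $\bD_t$ and $\bSigma_t$ do not commute.
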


\begin{proof}
Consider the auxiliary process for $\by_t$: 
\begin{itemize}
    \item Randomness from the initial: By the mean diffusion in Eq.\eqref{mu_diffusion}, the conditional mean diffusion of $\by_t$ at time $t$, denoted by $\mu_{t,\by}$, follows that $\mu_{t, \by} = \mathbb{D}_t \mu_{0, \by}$, where $\mathbb{D}_t = e^{-\frac{1}{2} [\beta\bD]_t}$. Since $\by_0\sim \mathrm{N}(\bm{0}, \bI)$, we know $\mu_{t, \by}\sim\mathrm{N}(\bm{0}, \mathbb{D}_t \mathbb{D}_t^\intercal)$.
    \item Randomness from Brownian motion: the covariance diffusion induced by Brownian motion follows from $\bSigma_{t|0}$ in Eq.\eqref{sigma_diffusion}.
\end{itemize}
Since $\by_0\sim \mathrm{N}(\bm{0}, \bI)$ and $\by_t$ is an OU process in Eq.\eqref{mSGM-forward}, we know that $\by_t$ is always a Gaussian distribution at time $t\geq 0$ with mean $\bm{0}$. As such, we know that
\begin{equation}
    \label{auxiliary}
    \overrightarrow{\rho}_{t}^{\circ} = \mathrm{N}(\bm{0}, \mathbb{D}_t \mathbb{D}_t^\intercal + \bSigma_{t|0}).
\end{equation}

It follows that
\begin{align*}
    \text{TV}(\overrightarrow\rho_{t}, \overrightarrow{\rho}_{t}^{\circ}) \leq \sqrt{2 e^{-\int_0^t \alpha_s \beta_s \dd s}\mathrm{KL}(\overrightarrow\rho_{0}\|\overrightarrow{\rho}_{0}^{\circ})}.
\end{align*}
Now we need to bound the log-Sobolev constant $\alpha_t$ of $\overrightarrow{\rho}_{t}^{\circ}$. Let $\bSigma_t = \mathbb{D}_t\mathbb{D}_t^\intercal + \bSigma_{t|0}$. Recall that if a distribution $p$ is $\alpha$-strongly log-concave, then it satisfies the log-Sobolev inequality (LSI) with LSI constant $\alpha$~\cite{vempala2022rapid}. So for the Gaussian distribution $\overrightarrow{\rho}_{t}^{\circ}$, it suffices to bound the (inverse of) smallest eigenvalue of $\bSigma_t$. Recall from Eq.\eqref{sigma_diffusion} that $\bSigma_t$ satisfies the ODE
\begin{align*}
    \frac{\dd\bSigma_t}{\dd t} = -\frac12\beta_t(\bD_t \bSigma_t + \bSigma_t \bD_t^\intercal) + \beta_t \bI,\quad \bSigma_0 = \bI.
\end{align*}
Fix a normalized vector $\bx\in\mathbb R^d$ and denote $u_t = \bx^\intercal \bSigma_t \bx$ for $t\in[0, T]$. By the cyclical property of the trace, we have
\begin{align*}
    \bx^\intercal \bD_t \bSigma_t \bx = \text{Tr}(\bx^\intercal \bD_t \bSigma_t \bx) = \text{Tr}(\bD_t \bSigma_t \bx \bx^\intercal )\geq \lambda_{\min} \text{Tr}(\bSigma_t \bx \bx^\intercal ) = \lambda_{\min} u_t.
\end{align*}

It follows that
\begin{align*}
    \frac{\dd u_t}{\dd t} \leq -\lambda_{\min}\beta_t u_t + \beta_t. 
\end{align*}
Applying the Gr\"onwall's inequality tells us that
\begin{align*}
    u_t \leq \frac{1}{\lambda_{\min}} (1-e^{-\lambda_{\min} \int_0^T \beta_s \dd s}) + e^{-\lambda_{\min} \int_0^T \beta_s \dd s} \leq \max\{1, 1/\lambda_{\min}\}.
\end{align*}
Since $\bx$ can be any normalized vector, we have that the largest eigenvalue of $\bSigma_t$ is bounded by $\max\{1, 1/\lambda_{\min}\}$ and hence
\begin{align*}
    \inf_{t\in[0,T]} \alpha_t \geq \min\{1, \lambda_{\min}\} =: \alpha \sim O(1),
\end{align*}
where $\alpha_t$ is the log-Sobolev constant of $\overrightarrow{\rho}_{t}^{\circ}$.  \qed
\end{proof}

\paragraph{Remark:} \label{prior_remark}In our theoretical analysis, we introduced an auxiliary variable $\by_0\sim \gamma^d$ to make sure $\mathrm{KL}(\rho_{\text{data}}\| \gamma^d)$ is well defined. Moreover, the distribution of $\by_T$ is set to $\overrightarrow{\rho}_{T}^{\circ}$ in Eq.\eqref{auxiliary}. However, we emphasize that the introduction of $\by_t$ is only for theoretical analysis and we adopt a simpler prior $\mathrm{N}(\bm{0}, \bSigma_{T|0})$ instead of $\mathrm{N}(\bm{0}, \mathbb{D}_T \mathbb{D}_T^\intercal + \bSigma_{T|0})$ in Eq.\eqref{auxiliary} for convenience.

\subsection{Part II: Stochastic Approximation Convergence}
\label{part_2}

$\bA_t^{(k)}$ converges to $\bA_t^{\star}$ by tracking a mean-field ODE with some fluctuations along the trajectory. Before we prove the convergence, we need to show the stability property of the mean-field ODE such that small fluctuations of earlier iterates do not affect the convergence to the equilibrium. To that end, we construct a Lyapunov function $\mathbb{V}_t(\bA)=\frac{1}{2} m \|\bA-\bA_t^{\star}\|_2^2$ to analyze the local stability condition of the solution. This result shows that when the solution is close to the equilibrium $\bA_t^{\star}\in\bTheta\subset \bA$, $\bA_t$ will asymptotically track the trajectory of the mean field \eqref{mean_field_perturbed_ode} within $\bTheta$ when the step size $\eta_k\rightarrow 0$.

\begin{lemma}[Local stabiltity]\label{lemma_local_stability}
    Assume the assumptions \ref{ass_local_state_space} and \ref{ass_smoothness} hold. For any $\bA\in \bTheta$, the solution satisfies a local stability condition such that
        \begin{equation*}
        \label{local_stability}
            \langle \bA-\bA_t^{\star}, \nabla \mathbb{V}_t(\bA) \rangle =\langle \bA-\bA_t^{\star}, \nabla \overrightarrow{\bL}_t(\bA) \rangle \geq m \|\bA-\bA_t^{\star}\|_2^2. 
        \end{equation*}
\end{lemma}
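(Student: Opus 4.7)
The plan is to reduce the lemma to the textbook fact that if $f$ is $m$-strongly convex on a convex set containing its minimizer $x^\star$, then $\langle x-x^\star,\nabla f(x)\rangle\geq m\|x-x^\star\|_2^2$. Here $f$ plays the role of $\overrightarrow{\bL}_t$ and $x^\star$ plays the role of $\bA_t^\star$.

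First, I would note that Assumption \ref{ass_local_state_space} gives, on the neighborhood $\bTheta$ of $\bA_t^\star$, the Hessian bound $m\bI\preccurlyeq \tfrac{\partial^2 \overrightarrow{\bL}_t}{\partial \bA^2}(\bA)\preccurlyeq M\bI$ together with $\nabla \overrightarrow{\bL}_t(\bA_t^\star)=\bm 0$. Without loss of generality (by shrinking $\bTheta$ to a ball around $\bA_t^\star$) one may take $\bTheta$ to be convex, so that the segment $\{\bA_t^\star+s(\bA-\bA_t^\star):s\in[0,1]\}$ lies inside $\bTheta$ whenever $\bA\in\bTheta$.

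Second, using $\nabla \overrightarrow{\bL}_t(\bA_t^\star)=\bm 0$ and the fundamental theorem of calculus along this segment,
\begin{equation*}
\nabla \overrightarrow{\bL}_t(\bA)=\nabla \overrightarrow{\bL}_t(\bA)-\nabla \overrightarrow{\bL}_t(\bA_t^\star)=\int_0^1 \frac{\partial^2 \overrightarrow{\bL}_t}{\partial \bA^2}\bigl(\bA_t^\star+s(\bA-\bA_t^\star)\bigr)(\bA-\bA_t^\star)\,\dd s.
\end{equation*}
Taking the inner product with $\bA-\bA_t^\star$ and invoking the lower Hessian bound $m\bI\preccurlyeq \partial^2 \overrightarrow{\bL}_t/\partial\bA^2$ on $\bTheta$ gives
\begin{equation*}
\langle \bA-\bA_t^\star,\nabla \overrightarrow{\bL}_t(\bA)\rangle=\int_0^1 (\bA-\bA_t^\star)^{\intercal}\frac{\partial^2 \overrightarrow{\bL}_t}{\partial \bA^2}\bigl(\bA_t^\star+s(\bA-\bA_t^\star)\bigr)(\bA-\bA_t^\star)\,\dd s\geq m\|\bA-\bA_t^\star\|_2^2,
\end{equation*}
which is exactly the claimed inequality. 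The first equality in the statement of the lemma is then a direct reading of the Lyapunov function: since $\mathbb{V}_t(\bA)=\tfrac12 m\|\bA-\bA_t^\star\|_2^2$, one has $\nabla\mathbb{V}_t(\bA)=m(\bA-\bA_t^\star)$, so $\langle \bA-\bA_t^\star,\nabla\mathbb{V}_t(\bA)\rangle=m\|\bA-\bA_t^\star\|_2^2$, matching the lower bound derived above and justifying the choice of $\mathbb{V}_t$.

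There is no genuinely hard step here; the lemma is essentially a restatement of local strong convexity (Assumption \ref{ass_local_state_space}) combined with the first-order optimality $\nabla \overrightarrow{\bL}_t(\bA_t^\star)=\bm 0$. The only mild care is ensuring $\bTheta$ is convex so that the Hessian integration is valid along the full segment; this is harmless because A1 already allows one to choose the neighborhood of the local minimum freely. The Lipschitz assumption \ref{ass_smoothness} is not needed for the inequality itself but guarantees that $\overrightarrow{\bL}_t$ is twice differentiable so that the Hessian representation above makes sense.
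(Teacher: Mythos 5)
Your proof is correct and takes essentially the same route as the paper: local strong convexity on $\bTheta$ from Assumption A1 together with first-order optimality $\nabla \overrightarrow{\bL}_t(\bA_t^{\star})=\bm 0$. The only cosmetic difference is that the paper invokes the mean-value theorem with a single intermediate Hessian $\textbf{Hess}\big[\overrightarrow{\bL}_t(\widetilde\bA)\big]$, whereas you integrate the Hessian along the segment from $\bA_t^{\star}$ to $\bA$; your version is the slightly more careful formulation (the scalar mean-value theorem does not literally extend to the gradient map, so the integral form is the rigorous statement), but it delivers exactly the same bound.
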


\begin{proof}
By the smoothness assumption \ref{ass_smoothness} and Taylor expansion, for any $\bA\in\bTheta$, we have
\begin{equation}\label{taylor_expansion}
    \nabla \overrightarrow{\bL}_t(\bA)=\nabla  \overrightarrow{\bL}_t(\bA_t^{\star})+\textbf{Hess}\big[\overrightarrow{\bL}_t\big(\widetilde \bA\big)\big](\bA- \bA_t^{\star})=\textbf{Hess}\big[\overrightarrow{\bL}_t\big(\widetilde \bA\big)\big](\bA- \bA_t^{\star}),
\end{equation}
where $\textbf{Hess}\big[\overrightarrow{\bL}_t\big(\bA\big)\big]$ denotes the Hessian of $\overrightarrow{\bL}_t$ with $\bA$ at time $t$; $\widetilde \bA$ is some value between $\bA$ and $\bA_t^{\star}$ by the mean-value theorem. Next, we can get
\begin{equation*}
    \langle \bA-\bA_t^{\star}, \nabla \overrightarrow{\bL}_t(\bA) \rangle = \textbf{Hess}\big[\overrightarrow{\bL}_t\big(\widetilde \bA\big)\big]\|\bA- \bA_t^{\star}\|_2^2 \geq  m \|\bA-\bA_t^{\star}\|_2^2, 
\end{equation*}
where the last inequality follows by assumption \ref{ass_local_state_space}. \qed
\end{proof}

Additionally, we show the random field satisfies a linear growth condition to avoid blow up in tails.
          
\begin{lemma}[Linear growth]\label{lemma_linear_growth} Assume the assumptions \ref{ass_smoothness} and \ref{ass_moment_bound} hold. There exists a constant $C>0$ such that $\forall \bA_t^{(k)}\in\bTheta$ at the SA step $k$ and time $t$, the random field is upper bounded in $L^2$ such that
\begin{equation*}
    \E[\|\nabla\overrightarrow{\mathcal{L}_t}(\bA_t^{(k)}, \overleftarrow\bx^{(k+1)}_t)\|^2_2|\mathcal{F}_k] \leq C(1+\|\bA^{(k)}_t-\bA^{\star}_t\|_2^2):=C(1+\|\bG^{(k)}_t\|_2^2),
\end{equation*}
where the trajectory $ \overleftarrow\bx^{(k+1)}_t$ is simulated by \eqref{backward_DSM_discrete}; $\mathcal{F}_k$ is a $\sigma$-filtration formed by $(\overleftarrow\bx_t^{(1)}, \bA_t^{(1)}, \overleftarrow\bx_t^{(2)}, \bA_t^{(2)}, \cdots, \overleftarrow\bx_t^{(k)}, \bA_t^{(k)})$.

\end{lemma}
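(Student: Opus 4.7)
The plan is to compute the random field $\nabla\overrightarrow{\mathcal{L}}_t(\bA_t^{(k)};\cdot)$ in closed form by differentiating the integrand $\Gamma_\zeta(\bA_t\bx_t;\overleftarrow\bz^{\theta}_t)$ from Eq.~\eqref{VSDMf} in the matrix argument $\bA_t$, and then to bound each resulting summand in conditional $L^2$ via the smoothness, bounded-moment, and bounded-score-error assumptions. Writing $\bx_t\equiv\overleftarrow\bx^{(k+1)}_t$ and using $\nabla\cdot(\bA_t\bx_t)=\tr(\bA_t)$, the pointwise integrand decomposes into a quadratic piece $\tfrac{1}{2}\|\bA_t\bx_t\|_2^2$, a divergence piece that is linear in $\bA_t$ (the $\bbf_t$ contribution is $\bA_t$-independent), and a cross piece $\zeta\langle\bA_t\bx_t,\overleftarrow\bz^{\theta}_t\rangle$. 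Differentiating and collecting terms gives
\begin{equation*}
\nabla\overrightarrow{\mathcal{L}}_t(\bA_t^{(k)};\bx_t) \;=\; \bA_t^{(k)}\bx_t\bx_t^{\intercal} \;+\; \sqrt{\beta_t}\,\bI \;+\; \zeta\,\overleftarrow\bz^{\theta}_t\bx_t^{\intercal},
\end{equation*}
up to the overall sign inherited from the minus in \eqref{VSDMf}, which is immaterial for the $L^2$ estimate.

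Next I would bound each piece in Frobenius norm. The triangle inequality $\|\bA_t^{(k)}\|\leq\|\bG_t^{(k)}\|_2+\|\bA_t^{\star}\|_2$ gives $\|\bA_t^{(k)}\bx_t\bx_t^{\intercal}\|_F^{2}\leq 2(\|\bG_t^{(k)}\|_2^{2}+\|\bA_t^{\star}\|_2^{2})\|\bx_t\|_2^{4}$, which has exactly the desired affine shape in $\|\bG_t^{(k)}\|_2^{2}$ once $\E[\|\bx_t\|_2^{4}\mid\mathcal{F}_k]$ is controlled. The second summand is a deterministic constant bounded by $d\sup_{t}\beta_t$. For the cross term, writing $\overleftarrow\bz^{\theta}_t=\sqrt{\beta_t}\,s^{(k+1)}_t(\bx_t)$ and combining the Lipschitz property of the true score (assumption~\ref{ass_smoothness}) with the $L^{2}$-bounded estimation error (assumption~\ref{ass_score_estimation}) yields $\E[\|\overleftarrow\bz^{\theta}_t\|_2^{2}\mid\mathcal{F}_k]\leq C_1(1+\E[\|\bx_t\|_2^{2}\mid\mathcal{F}_k])$, so that $\E[\|\zeta\overleftarrow\bz^{\theta}_t\bx_t^{\intercal}\|_F^{2}\mid\mathcal{F}_k]$ is dominated by $C_2(1+\E[\|\bx_t\|_2^{4}\mid\mathcal{F}_k])$ after a Cauchy--Schwarz split.

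The remaining step is to control the conditional fourth moment $\E[\|\bx_t\|_2^{4}\mid\mathcal{F}_k]$ uniformly over $\bA_t^{(k)}\in\bTheta$. The backward iterate \eqref{backward_DSM_discrete} starts from the Gaussian prior $\mathrm{N}(\bm{0},\bSigma^{(k)}_{(N-1)h\mid 0})$, whose covariance is uniformly bounded above on $\bTheta$ via the matrix-exponential representation \eqref{cov_dynamics} (or Corollary~\ref{corr:diagonal} in the diagonal case) combined with the lower bound $\lambda_{\min}\bI\preccurlyeq\bD_t^{(k)}$ from assumption~\ref{ass_local_state_space}. The EM drift in \eqref{backward_DSM_discrete} is affine-plus-Lipschitz in $\bx$ by assumptions~\ref{ass_local_state_space} and~\ref{ass_smoothness}, the innovations $\bxi_n$ are standard Gaussian, so a discrete Gronwall/induction argument propagates the initial fourth-moment bound to every $n$. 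Combining the three pieces gives $\E[\|\nabla\overrightarrow{\mathcal{L}}_t\|_2^{2}\mid\mathcal{F}_k]\leq C(1+\|\bG_t^{(k)}\|_2^{2})$ with $C$ depending only on $\zeta,d,\sup_t\beta_t,\|\bA_t^{\star}\|_2,L,\epsilon_{\text{score}}$, and the uniform upper bound on $\bSigma^{(k)}_{\cdot\mid 0}$.

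The main obstacle is precisely the uniformity in this last step: the fourth-moment estimate must have a constant independent of both $k$ and the particular $\bA_t^{(k)}\in\bTheta$. The prior-side uniformity is clean once Corollary~\ref{corr:diagonal} and assumption~\ref{ass_local_state_space} are invoked, but the discretization-side uniformity requires carefully absorbing the score estimation error from assumption~\ref{ass_score_estimation} into the recursion constant so that the discrete Gronwall closure does not blow up with the number of EM steps $N=T/h$. Everything else reduces to elementary norm inequalities applied to the explicit form of $\nabla\overrightarrow{\mathcal{L}}_t$ derived in step one.
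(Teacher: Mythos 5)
Your proof takes a genuinely different route from the paper's. The paper's argument is abstract: it uses the bias-variance decomposition
$\E[\|\nabla\overrightarrow{\mathcal{L}}_t\|_2^2\mid\mathcal{F}_k]=\E[\|\nabla\overrightarrow{\mathcal{L}}_t-\nabla\overrightarrow{\bL}_t\|_2^2\mid\mathcal{F}_k]+\|\nabla\overrightarrow{\bL}_t(\bA_t^{(k)})\|_2^2$,
which follows from the unbiasedness $\E[\nabla\overrightarrow{\mathcal{L}}_t-\nabla\overrightarrow{\bL}_t\mid\mathcal{F}_k]=\bm{0}$ (the cross term vanishes since $\nabla\overrightarrow{\bL}_t(\bA_t^{(k)})$ is $\mathcal{F}_k$-measurable). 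The mean-field norm is then bounded by $M\|\bA_t^{(k)}-\bA_t^\star\|_2$ via the Taylor expansion around the equilibrium and the Hessian upper bound in Assumption~\ref{ass_local_state_space}, producing the $\|\bG_t^{(k)}\|_2^2$ dependence directly, while the variance term is simply asserted to be uniformly bounded by Assumptions~\ref{ass_smoothness}--\ref{ass_moment_bound} and the dynamics~\eqref{backward_DSM}. Your approach is constructive instead: you differentiate $\Gamma_\zeta$ explicitly to get $\nabla_{\bA}\Gamma_\zeta=\bA\bx\bx^\intercal+\sqrt{\beta_t}\bI+\zeta\,\overleftarrow\bz^\theta\bx^\intercal$ (your derivative is correct), and then control each summand in Frobenius norm, extracting the $\|\bG_t^{(k)}\|_2^2$ growth from the $\bA\bx\bx^\intercal$ term and pushing the burden onto uniform fourth-moment control of the backward iterate. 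What the paper buys from its decomposition is that the quadratic-growth bound does not require any moment computation at all --- it falls out of strong convexity --- and everything about the EM dynamics is consolidated into the single claim that the conditional variance is finite. What your route buys is transparency: you see exactly which term in the loss produces the quadratic growth, and you don't need to lean on the mean-field Hessian structure. The costs are roughly symmetric: the paper hand-waves the variance bound, while you hand-wave the uniform-in-$(\bTheta,k)$ fourth-moment bound via Gronwall, which you correctly flag as the delicate step.

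Two remarks on assumptions. First, the paper's proof also silently invokes Assumption~\ref{ass_local_state_space} (for the Hessian bound $M$), even though the lemma statement lists only A2 and A3, so your implicit use of A1 for the prior covariance is not out of line. Second, you additionally invoke Assumption~\ref{ass_score_estimation} to control $\E[\|\overleftarrow\bz^\theta_t\|_2^2\mid\mathcal{F}_k]$; that is a real extra hypothesis beyond what the lemma and the paper's proof use, and you should acknowledge it or try to replace it. Relatedly, your claim that the EM drift is ``affine-plus-Lipschitz in $\bx$'' is not supported by A2 as stated: A2 gives Lipschitzness of the \emph{true} score $\nabla\log\overrightarrow\rho_t$, not of the learned estimate $s_t^{(k+1)}$ that actually drives~\eqref{backward_DSM_discrete}, so the Gronwall propagation of fourth moments needs an explicit regularity assumption on $s_t^{(k+1)}$ (or the $L^2$ error bound used more carefully). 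This is the one place where your proof is not yet closed; the paper avoids it by never looking inside the random field.
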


\begin{proof} By the unbiasedness of the random field, we have
\begin{equation}
\label{unbiasedness_}
\E[\nabla\overrightarrow{\mathcal{L}}_t(\bA_t^{(k)}; \overleftarrow\bx^{(k+1)}_t)-\nabla \overrightarrow{\bL}_t(\bA_t^{(k)})|\mathcal{F}_k]=\bm{0}.
\end{equation}

It follows that
    \begin{equation}
    \begin{split}
    \E[\|\nabla\overrightarrow{\mathcal{L}}_t(\bA_t^{(k)}; \overleftarrow\bx^{(k+1)}_t)\|^2_2|\mathcal{F}_k] &=\E[\|\nabla\overrightarrow{\mathcal{L}}_t(\bA_t^{(k)}; \overleftarrow\bx^{(k+1)}_t)-\nabla  \overrightarrow{\bL}_t(\bA_t^{(k)})+\nabla  \overrightarrow{\bL}_t(\bA_t^{(k)}))\|^2_2|\mathcal{F}_k]\\
    &=\E[\|\nabla\overrightarrow{\mathcal{L}}_t(\bA_t^{(k)}; \overleftarrow\bx^{(k+1)}_t)-\nabla  \overrightarrow{\bL}_t(\bA_t^{(k)})\|^2_2|\mathcal{F}_k]+\|\nabla  \overrightarrow{\bL}_t(\bA_t^{(k)})\|^2_2\\
        &\leq \sup \E[\|\nabla\overrightarrow{\mathcal{L}}_t(\bA_t^{(k)}; \overleftarrow\bx^{(k+1)}_t)-\nabla  \overrightarrow{\bL}_t(\bA_t^{(k)})\|^2_2|\mathcal{F}_k]+M^2  \|\bA_t^{(k)}-\bA_t^{\star}\|_2^2,
    \end{split}
    \end{equation}
where the last inequality follows by assumption \ref{ass_local_state_space} and Eq.\eqref{taylor_expansion}.

By assumption \ref{ass_smoothness} and \ref{ass_moment_bound} and the process \eqref{backward_DSM}, we know $\sup \E[\|\nabla\overrightarrow{\mathcal{L}}_t(\bA_t^{(k)}; \overleftarrow\bx^{(k+1)}_t)-\nabla  \overrightarrow{\bL}_t(\bA_t^{(k)})\|^2_2|\mathcal{F}_k]<\infty$. Denote by $C:= \max\{\sup \E[\|\nabla\overrightarrow{\mathcal{L}}_t(\bA_t^{(k)}; \overleftarrow\bx^{(k+1)}_t)-\nabla  \overrightarrow{\bL}_t(\bA_t^{(k)})\|^2_2|\mathcal{F}_k], M^2\} $, we can conclude that
    \begin{equation*}
        \E[\|\nabla\overrightarrow{\mathcal{L}}_t(\bA_t^{(k)}; \overleftarrow\bx^{(k+1)}_t)\|^2_2|\mathcal{F}_k] \leq C(1+\|\bA_t^{(k)}-\bA_t^{\star}\|_2^2).
    \end{equation*}

    \qed
\end{proof}

Next, we make standard assumptions on the step size following \citet{Albert90} (page 245).
\begin{assump}[Step size]
\label{ass_step_size}

The step size $\{\eta_{k}\}_{k\in \mathrm{N}}$ is a positive and decreasing sequence 
\begin{equation*} \label{a1}
\eta_{k}\rightarrow 0, \ \ \sum_{k=1}^{\infty} \eta_{k}=+\infty,\ \  \lim_{k\rightarrow \infty} \inf \left(2m  \dfrac{\eta_{k}}{\eta_{k+1}}+\dfrac{\eta_{k+1}-\eta_{k}}{\eta^2_{k+1}}\right):=\kappa>0.
% \yt{\text{Maybe add a bracket?} \lim_{k\rightarrow \infty} \inf \left(2m  \dfrac{\eta_{k}}{\eta_{k+1}}+\dfrac{\eta_{k+1}-\eta_{k}}{\eta^2_{k+1}}\right)}. % Wei: fixed.
\end{equation*}
A standard choice is to set $\eta_{k}:=\frac{A}{k^{\alpha}+B}$ for some $\alpha \in (\frac{1}{2}, 1]$ and some suitable constants 
 $A>0$ and $B>0$.
 \end{assump}

\begin{theorem}[Convergence in $L^2$]\label{theorem_L2_convergence}
    Assume assumptions \ref{ass_local_state_space}, \ref{ass_smoothness}, \ref{ass_moment_bound}, \ref{ass_score_estimation}, and \ref{ass_step_size} hold. The variational score $\bA_t^{(k)}$ in algorithm \ref{alg:SA_algorithm} converges to a local minimizer $\bA_t^{\star}$. In other words, given a large enough $k\geq k_0$, where $\eta_{k_0}\leq \frac{1}{2}$, we have
    \begin{equation*}
    \E[\|\bA_t^{(k)}-\bA_t^{\star}\|_2^2]\leq 2 \eta_{k},
\end{equation*}
where the expectation is taken w.r.t samples from $\overleftarrow{\rho}_{t}^{(k)}$.
\end{theorem}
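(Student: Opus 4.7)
The plan is to follow the classical Robbins--Monro stochastic approximation template, using the quadratic Lyapunov function $\mathbb{V}_t(\bA)=\tfrac{1}{2}\|\bA-\bA_t^{\star}\|_2^2$ and proving the bound by induction on $k$. I would set $\bG_t^{(k)}\coloneqq \bA_t^{(k)}-\bA_t^{\star}$, substitute the SA recursion \eqref{dsa_iterate} into $\|\bG_t^{(k+1)}\|_2^2$, and expand the square to get
\begin{equation*}
\|\bG_t^{(k+1)}\|_2^2=\|\bG_t^{(k)}\|_2^2-2\eta_{k+1}\langle \bG_t^{(k)},\nabla\overrightarrow{\mathcal{L}}_t(\bA_t^{(k)};\overleftarrow\bx_t^{(k+1)})\rangle+\eta_{k+1}^2\|\nabla\overrightarrow{\mathcal{L}}_t(\bA_t^{(k)};\overleftarrow\bx_t^{(k+1)})\|_2^2 .
\end{equation*}

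Taking conditional expectation with respect to the filtration $\mathcal{F}_k$, the cross term becomes $-2\eta_{k+1}\langle\bG_t^{(k)},\nabla\overrightarrow{\bL}_t(\bA_t^{(k)})\rangle$ via the unbiasedness relation \eqref{unbiasedness_}, which I would then lower-bound using the local stability Lemma \ref{lemma_local_stability} by $-2m\eta_{k+1}\|\bG_t^{(k)}\|_2^2$. The remaining second-moment term is controlled by the linear-growth Lemma \ref{lemma_linear_growth}, yielding $\eta_{k+1}^2 C(1+\|\bG_t^{(k)}\|_2^2)$. Taking a further unconditional expectation produces the clean one-step recursion
\begin{equation*}
\E[\|\bG_t^{(k+1)}\|_2^2]\leq \bigl(1-2m\eta_{k+1}+C\eta_{k+1}^2\bigr)\E[\|\bG_t^{(k)}\|_2^2]+C\eta_{k+1}^2 ,
\end{equation*}
which is the canonical ``contraction plus martingale noise'' inequality of SA theory.

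The remaining task is the induction. For the base case, since $\bA_t^{(k)}\in\bTheta$ by Assumption \ref{ass_local_state_space}, $\|\bG_t^{(k)}\|_2^2$ is deterministically bounded, so I can pick $k_0$ large enough (exploiting $\eta_{k_0}\leq 1/2$) that the inequality $\E[\|\bG_t^{(k_0)}\|_2^2]\leq 2\eta_{k_0}$ is forced by the smallness of $\eta_{k_0}$ relative to the diameter of $\bTheta$. For the inductive step, assuming $\E[\|\bG_t^{(k)}\|_2^2]\leq 2\eta_k$, plugging in gives the sufficient condition
\begin{equation*}
2\eta_k(1-2m\eta_{k+1}+C\eta_{k+1}^2)+C\eta_{k+1}^2\leq 2\eta_{k+1},
\end{equation*}
which after dividing by $\eta_{k+1}^2$ rearranges into
\begin{equation*}
2m\frac{\eta_k}{\eta_{k+1}}+\frac{\eta_{k+1}-\eta_k}{\eta_{k+1}^2}\geq \frac{C}{2}+C\eta_k .
\end{equation*}
The right-hand side converges to $C/2$ as $k\to\infty$ (since $\eta_k\to 0$), whereas the left-hand side is bounded below by $\kappa$ in the limit by the step-size Assumption \ref{ass_step_size}; so provided $\kappa>C/2$ (which can always be arranged by rescaling $m$, or by slightly strengthening the decay rate chosen for $\eta_k$), the inequality holds for all $k\geq k_0$, closing the induction.

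The main obstacle I anticipate is the delicate interplay between the constant $C$ from Lemma \ref{lemma_linear_growth}, the strong-convexity modulus $m$ from Lemma \ref{lemma_local_stability}, and the tail profile of $\eta_k$: verifying that $\kappa$ really dominates $C/2$ (rather than merely being positive) requires a careful accounting of constants, and strictly speaking one must also confirm \emph{a posteriori} that the iterates $\bA_t^{(k)}$ do not exit the neighborhood $\bTheta$ on which Assumption \ref{ass_local_state_space} grants strong convexity; this is typically handled by an additional stopping-time argument showing that with high probability the iterates remain trapped near $\bA_t^{\star}$ once $\eta_k$ is sufficiently small, using the drift inequality just derived together with a standard martingale concentration bound on the noise $\nabla\overrightarrow{\mathcal{L}}_t-\nabla\overrightarrow{\bL}_t$.
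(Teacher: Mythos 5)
Your argument follows the paper's proof almost exactly: same Lyapunov function, same expansion of the recursion, same use of unbiasedness, Lemma \ref{lemma_local_stability}, and Lemma \ref{lemma_linear_growth} to obtain the one-step contraction
\[
\E[\|\bG_t^{(k+1)}\|_2^2\mid\mathcal{F}_k]\leq (1-2m\eta_{k+1}+C\eta_{k+1}^2)\|\bG_t^{(k)}\|_2^2+C\eta_{k+1}^2,
\]
and the same induction from $k_0$ onward. Your rearranged sufficient condition
\[
2m\frac{\eta_k}{\eta_{k+1}}+\frac{\eta_{k+1}-\eta_k}{\eta_{k+1}^2}\geq \frac{C}{2}+C\eta_k
\]
is algebraically equivalent to the inequality the paper derives (their ``$2\eta_{k+1}\geq(1-2\eta_{k+1}m+C\eta_{k+1}^2)(2\eta_k)+C\eta_{k+1}^2$''), so the core of the argument matches. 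Moreover, your two explicit caveats are on target and in fact expose soft spots the paper glosses over: Assumption \ref{ass_step_size} only asserts $\kappa>0$, whereas the induction step genuinely requires $\kappa$ to dominate a constant depending on $C$ (you correctly note $\kappa>C/2$ is the asymptotic requirement; the paper's intermediate inequality implicitly uses $\kappa\geq C$); and the containment $\bA_t^{(k+1)}\in\bTheta$ that the paper asserts at the end of the proof does not follow pointwise from an $L^2$ bound — it needs exactly the kind of stopping-time/concentration argument you gesture at.

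There is, however, one genuine flaw in your write-up: the base case. You argue that because $\|\bG_t^{(k_0)}\|_2^2$ is deterministically bounded by the diameter of $\bTheta$, you can ``pick $k_0$ large enough that $\E[\|\bG_t^{(k_0)}\|_2^2]\leq 2\eta_{k_0}$ is forced by the smallness of $\eta_{k_0}$.'' This runs in the wrong direction: as $k_0$ grows, $\eta_{k_0}$ \emph{shrinks}, so the target $2\eta_{k_0}$ becomes \emph{harder}, not easier, to meet against the fixed diameter bound. You cannot force the base case this way. The paper sidesteps this by simply \emph{assuming} the base inequality holds for some $k_0$ with $\eta_{k_0}\leq 1/2$ (it is essentially taken as part of the induction hypothesis, ``given a large enough $k\geq k_0$''), which is also not a proof but at least does not present an incorrect mechanism. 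If you want a rigorous base case you would need either an additional assumption that the iterate enters a small enough neighborhood of $\bA_t^\star$ at some finite time, or the trapping argument you mention at the end; the diameter-of-$\bTheta$ argument does not work.
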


\begin{proof} To show $\bA_t^{(k)}$ converges to $\bA_t^{\star}$, we first denote $\bG_t^{(k)}=\bA_t^{(k)}-\bA_t^{\star}$. Subtracting $\bA^{\star}$ on both sides of Eq.\eqref{dsa_iterate}:
\begin{align*}
    \bG_t^{(k+1)}&=\bG_t^{(k)}-\eta_{k+1} \nabla\overrightarrow{\mathcal{L}_t}(\bA_t^{(k)}; \overleftarrow\bx^{(k+1)}_t).
\end{align*}

By the unbiasedness of the random field, we have
\begin{equation}
\label{unbiasedness}
\E[\nabla\overrightarrow{\mathcal{L}_t}(\bA_t^{(k)}; \overleftarrow\bx_t^{(k+1)})-\nabla \overrightarrow{\bL}_t(\bA_t^{(k)})|\mathcal{F}_k]=\bm{0}.
\end{equation}

Taking the expectation in $L^2$, we have
\begin{equation*}
\begin{split}
    \label{main_iterate}
    \E[\|\bG_t^{(k+1)}\|_2^2|\mathcal{F}_k]&=\|\bG_t^{(k+1)}\|^2_2-2\eta_{k+1} \E\big[\langle \bG_t^{(k)}, \nabla\overrightarrow{\mathcal{L}_t}(\bA^{(k)}; \overleftarrow\bx_t^{(k+1)})\rangle\big] + \eta_{k+1}^2 \E\big[\|\nabla\overrightarrow{\mathcal{L}_t}(\bA_t^{(k)}; \overleftarrow\bx^{(k+1)}_t)\|^2_2|\mathcal{F}_k\big] \\
    &=\|\bG_t^{(k+1)}\|^2_2-2\eta_{k+1} \langle \bG_t^{(k)}, \nabla \overrightarrow{\bL}_t(\bA_t^{(k)})\rangle + \eta_{k+1}^2 \E\big[\|\nabla\overrightarrow{\mathcal{L}_t}(\bA_t^{(k)}; \overleftarrow\bx^{(k+1)}_t)\|^2_2|\mathcal{F}_k\big],
\end{split}
\end{equation*}
where the second equality is followed by the unbiasedness property in Eq.\eqref{unbiasedness}.

Applying the stepsize assumption \ref{ass_step_size}, we have 
\begin{equation*}
    \eta_{k+1} - \eta_k + 2m \eta_k\eta_{k+1} \geq C \eta_{k+1}^2. 
\end{equation*}

Then for $\eta_k\leq \frac{1}{2}$, we have
\begin{equation*}
    2(\eta_{k+1} - \eta_k + \eta_k\eta_{k+1}(2m -\eta_{k+1} C)) \geq C \eta_{k+1}^2. 
\end{equation*}

Rewrite the above equation as follows
\begin{equation*}\label{convergence_seq}
    2\eta_{k+1} \geq (1 - 2\eta_{k+1}m +C\eta_{k+1}^2)(2\eta_k) + C\eta_{k+1}^2. 
\end{equation*}

% Combining the iterates Eq.\eqref{main_iterate} and Eq.\eqref{convergence_seq} and 

By the induction method, we have
\begin{itemize}
    \item Given some large enough $k\geq k_0$, where $\eta_{k_0}\leq \frac{1}{2}$, $\bA_t^{(k)}$ is in some subset $\bTheta$ \footnote{By assumption \ref{ass_local_state_space}, such $\bTheta\subset \mathcal{A}$ exists, otherwise it implies that the mean field function is a constant and conclusion holds as well.} of $\mathcal{A}$ that follows
    \begin{equation}
        \E[\|\bG_t^{k}\|_2^2]\leq 2\eta_k. \label{final_sa_convergence}
    \end{equation}
    \item Applying Eq.\eqref{main_iterate} and Eq.\eqref{convergence_seq}, respectively, we have
    \begin{align}
        \E[\|\bG_t^{(k+1)}\|_2^2|\mathcal{F}_k]&\leq (1-2\eta_{k+1}m)\E[\|\bG_t^{(k)}\|_2^2] + \eta_{k+1}^2 \E\big[\|\nabla\overrightarrow{\mathcal{L}_t}(\bA_t^{(k)}; \overleftarrow\bx^{(k+1)}_t)\|^2_2|\mathcal{F}_k\big]\notag\\
    &\leq (1-2\eta_{k+1}m + C\eta_{k+1}^2)\E[\|\bG_t^{(k)}\|_2^2] + C\eta_{k+1}^2,\notag\\
    &\leq (1-2\eta_{k+1}m + C\eta_{k+1}^2)(2\eta_k) + C\eta_{k+1}^2\label{final_sa_convergence2}\\
        &\leq 2\eta_{k+1}\notag,
    \end{align}
    where the first inequality is held by the stability property in Lemma \ref{lemma_local_stability} and the last inequality is followed by the growth property in Lemma \ref{lemma_linear_growth}.
\end{itemize}

Since $\bA_t^{\star}, \bA_t^{(k)} \in \bTheta$, Eq.\eqref{final_sa_convergence2} implies that $\bA_t^{(k+1)}\in \bTheta$, which concludes the proof. \qed
\end{proof}

\subsection{Part III: Convergence of Adaptive Samples based on The Optimal $\bA^{\star}$}
\label{part_4}

We have evaluated the sample quality in Theorem \ref{theorem:quality_of_data} based on a fixed $\bA_t$, which, however, may not be efficient in terms of transportation plans. To evaluate the sample quality in terms of the limiting optimal $\bA^{\star}$, we provide the result as follows:   
\begin{theorem}
\label{theorem_adaptive_sampling}
Given assumptions \ref{ass_local_state_space}-\ref{ass_step_size}, the generated sample distribution at stage $k$ is $\epsilon$-close to the exact sample distribution $\overrightarrow{\rho}^{\star}_{T}$ based on the equilibrium $\bA_t^{\star}$ such that
\begin{equation*}
        \mathrm{TV}(\overleftarrow{\rho}^{\star}_{0}, \rho_{\text{data}})\lesssim \sqrt{\mathrm{KL}(\rho_{\text{data}}\|\gamma^d)} \exp(-T) + (L\sqrt{dh} + L \mathfrak{m}_2 h)\sqrt{T} + (\epsilon_{\text{score}} + \sqrt{\eta_k})\sqrt{T}.
    \end{equation*}
\end{theorem}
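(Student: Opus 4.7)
The plan is to combine Theorem~\ref{theorem:quality_of_data} (generation error for any fixed $\bA_t$) with Theorem~\ref{theorem_L2_convergence} (stochastic-approximation convergence of $\bA_t^{(k)}$) via a triangle inequality and a Girsanov-type comparison of two backward path laws.

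\textbf{Step 1 (reuse of Theorem 1).} First I would apply Theorem~\ref{theorem:quality_of_data} directly to the stage-$k$ backward SDE, which is driven by the iterate $\bA_t^{(k)}$ and the trained score $s_t^{(k+1)}$. Since Assumption~\ref{ass_local_state_space} holds uniformly on $\mathcal{A}$, every constant appearing in Theorem~\ref{theorem:quality_of_data} (in particular the LSI constant $\alpha$ from Lemma~\ref{low_LSI}) can be taken independent of $k$. This yields
\begin{equation*}
\mathrm{TV}(\overleftarrow{\rho}^{(k)}_0,\rho_{\text{data}}) \lesssim \sqrt{\mathrm{KL}(\rho_{\text{data}}\|\gamma^d)}\,e^{-T} + (L\sqrt{dh}+L\mathfrak{m}_2 h)\sqrt{T} + \epsilon_{\text{score}}\sqrt{T},
\end{equation*}
which already produces every term of the claim except $\sqrt{\eta_k T}$.

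\textbf{Step 2 (Girsanov comparison).} To introduce the $\sqrt{\eta_k T}$ piece I decompose
\begin{equation*}
\mathrm{TV}(\overleftarrow{\rho}^{\star}_0,\rho_{\text{data}}) \leq \mathrm{TV}(\overleftarrow{\rho}^{\star}_0,\overleftarrow{\rho}^{(k)}_0) + \mathrm{TV}(\overleftarrow{\rho}^{(k)}_0,\rho_{\text{data}}),
\end{equation*}
where $\overleftarrow{\rho}^{\star}_0$ is the distribution produced by the backward SDE \eqref{backward_DSM} driven by the equilibrium $\bA_t^{\star}$ and its exact score. For $\mathrm{TV}(\overleftarrow{\rho}^{\star}_0,\overleftarrow{\rho}^{(k)}_0)$ I would Girsanov-compare the two backward path measures $\overleftarrow{\mathbb{P}}^{(k)},\overleftarrow{\mathbb{P}}^{\star}$. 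Their drift gap splits into a linear part $\beta_t(\bA_t^{(k)}-\bA_t^{\star})\overleftarrow\bx_t$ and a score part that Assumption~\ref{ass_smoothness} (Lipschitz in $\bA$) bounds by $\beta_t L\|\bA_t^{(k)}-\bA_t^{\star}\|+\beta_t\epsilon_{\text{score}}$. Squaring the gap, integrating in $t$, and taking expectation using (i) a uniform-in-$k$ second-moment bound on $\overleftarrow\bx_t$ furnished by Assumptions~\ref{ass_local_state_space} and \ref{ass_moment_bound}, and (ii) the bound $\E\|\bA_t^{(k)}-\bA_t^{\star}\|_2^2\leq 2\eta_k$ from Theorem~\ref{theorem_L2_convergence}, gives $\mathrm{KL}(\overleftarrow{\mathbb{P}}^{(k)}\,\|\,\overleftarrow{\mathbb{P}}^{\star})\lesssim \eta_k T + \epsilon_{\text{score}}^2 T$. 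Pinsker's inequality then yields $\mathrm{TV}(\overleftarrow{\rho}^{\star}_0,\overleftarrow{\rho}^{(k)}_0)\lesssim \sqrt{\eta_k T}+\epsilon_{\text{score}}\sqrt{T}$, and summing with Step~1 produces the stated bound.

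\textbf{Main obstacle.} The delicate step is controlling the $\bx$-dependent linear drift gap $\beta_t\|(\bA_t^{(k)}-\bA_t^{\star})\overleftarrow\bx_t\|^2$ inside the Girsanov integrand: this demands a second-moment bound on $\overleftarrow\bx_t$ that is uniform in both $k$ and $t$. The dissipativity $\bD_t^{(k)}\succeq \lambda_{\min}\bI$ from Assumption~\ref{ass_local_state_space}, together with the data moment bound in Assumption~\ref{ass_moment_bound}, supplies this estimate via a standard linear-SDE moment calculation analogous to the one underlying Lemma~\ref{low_LSI}. A secondary technicality is that Theorem~\ref{theorem_L2_convergence} is only pointwise-in-$t$; this suffices because the Girsanov KL is itself an integral in $t$, so Fubini converts the pointwise bound $2\eta_k$ into the factor $\eta_k T$ after integration.
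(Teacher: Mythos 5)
Your proof is correct and arrives at the stated bound, but it follows a genuinely different route from the paper's. The paper's proof is a one-step argument: it uses Assumption~\ref{ass_smoothness} (Lipschitz in $\bA$) together with Theorem~\ref{theorem_L2_convergence} to show $\E_{\overrightarrow{\rho}_t^{(k)}}[\|\nabla\log\overrightarrow{\rho}_t^{(k)}-\nabla\log\overrightarrow{\rho}_t^{\star}\|_2^2]\lesssim\eta_k$, deduces by the triangle inequality that $\E_{\overrightarrow{\rho}_t^{(k)}}[\|s_t^{(k)}-\nabla\log\overrightarrow{\rho}_t^{\star}\|_2^2]\lesssim\epsilon_{\text{score}}^2+\eta_k$, and then simply re-invokes Theorem~\ref{theorem:quality_of_data} with the $\bA^\star$-process as the reference diffusion and this enlarged quantity standing in for $\epsilon_{\text{score}}^2$. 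In other words, the paper pushes the stage-$k$ mismatch entirely into the score-estimation error before applying Theorem~\ref{theorem:quality_of_data} a single time; your version instead applies Theorem~\ref{theorem:quality_of_data} to the stage-$k$ process as-is, and then adds a second layer — a TV triangle inequality plus a Girsanov/Pinsker comparison of the two backward path laws — to transfer the estimate to $\overleftarrow{\rho}^\star_0$. Both routes reach the same final inequality. The paper's route is more economical (the Lipschitz-in-$\bA$ bound is pointwise in $\bx$, so no moment estimate is needed for the score comparison step), whereas your route is more explicit about where the drift discrepancy $\beta_t(\bA_t^{(k)}-\bA_t^\star)\overleftarrow\bx_t$ enters and correctly flags the need for a uniform-in-$k$ second-moment bound on $\overleftarrow\bx_t$ — a point the paper's proof sidesteps by not tracking the change of $\bD_t^{(k)}$ to $\bD_t^{\star}$ in the reverse drift. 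Your ``main obstacle'' discussion is thus a genuine technical contribution that the paper leaves implicit, and the observation that the pointwise-in-$t$ bound from Theorem~\ref{theorem_L2_convergence} suffices after time-integration in the Girsanov KL is also correct.
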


\begin{proof}
By assumption \ref{ass_score_estimation}, for any $\bA^{(k)}_t\in \mathcal{A}$, we have
$$\E_{\overrightarrow{\rho}_{t}^{(k)}}[\|s_{t}^{(k)}- \nabla \log \overrightarrow{\rho}_{t}^{(k)}\|_2^2]\leq \epsilon_{\text{score}}^2.$$

Combining Theorem \ref{theorem_L2_convergence} and the smoothness assumption \ref{ass_smoothness} of the score function $\nabla\log  \overrightarrow{\rho}^{(k)}_{t}$ w.r.t $\bA^{(k)}_t$, we have
\begin{equation}\label{closedness}
    \E_{\overrightarrow{\rho}_{t}^{(k)}}[\|\nabla \log \overrightarrow{\rho}^{(k)}_{t}- \nabla \log \overrightarrow{\rho}^{\star}_{t}\|_2^2]\lesssim \eta_k.
\end{equation}

It follows that the score function $s_{t}^{(k)}$ is also close to the optimal $\nabla \log \overrightarrow{\rho}_{t}^{\star}$ in the sense that
\begin{equation}
\begin{split}\label{adaptive_score_error}
    &\quad\ \E_{\overrightarrow{\rho}_{t}^{(k)}}[\|s_{t}^{(k)}- \nabla \log \overrightarrow{\rho}_{t}^{\star}\|_2^2]\\
    &\lesssim \E_{\overrightarrow{\rho}_{t}^{(k)}}[\underbrace{\|s_{t}^{(k)}- \nabla \log \overrightarrow{\rho}_{t}\|_2^2}_{\text{by Assumption \ref{ass_score_estimation}}}] + \E_{\overrightarrow{\rho}^{(k)}_{t}}[\underbrace{\|\nabla \log \overrightarrow{\rho}_{t}^{(k)}- \nabla \log \overrightarrow{\rho}_{t}^{\star}\|_2^2}_{\text{by Eq.\eqref{closedness}}}] \\
    &\lesssim \epsilon_{\text{score}}^2+ \eta_k.
\end{split}
\end{equation}

Applying Theorem \ref{theorem:quality_of_data} with the adaptive score error in Eq.\eqref{adaptive_score_error} to replace $\epsilon_{\text{score}}^2$ concludes the proof.
\qed
\end{proof}

\paragraph{Remark:} The convergence of samples based on the adaptive algorithms is slightly weaker than the standard one due to the adaptive update, but this is necessary because $\bA_t^{\star}$ is more transport efficient than a vanilla $\bA_t$.

\section{Variational Gap}
\label{app:variational_gap}

Recall that the optimal forward SDE in the forward-backward SDEs \eqref{FB-SDE} follows that
\begin{equation}\label{forward_fb_sde}
    {\dd  \overrightarrow\bx_t}={\left[\bbf_t(\overrightarrow\bx_t) +  \beta_t\nabla\log\overrightarrow\psi_t(\overrightarrow\bx_t)\right]\dd t}+ \sqrt{\beta_t} \dd  \overrightarrow\bw_t.  
\end{equation}

The optimal variational forward SDE follows that
\begin{equation}\label{optimal_forward_linear_fb_sde}
    \dd \overrightarrow\bx_t=\left[\bbf_t(\overrightarrow\bx_t) +  {\beta_t} \bA_t^{\star}\overrightarrow\bx_t\right]\dd t+\sqrt{\beta_t} \dd \overrightarrow\bw_t.
\end{equation}

The variational forward SDE at the $k$-th step follows that
\begin{equation}\label{forward_linear_fb_sde}
    \dd \overrightarrow\bx_t=\left[\bbf_t(\overrightarrow\bx_t) +  {\beta_t} \bA_t^{(k)}\overrightarrow\bx_t\right]\dd t+\sqrt{\beta_t} \dd \overrightarrow\bw_t.
\end{equation}

Since we only employ a linear approximation of the forward score function, our transport is only sub-optimal. To assess the extent of this discrepancy, we leverage the Girsanov theorem to study the variational gap.  

We denote the law of the processes by $\mathrm{L}(\cdot)$ in Eq.\eqref{forward_fb_sde}, ${\mathrm{L}}^{\star}(\cdot)$ in Eq.\eqref{optimal_forward_linear_fb_sde} and ${\mathrm{L}}^{(k)}(\cdot)$ in Eq.\eqref{forward_linear_fb_sde}, respectively.

\begin{theorem}
Assume assumptions \ref{ass_smoothness} and \ref{ass_moment_bound} hold. Assume $\bbf_t$ and $\nabla\log\overrightarrow\psi_t$ are Lipschitz smooth and satisfy the linear growth condition. Assume the Novikov’s condition holds for $\forall \bA_t\in \mathcal{A}$, where $t\in[0, T]$:
\begin{equation*}
    \E\bigg[\exp\bigg(\frac{1}{2}\int_0^T \| {\beta_t} \bA_t\overrightarrow\bx_t - \beta_t\nabla\log\overrightarrow\psi_t(\overrightarrow\bx_t) \|_2^2 \dd t\bigg)\bigg]<\infty.
\end{equation*}

The variational gap (VG) via the linear parametrization is upper bounded by
\begin{equation*}
\begin{split}
    &{\mathrm{KL}({\mathrm{L}}\|{\mathrm{L}}^{\star})=\frac{1}{2} \int_0^T \E_{\overrightarrow{\rho}_t}\bigg[\beta_t \|\bA_t^{\star}\overrightarrow\bx_t - \nabla\log\overrightarrow\psi_t(\overrightarrow\bx_t)\|_2^2\dd t\bigg]}\\
    &\mathrm{KL}({\mathrm{L}}\|{\mathrm{L}}^{(k)}) \lesssim \eta_k + \mathrm{KL}({\mathrm{L}}\|{\mathrm{L}}^{\star}).
\end{split}
\end{equation*}

\end{theorem}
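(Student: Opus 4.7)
The plan is to apply Girsanov's theorem in both cases, leveraging the fact that the three forward SDEs share the same diffusion coefficient $\sqrt{\beta_t}$ and differ only in their drifts. First I would verify that Novikov's condition (which is assumed) permits us to define the Radon--Nikodym derivative $\frac{d\mathrm{L}}{d\mathrm{L}^{\star}}$ via the exponential martingale whose drift correction is $u_t^{\star} = \sqrt{\beta_t}\bigl(\nabla\log\overrightarrow\psi_t(\overrightarrow\bx_t) - \bA_t^{\star}\overrightarrow\bx_t\bigr)$. Then standard Girsanov gives
\begin{equation*}
\mathrm{KL}(\mathrm{L}\|\mathrm{L}^{\star}) = \frac{1}{2}\,\E_{\mathrm{L}}\!\left[\int_0^T \|u_t^{\star}\|_2^2\,\dd t\right] = \frac{1}{2}\int_0^T \E_{\overrightarrow{\rho}_t}\!\left[\beta_t\,\|\bA_t^{\star}\overrightarrow\bx_t - \nabla\log\overrightarrow\psi_t(\overrightarrow\bx_t)\|_2^2\right]\dd t,
\end{equation*}
which yields the first identity immediately.

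For the second inequality, the analogous Girsanov computation with drift correction $u_t^{(k)} = \sqrt{\beta_t}\bigl(\nabla\log\overrightarrow\psi_t - \bA_t^{(k)}\overrightarrow\bx_t\bigr)$ gives
\begin{equation*}
\mathrm{KL}(\mathrm{L}\|\mathrm{L}^{(k)}) = \frac{1}{2}\int_0^T \E_{\overrightarrow{\rho}_t}\!\left[\beta_t\,\|\nabla\log\overrightarrow\psi_t(\overrightarrow\bx_t) - \bA_t^{(k)}\overrightarrow\bx_t\|_2^2\right]\dd t.
\end{equation*}
The natural next step is to add and subtract $\bA_t^{\star}\overrightarrow\bx_t$ inside the norm and apply the elementary inequality $\|a+b\|_2^2 \leq 2\|a\|_2^2 + 2\|b\|_2^2$, producing a decomposition into the irreducible term (which is exactly $\mathrm{KL}(\mathrm{L}\|\mathrm{L}^{\star})$ up to a constant) and a residual $\int_0^T \beta_t\,\E[\|(\bA_t^{(k)}-\bA_t^{\star})\overrightarrow\bx_t\|_2^2]\,\dd t$. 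Bounding the operator/matrix norm and applying Cauchy--Schwarz (together with the conditional independence of $\bA_t^{(k)}$ from $\overrightarrow\bx_t$, since $\bA_t^{(k)}$ is $\mathcal{F}_k$-measurable whereas $\overrightarrow\bx_t$ is drawn afresh) gives
\begin{equation*}
\E\bigl[\|(\bA_t^{(k)}-\bA_t^{\star})\overrightarrow\bx_t\|_2^2\bigr] \leq \E\bigl[\|\bA_t^{(k)}-\bA_t^{\star}\|^2\bigr]\cdot \E_{\overrightarrow{\rho}_t}[\|\overrightarrow\bx_t\|_2^2].
\end{equation*}

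The first factor is controlled by Theorem~\ref{theorem_L2_convergence}, which gives $\E[\|\bA_t^{(k)}-\bA_t^{\star}\|^2]\lesssim \eta_k$. The second factor is uniformly bounded over $t\in[0,T]$ because of Assumption~\ref{ass_moment_bound} and the Lipschitz/linear-growth hypotheses on $\bbf_t$ and $\nabla\log\overrightarrow\psi_t$, which yield standard moment estimates for the SDE solution. Integrating in $t$ with $\beta_t$ bounded on $[0,T]$ therefore produces a contribution $\lesssim \eta_k$, and combining with the residual identity above yields $\mathrm{KL}(\mathrm{L}\|\mathrm{L}^{(k)}) \lesssim \eta_k + \mathrm{KL}(\mathrm{L}\|\mathrm{L}^{\star})$.

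The main technical obstacle I anticipate is the moment bound on $\overrightarrow\bx_t$ under the law $\mathrm{L}$: the drift $\nabla\log\overrightarrow\psi_t$ is only assumed smooth/linearly growing, so a careful Gr\"onwall argument on $\E_{\mathrm{L}}[\|\overrightarrow\bx_t\|_2^2]$ is required to produce a uniform bound on $[0,T]$. A secondary subtlety is ensuring that the Novikov condition stated in the hypothesis transfers from $\bA_t$ to both $\bA_t^{\star}$ and the stochastic iterate $\bA_t^{(k)}$; since the statement quantifies over all $\bA_t \in \mathcal{A}$, and both $\bA_t^{\star}$ and $\bA_t^{(k)}$ lie in $\mathcal{A}$ (indeed in the neighborhood $\bTheta$ for $k$ large by Theorem~\ref{theorem_L2_convergence}), this follows directly, but it is worth noting explicitly before invoking Girsanov.
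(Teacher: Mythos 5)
Your proposal is correct and follows essentially the same route as the paper: both apply Girsanov directly to get the exact KL formula for $\mathrm{KL}(\mathrm{L}\|\mathrm{L}^{\star})$, then decompose the drift difference in $\mathrm{KL}(\mathrm{L}\|\mathrm{L}^{(k)})$ by adding and subtracting $\bA_t^{\star}\overrightarrow\bx_t$ and invoking $\|a+b\|_2^2 \leq 2\|a\|_2^2 + 2\|b\|_2^2$, controlling the $\|(\bA_t^{(k)}-\bA_t^{\star})\overrightarrow\bx_t\|_2^2$ term via Theorem~\ref{theorem_L2_convergence}. You are in fact slightly more careful than the paper's proof: you explicitly factor the residual as $\E[\|\bA_t^{(k)}-\bA_t^{\star}\|^2]\cdot\E_{\overrightarrow\rho_t}[\|\overrightarrow\bx_t\|_2^2]$ (the paper states the $\lesssim\eta_k$ bound without spelling out the measurability argument or the moment bound on $\overrightarrow\bx_t$ under $\mathrm{L}$), and you flag that Novikov must hold at $\bA_t^{\star}$ and $\bA_t^{(k)}$, not just the generic $\bA_t$ in the hypothesis.
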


\begin{proof}

By Girsanov's formula \citep{Liptser01}, the Radon–Nikodym derivative of ${\mathrm{L}}(\cdot)$ w.r.t. ${\mathrm{L}}^{\star}(\cdot)$ follows that
\begin{equation*}
    \frac{\dd \mathrm{L}}{\dd \mathrm{L}^{\star}}\big(\overrightarrow\bx\big)=\exp\bigg(\int_0^T \sqrt{\beta_t} \bigg(\bA_t^{\star}\overrightarrow\bx_t - \nabla\log\overrightarrow\psi_t(\overrightarrow\bx_t) \bigg) \dd \bw_t - \frac{1}{2}\int_0^T \beta_t \|\bA_t^{\star}\overrightarrow\bx_t - \nabla\log\overrightarrow\psi_t(\overrightarrow\bx_t)\|_2^2\dd t\bigg),
\end{equation*}
where $\bw_t$ is the Brownian motion under the Wiener measure. Consider a change of measure \citep{oksendal2003stochastic, log_concave_sampling}
\begin{equation*}
    \bw_t=\widetilde\bw_t-\dd\big[\bw, \bM\big]_t, \quad \dd \bM_t = \big\langle \sqrt{\beta_t} \big(\bA_t^{\star}\overrightarrow\bx_t - \nabla\log\overrightarrow\psi_t(\overrightarrow\bx_t)\big), \dd\bw_t \big\rangle,
\end{equation*}
where $\widetilde \bw_t$ is a $\mathrm{L}$-standard Brownian motion and satisfies martingale property under the $\mathrm{L}$ measure.

Now the variational gap is upper bounded by
\begin{equation*}
\begin{split}
    \text{KL}({\mathrm{L}}(\cdot)\|\mathrm{L}^{\star}(\cdot)) &=-\E_{\mathrm{L}(\cdot)}\bigg[\log\frac{\dd \mathrm{L}(\cdot)}{\dd \mathrm{L}^{\star}(\cdot)}\bigg]\\
    &=\E_{\mathrm{L}(\cdot)}\bigg[\int_0^T \sqrt{\beta_t} \bigg(\bA_t^{\star}\overrightarrow\bx_t - \nabla\log\overrightarrow\psi_t(\overrightarrow\bx_t) \bigg) \dd \widetilde\bw_t + \frac{1}{2}\int_0^T \beta_t \|\bA_t^{\star}\overrightarrow\bx_t - \nabla\log\overrightarrow\psi_t(\overrightarrow\bx_t)\|_2^2\dd t\bigg]\\
    &=\frac{1}{2}\E_{\mathrm{L}(\cdot)}\bigg[\int_0^T \beta_t \|\bA_t^{\star}\overrightarrow\bx_t - \nabla\log\overrightarrow\psi_t(\overrightarrow\bx_t)\|_2^2\dd t\bigg]\\
    &=\frac{1}{2}\int_0^T \E\bigg[\beta_t \|\bA_t^{\star}\overrightarrow\bx_t - \nabla\log\overrightarrow\psi_t(\overrightarrow\bx_t)\|_2^2\bigg]\dd t.
\end{split}
\end{equation*}

Similarly, applying $(a+b)^2 \leq 2a^2 + 2b^2$, we have
\begin{equation*}
\begin{split}
    \text{KL}({\mathrm{L}}(\cdot)\|{\mathrm{L}}^{(k)}(\cdot)) &\leq \frac{3}{2}\int_0^T \E\bigg[\beta_t \big(\underbrace{\|\bA_t^{(k)}\overrightarrow\bx_t - \bA_t^{\star}\overrightarrow\bx_t\|_2^2}_{\text{convergence of SA}} +\underbrace{\|\bA_t^{\star}\overrightarrow\bx_t - \nabla\log\overrightarrow\psi_t(\overrightarrow\bx_t)\|_2^2}_{\text{variational gap based on $\bA_t^{\star}$}} \big)\bigg]\dd t \\
    &\lesssim \eta_k + \int_0^T \E\bigg[\beta_t \|\bA_t^{\star}\overrightarrow\bx_t - \nabla\log\overrightarrow\psi_t(\overrightarrow\bx_t)\|_2^2\bigg]\dd t.
\end{split}
\end{equation*}
\qed
\end{proof}

% \paragraph{Connections to Gaussian Schr\"odinger bridge}

% When we augment $\bx$ with an additional bias term and extend the time-invariant $\bA$ to the time-varying $\bA_t$, our formulation is theoretically identical to \citet{SB_closed_form}, which presents the closed-form optimal transport solution of Schr\"odinger bridge when the two marginals are both Gaussian distributions. We refer readers to Theorem 3  \citep{SB_closed_form} for the detailed transportation plans.

% A natural extension with positional encoding \citep{DSB} and an efficient  representation could significantly reduce the variational gap and yield $\text{Loss}_{\text{VG}}\approx 0$. 

\section{Experimental Details}
\label{supp_exp_details}

\subsection{Parametrization of the Variational Score}

For the general transport, there is no closed-form update and we adopt an SVD decomposition with time embeddings to learn the linear dynamics in Figure \ref{linear_module}. The number of parameters is reduced by thousands of times, which have greatly reduced the training variance \citep{FFJORD}.

\begin{figure}[!ht]
  \centering
    \subfigure{\includegraphics[scale=0.35]{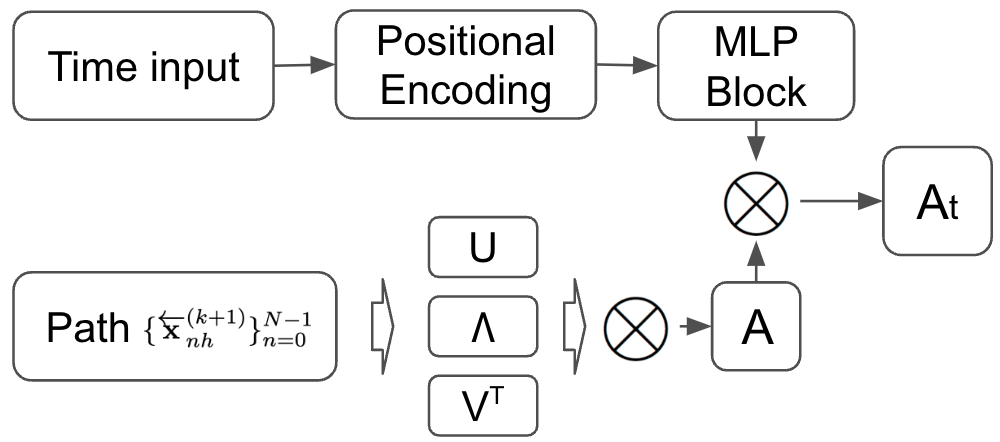}}
    \vspace{-0.1in}
  \caption{Architecture of the linear module. Both $U$ and $V$ are orthogonal matrices and $\Lambda$ denotes the singular values.}\label{linear_module}
  \vspace{-1em}
\end{figure}

\subsection{Synthetic Data}
\label{syn_appendix}

\subsubsection{Checkerboard Data}
\label{Checkerboard_data}
The generation of the checkerboard data is presented in Figure. \ref{VSDM_SGM_check}. The probability path is presented in Figure. \ref{trajectories_dynamics_check}. The conclusion is similar to the spiral data.

\begin{figure}[!ht]
  \centering
  \subfigure{\includegraphics[scale=0.13]{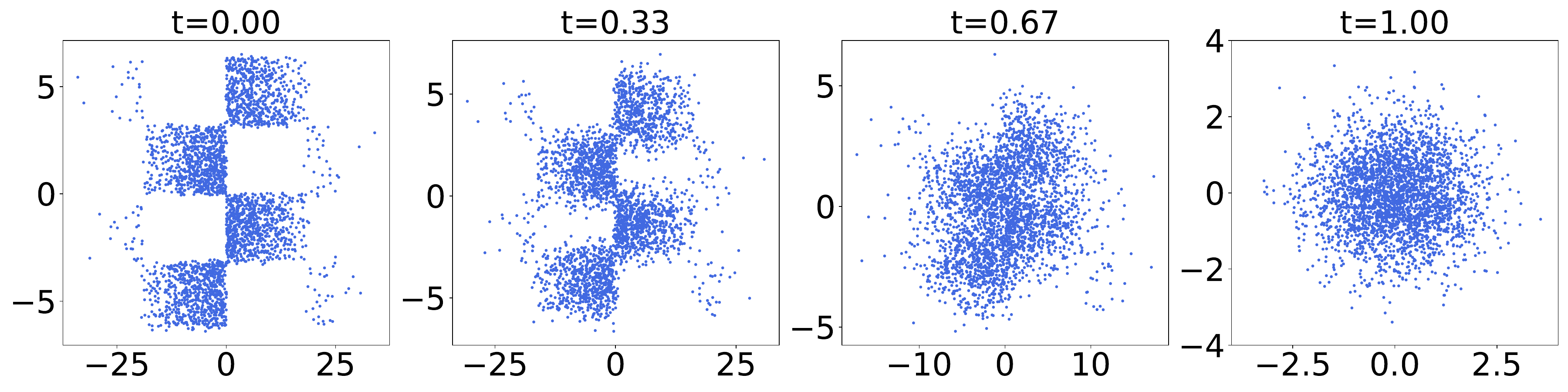}} %\\
    \subfigure{\includegraphics[scale=0.13]{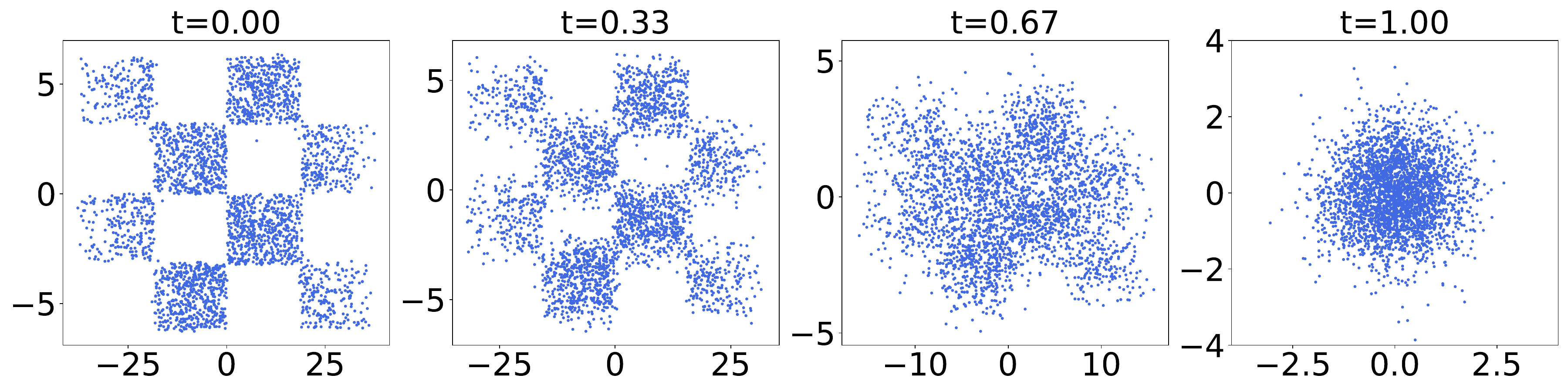}}
    \vspace{-0.05in}
  \caption{Variational Schr\"odinger diffusion models (VSDMs, right) v.s. SGMs (left) with the same hyperparameters ($\beta_{\max}=10$).}\label{VSDM_SGM_check}
  \vspace{-1em}
\end{figure}

\begin{figure*}[!ht]
  \centering
    \subfigure[\small{SGM-10}]{\includegraphics[scale=0.16]{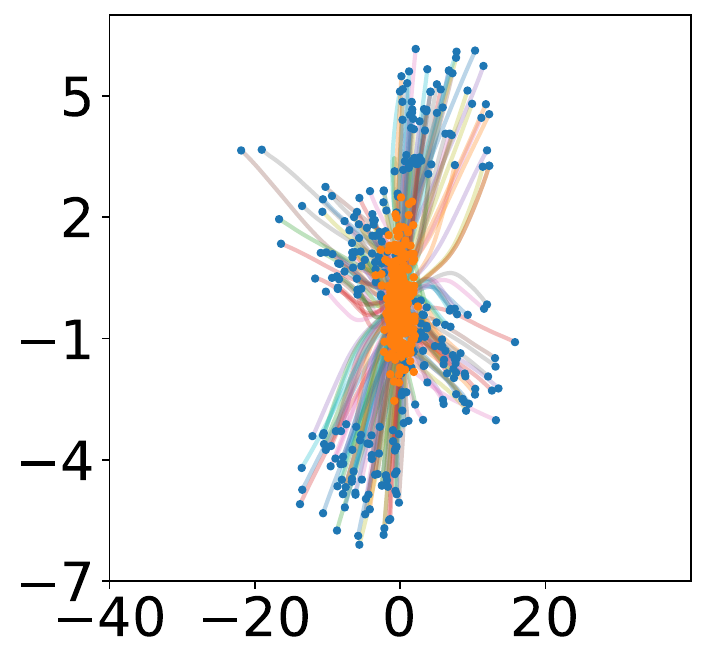}}
  \subfigure[\small{SGM-20}]{\includegraphics[scale=0.16]{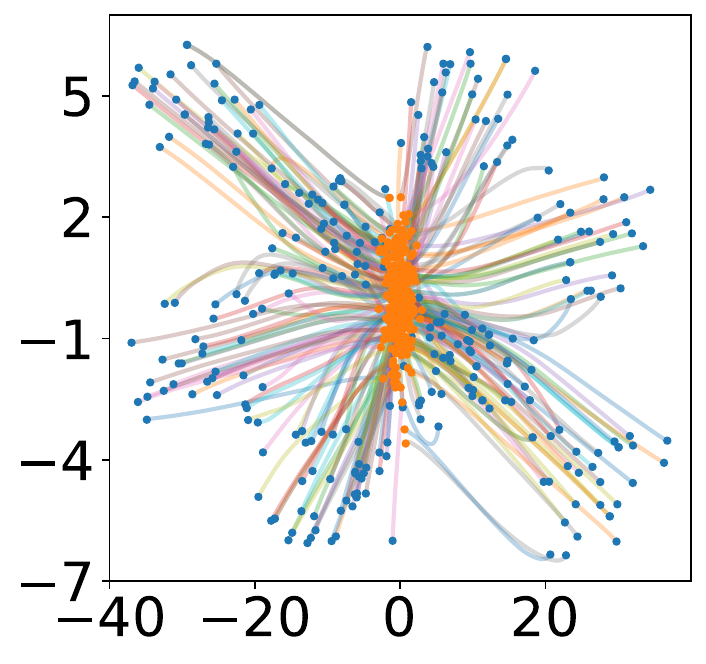}}
  \subfigure[\small{SGM-30}]{\includegraphics[scale=0.16]{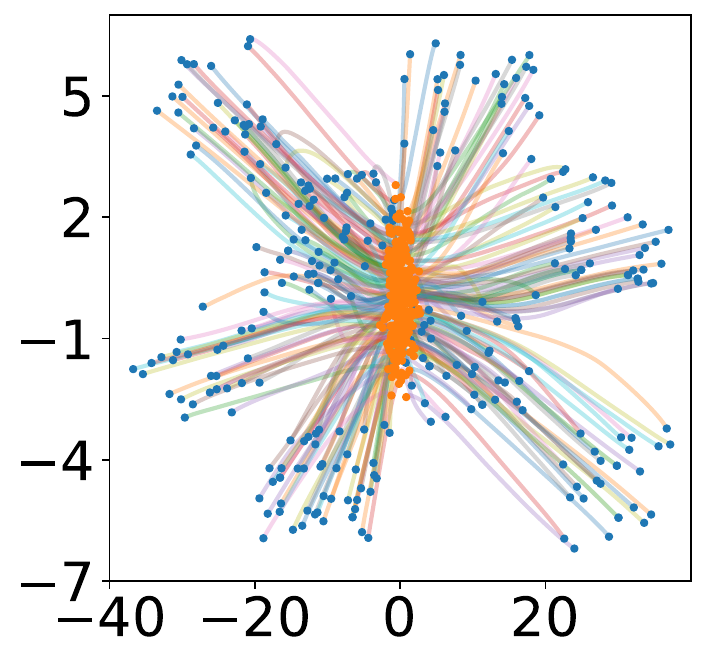}}
  \subfigure[\small{VSDM-10}]{\includegraphics[scale=0.16]{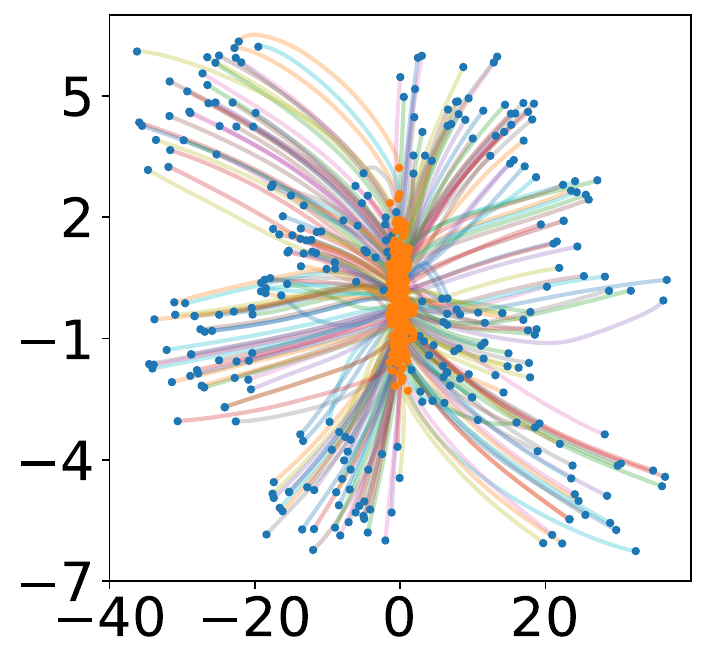}} \vskip -0.1in
  \caption{Probability flow ODE via VSDMs and SGMs. SGM with $\beta_{\max}=10$ is denoted by SGM-10 for convenience.}\label{trajectories_dynamics_check}
  \vspace{-1em}
\end{figure*}

\subsubsection{Convergence and Computational Time}

\paragraph{Convergence Study}
Under the same setup, VSDM-10 adaptively learns  $\bA_t$ (and $\bD_t$) on the fly and adapts through the pathological geometry via optimal transport. For the spiral-8Y data, the Y-axis of the singular values of $\bD_t$ (scaled by $\beta_{\max}$) converges from 10 to around 19 as shown in Figure \ref{eigenvalue_analysis_main}. The \textcolor{dark2blue}{singular value of the X-axis} quickly converges from 10 to a conservative scale of 7. We also tried VSDM-20 and found that both the Y-axis and X-axis converge to similar scales, which justifies the stability. 

\begin{figure}[!ht]
  \centering
    \subfigure[Spiral-8Y]{\includegraphics[scale=0.28]{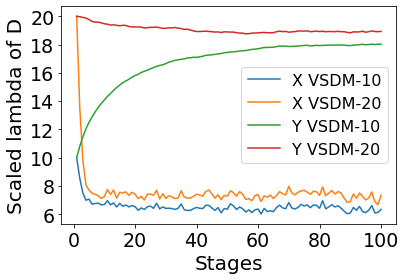}}
    \subfigure[Convergence v.s. time]{\includegraphics[scale=0.28]{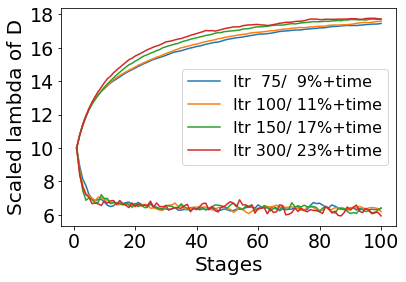}}
    \subfigure[Checkboard-6X]{\includegraphics[scale=0.28]{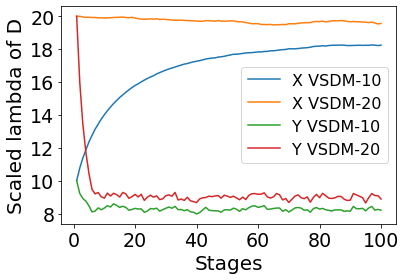}}
    \vspace{-0.1in}
  \caption{Optimization of $\bLambda$ of $\bD$ scaled by $\beta_{\max}$ (scaled lambda of $\bD$) of VSDM-10 and VSDM-20.}\label{eigenvalue_analysis_main}
  \vspace{-1em}
\end{figure}

\paragraph{Computational Time} We tried different budgets to train the variational scores and observed in Figure \ref{eigenvalue_analysis_main}(b) that 300 iterations yield the fastest convergence among the 4 choices but also lead to 23\% extra time compared to SGM. Reducing the number of iterations impacts convergence minimally due to the linearity of the variational scores and significantly reduces the training time.

\subsubsection{Evaluation of The Straightness}
\label{straightness_section}

Straighter trajectories lead to a smaller number of functional evaluations (NFEs). In section \ref{straightness_section}, we compare VSDM-20 with SGM-20 with NFE=6 and NFE=8 using the same computational budget and observe in Figure \ref{VSDM_SGM_NFE6} and \ref{VSDM_SGM_NFE8} the superiority of the VSDM model in generating more details.

To evaluate the straightness of the probability flow ODE, similar in spirit to \citet{Multisample_flow_matching}, we define our straightness metric by approximating the second derivative of the probability flow \eqref{backward_DSM_ode} as follows
\begin{equation}
\begin{split}
\label{straightness_metric}
    \text{S}(i)&=\int_0^T \E_{\overleftarrow{\bx}_t\sim \overleftarrow{\rho}_{t} }\bigg[\bigg|\frac{\dd^2 \overleftarrow{\bx}_t(i)}{\dd t^2}\bigg|\bigg] \dd t,\\
\end{split}
\end{equation}

where $i\in\{1,2\}$, $\overleftarrow{\bx}_t(1)$ and $\overleftarrow{\bx}_t(2)$ denote the X-axis and Y-axis, respectively. $S\geq 0$ and $S=0$ only when the transport is a straight path.

We report the straightness in Table \ref{straight_metric} and find the improvement of VSDM-10 over SGM-20 and SGM-30 is around 40\%. We also tried VSDM-20 on both datasets and found a significant improvement over the baseline SGM methods. However, despite the consistent convergence in Figure \ref{eigenvalue_analysis_main}, we found VSDM-20 still performs slightly worse than VSDM-10, which implies the potential to tune $\beta_{\max}$ to further enhance the performance.  

\begin{table}[ht]
\begin{sc}
\caption[Table caption text]{Straightness metric defined in Eq.\eqref{straightness_metric} via SGMs and VSDM with different $\beta_{\max}$'s. SGM with $\beta_{\max}=10$ (SGM-10) fails to generate data of anisotropic shapes and is not reported.}\label{straight_metric}
\small
\begin{center} 
\begin{tabular}{c|cc}
\hline
% \multirow{2}{*}{Model} & \multicolumn{3}{c|}{Spiral-8Y}  \\
% \cline{2-3}
\small{Straightness (X / Y)} & \small{Spiral-8Y}  &  \scriptsize{Checkerboard-6X}  \\
\hline
\small{SGM-20} & 8.3 / 49.3 &  53.5 / 11.0  \\
\small{SGM-30} & 9.4 / 57.3 &  64.6 / 13.1 \\
\small{VSDM-20} & \textbf{6.3 }/ 45.6 & 49.4 / \textbf{$\ $ 7.4}  \\
\small{VSDM-10} & \textbf{5.5 / 38.7} & \textbf{43.9 / $\ $ 6.5}  \\
\hline
\end{tabular}
\end{center} 
\end{sc}
\end{table}

% Straighter trajectories also lead to a smaller number of functional evaluations (NFEs). We compare VSDM-20 with SGM-20 with NFE=6 and NFE=8 using the same computational budget and observe in Figure \ref{VSDM_SGM_NFE6} and \ref{VSDM_SGM_NFE8} the superiority of the VSDM model in generating the corner details. 

\subsubsection{A Smaller Number of Function Evaluations}

We also compare our VSDM-20 with SGM-20 based on a small number of function evaluations (NFE). We use probability flow to conduct the experiments and choose a uniform time grid for convenience. We find that both models cannot fully generate the desired data with NFE=6 in Figure \ref{VSDM_SGM_NFE6} and VSDM appears to recover more details, especially on the top and bottom of the spiral. For the checkboard data, both models work nicely under the same setting and we cannot see a visual difference. With NFE=8 in Figure \ref{VSDM_SGM_NFE8}, we observe that our VSDM-20 works remarkably well on both datasets and is slightly superior to SGM-20 in generating the corner details.

\begin{figure*}[!ht]
  \centering
    \subfigure{\includegraphics[scale=0.13]{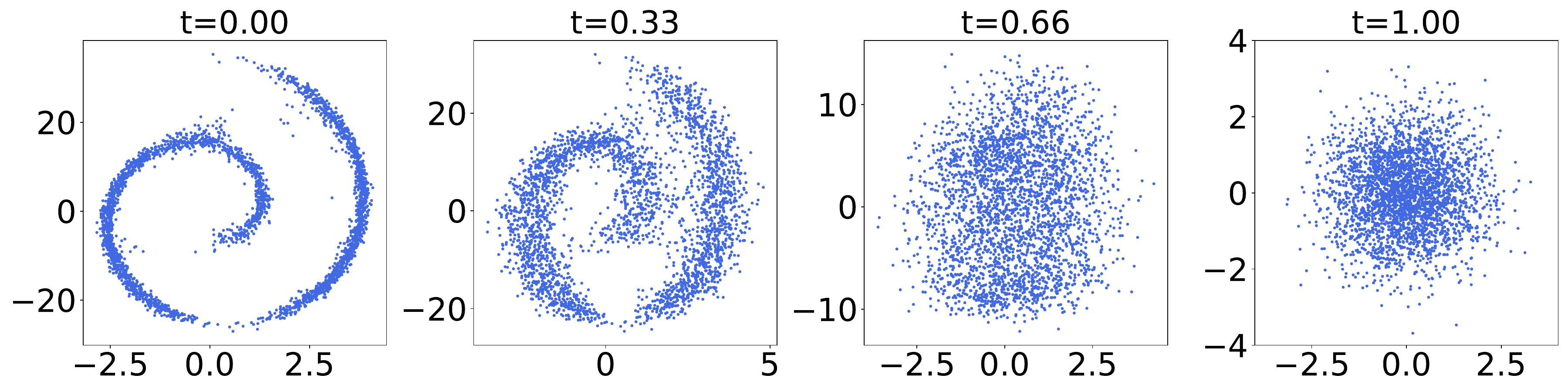}}
  \subfigure{\includegraphics[scale=0.13]{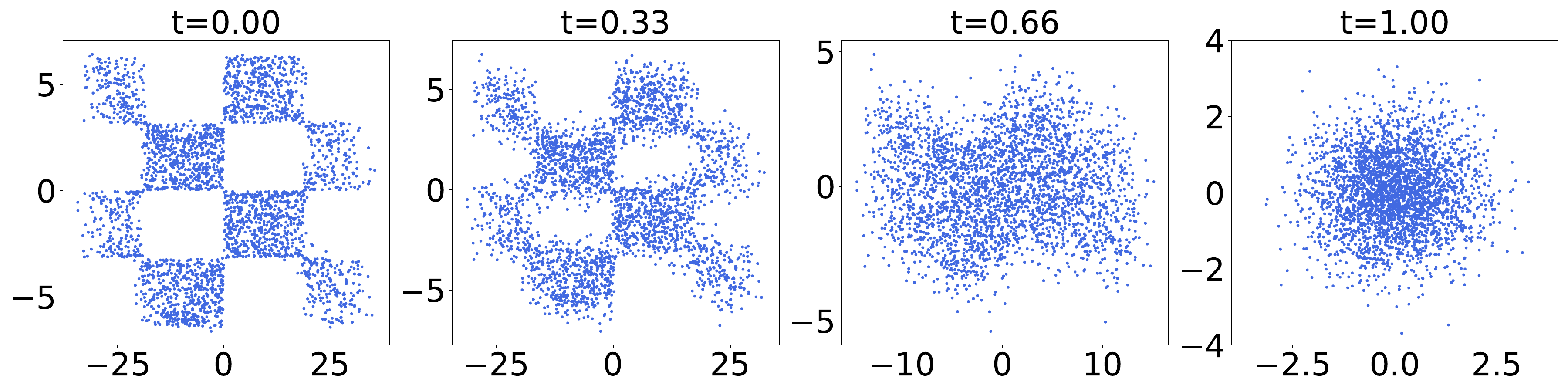}} \\
  \vspace{-0.18in}
    \subfigure{\includegraphics[scale=0.13]{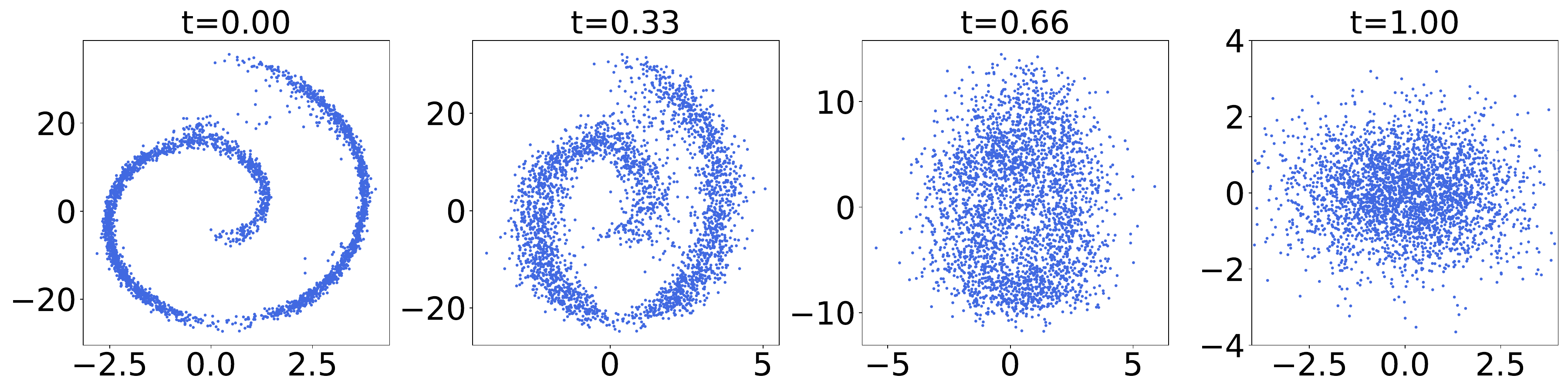}}
    \subfigure{\includegraphics[scale=0.13]{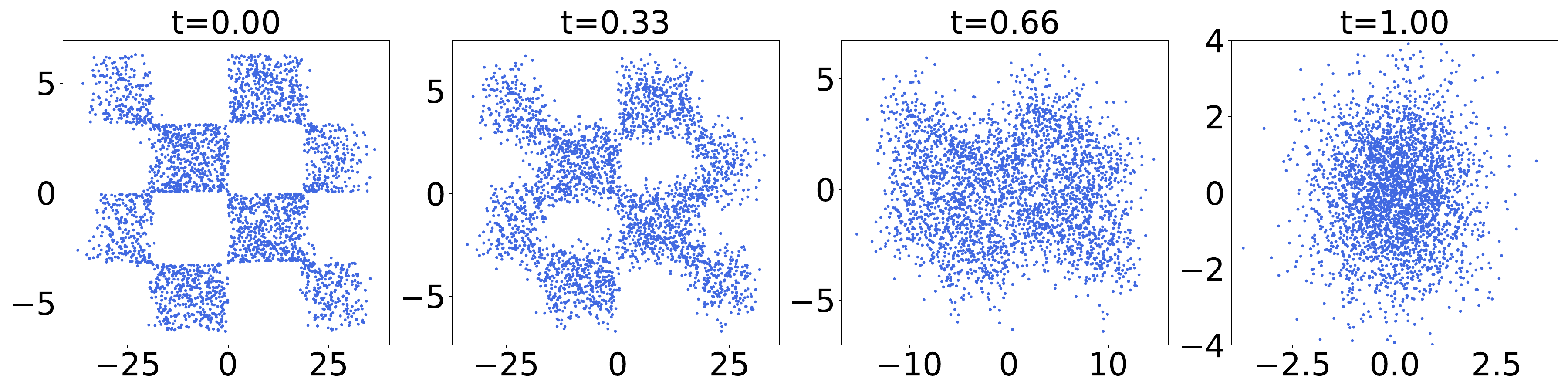}}
    \vspace{-0.15in}
  \caption{Variational Schr\"odinger diffusion models (bottom) v.s. SGMs (top) with the same hyperparameters ($\beta_{\max}=20$) and six function evaluations (NFE=6). Both models are generated by probability flow ODE.}\label{VSDM_SGM_NFE6}
  \vspace{-1em}
\end{figure*}

\begin{figure*}[!ht]
  \centering
    \subfigure{\includegraphics[scale=0.13]{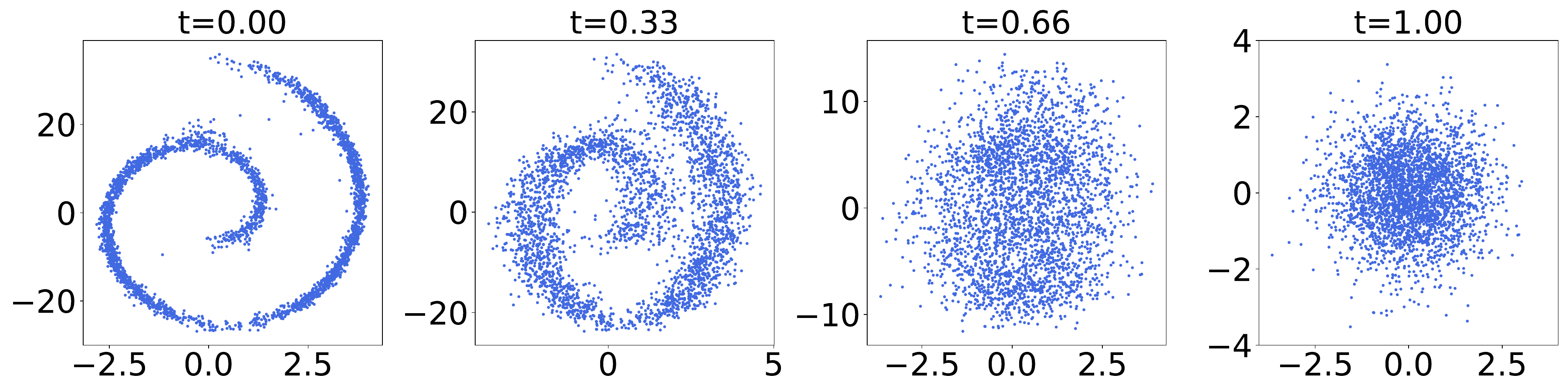}}
  \subfigure{\includegraphics[scale=0.13]{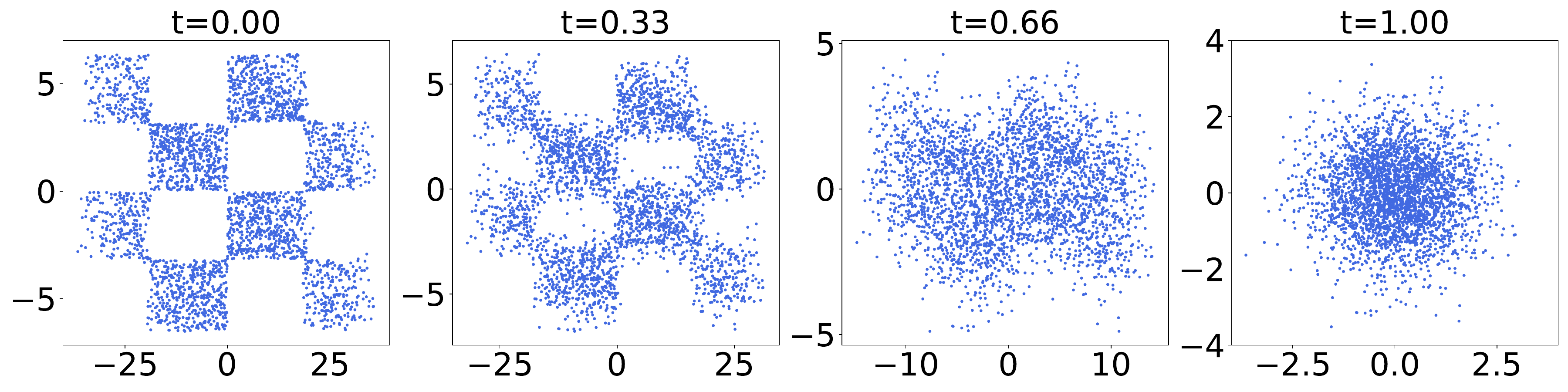}} \\
  \vspace{-0.18in}
    \subfigure{\includegraphics[scale=0.13]{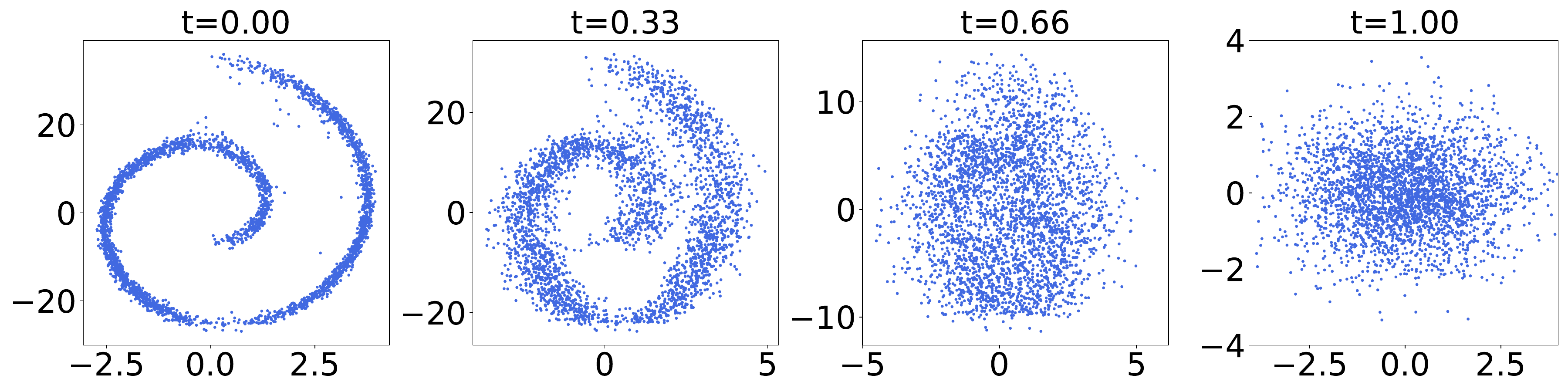}}
    \subfigure{\includegraphics[scale=0.13]{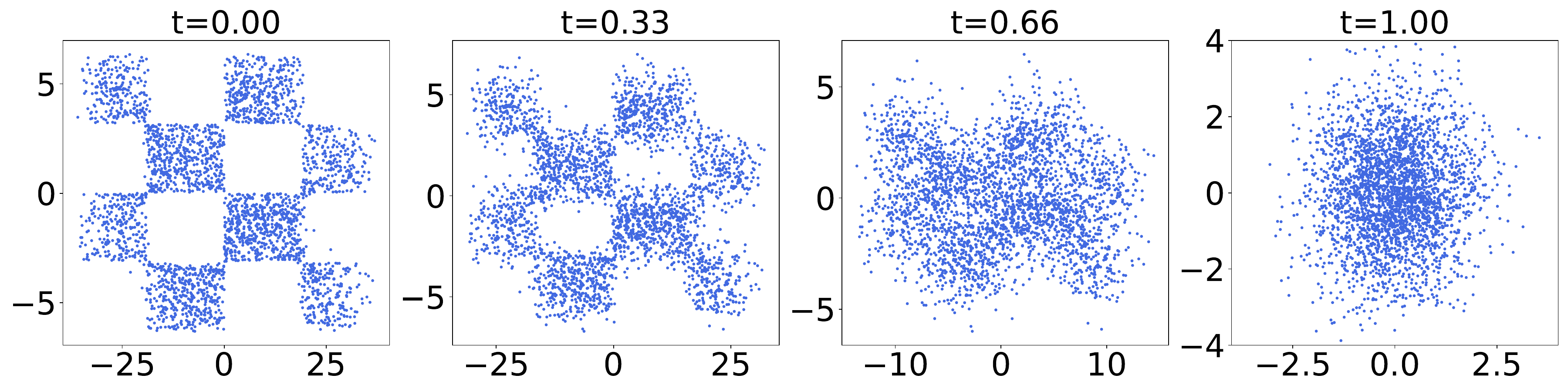}}
    \vspace{-0.15in}
  \caption{Variational Schr\"odinger diffusion models (bottom) v.s. SGMs (top) with the same hyperparameters ($\beta_{\max}=20$) and eight function evaluations (NFE=8). Both models are generated by probability flow ODE.}\label{VSDM_SGM_NFE8}
  \vspace{-1em}
\end{figure*}

\subsection{Multivariate Probabilistic Forecasting}\label{app:forecasting}

\paragraph{Data.} We use publicly available datasets. Exchange rate has 6071 8-dimensional measurements and a daily frequency. The goal is to predict the value over the next 30 days. Solar is an hourly 137-dimensional dataset with 7009 values. Electricity is also hourly, with 370 dimensions and 5833 measurements. For both, we predict the values over the next day.

% Traffic - 963-dimensional - 4001 values - hourly - 24 prediction
% Traffic - 1214-dimensional - 1488 - 30 min - 24 prediction
% Wiki - 2000-dimensional - 792 - daily - 30 prediction

\paragraph{Training.} We adopt the encoder-decoder architecture as described in the main text, and change the decoder to either our generative model or one of the competitors. The encoder is an LSTM with 2 layers and a hidden dimension size 64. We train the model for 200 epochs, where each epoch takes 50 model updates. In case of our model we also alternate between two training directions at a predefined rate. The neural network parameterizing the backward direction has the same hyperparameters as in \cite{rasul2021autoregressive}, that is, it has 8 layers, 8 channels, and a hidden dimension of 64. The DDPM baseline uses a standard setting for the linear beta-scheduler: $\beta_{\text{min}} = 0.0001$, $\beta_{\text{max}} = 0.1$ and 150 steps.

% \begin{figure}
%     \centering
%     \includegraphics[width=\textwidth]{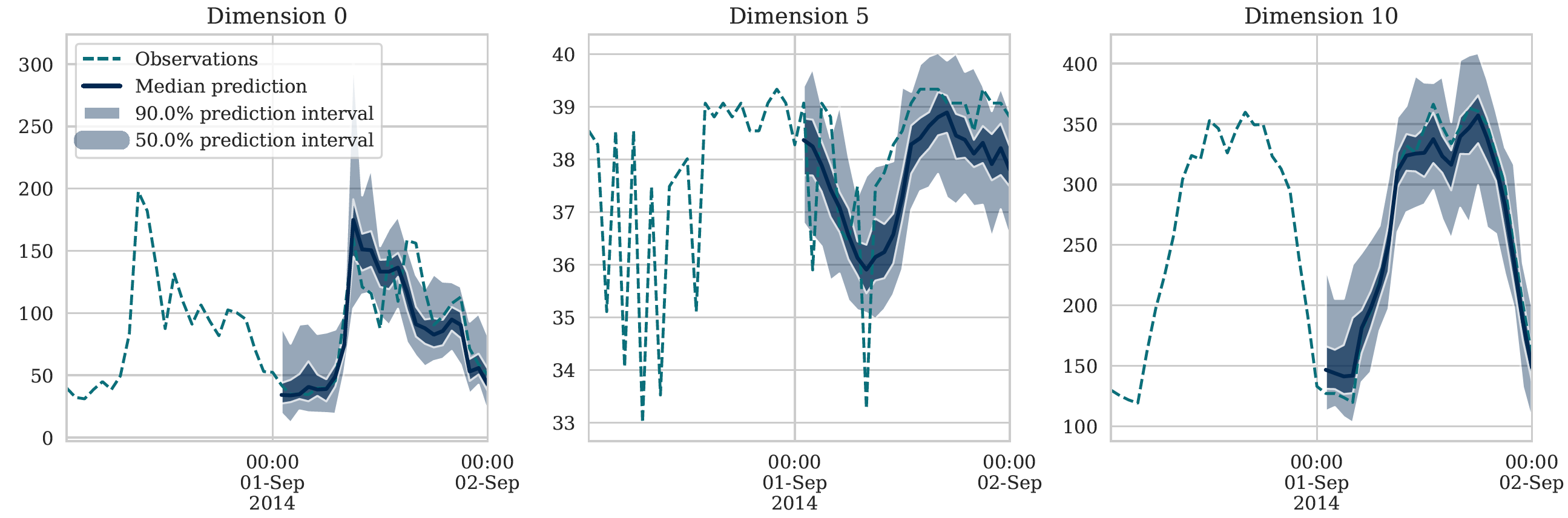}
%     \caption{Example forecasts for Electricity dataset using our model. Showing 3 out of 370 dimensions.}
%     \label{fig:electricity}
% \end{figure}

\begin{figure*}[!ht]
  \centering
    \subfigure{\includegraphics[scale=0.35]{figures/electricity_3.png}} \\
  \subfigure{\includegraphics[scale=0.53]{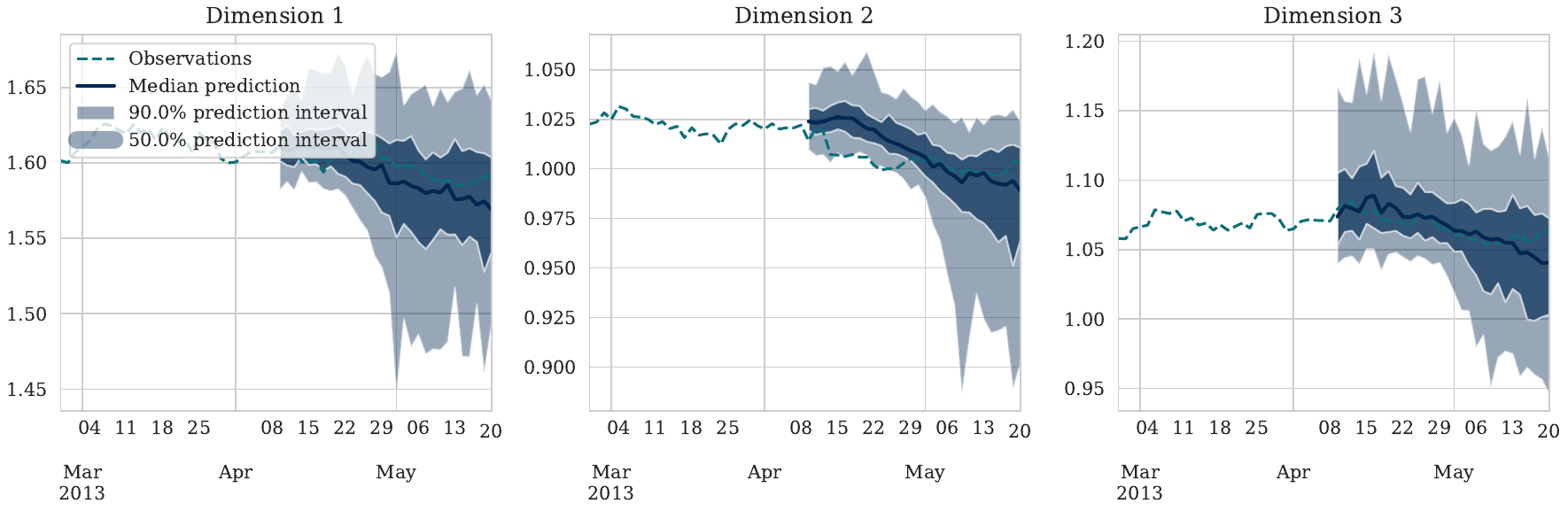}} \\
  \vspace{-0.18in}
    \quad \subfigure{\includegraphics[scale=0.54]{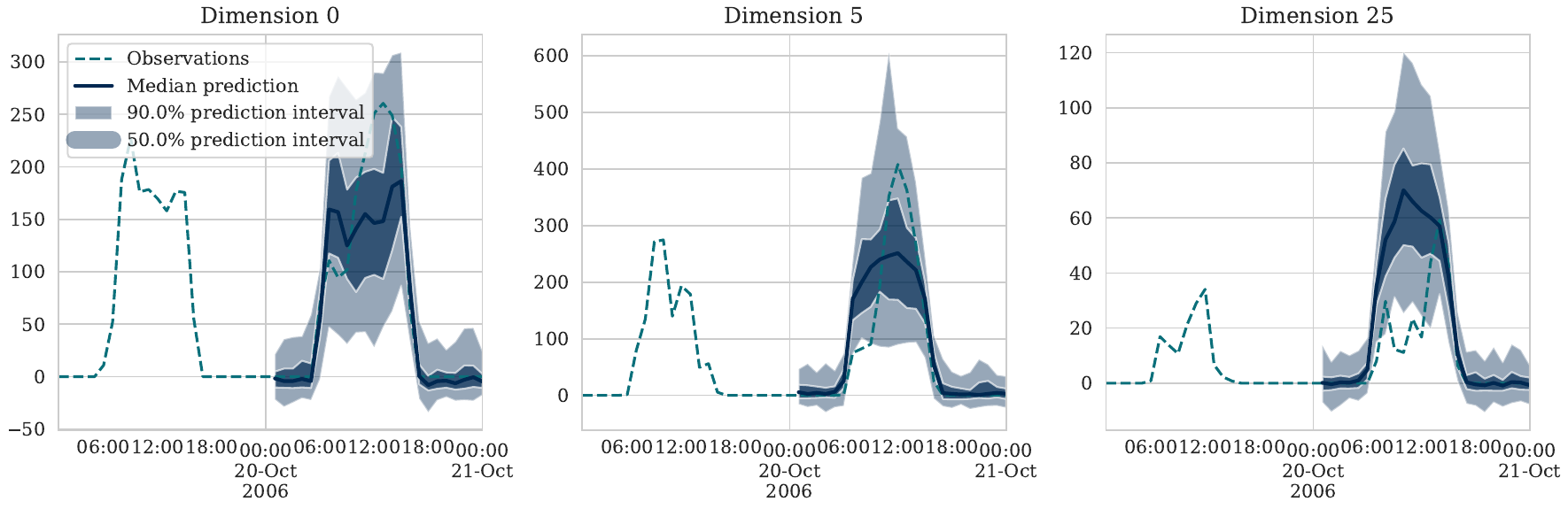}}
    \vspace{-0.15in}
  \caption{Example forecasts for Electricity (top), Exchange (middle), and Solar (bottom) datasets using our VSDM model. We show 3 out of 370, 8, and 137 dimensions, respectively.}\label{fig:electricity}
  \vspace{-1em}
\end{figure*}

% \begin{figure}
%     \centering
%     \includegraphics[width=\textwidth]{electricity.pdf}
%     \caption{Example forecasts for Electricity dataset using our model. Showing 6 out of 370 dimensions.}
%     \label{fig:electricity}
% \end{figure}

% \begin{figure}
%     \centering
%     \includegraphics[width=\textwidth]{figures/exchange_3.pdf}
%     \caption{Example forecasts for Exchange dataset using our model. Showing 3 out of 8 dimensions.}
%     \label{fig:electricity}
% \end{figure}

% \begin{figure}
%     \centering
%     \includegraphics[width=\textwidth]{figures/solar_3.pdf}
%     \caption{Example forecasts for Solar dataset using our model. Showing 3 out of 137 dimensions.}
%     \label{fig:electricity}
% \end{figure}

% \input{others/3.method_bak}

\end{document}